\documentclass{article}

\usepackage[table]{xcolor}
\usepackage{iclr2027_conference,times}
\usepackage[T1]{fontenc}
\usepackage{amsmath}
\usepackage{amssymb}
\usepackage{amsthm}
\usepackage{graphicx}
\usepackage{booktabs}
\usepackage{makecell}
\usepackage{longtable}
\usepackage{multirow}
\usepackage{pdflscape}
\usepackage{array}
\usepackage{placeins}
\usepackage{enumitem}
\usepackage[mode=buildmissing]{standalone}
\usepackage{wrapfig}
\usepackage{float}
\usepackage{mathtools}

\usepackage{algorithm}
\usepackage{algorithmic}

\usepackage{comment}
\usepackage{etoolbox}
\usepackage{hyperref}
\usepackage{url}
\usepackage{xspace}

\newtheorem{theorem}{Theorem}
\newtheorem{lemma}{Lemma}
\newtheorem{definition}{Definition}
\newtheorem{proposition}{Proposition}
\newtheorem{corollary}{Corollary}
\newtheorem{example}{Example}
\newtheorem{remark}{Remark}

\newcommand{\our}{Cluster-TNN\xspace}
\newcommand{\tb}{TopoBench\xspace}

\newcommand{\graph}{\mathcal{G}}
\newcommand{\vertices}{\mathcal{V}}
\newcommand{\edges}{\mathcal{E}}

\newcommand{\lifting}{\mathcal{L}}
\newcommand{\structlifting}{\mathcal{L}_{S}}
\newcommand{\featurelifting}{\mathcal{L}_{F}}

\title{Batch Before You Lift: Scalable Topological Deep Learning on Large Graphs}

\iclrfinalcopy
\renewcommand{\headrulewidth}{0pt}
\makeatletter
\renewcommand{\@maketitle}{\vbox{\hsize\textwidth
  {\centering\LARGE\sc\@title\par}
  \vskip 0.12in
  {\centering\normalsize\@author\par}
  \vskip 0.3in minus 0.1in}}
\makeatother

\author{%
\textbf{David Leko}\textsuperscript{1,}\thanks{Equal contribution. For individual author contributions, see the \hyperref[sec:author_contributions]{Author contributions} section. Correspondence: david.leko@epfl.ch and lbenic@irb.hr.}\quad
\textbf{Luka Beni\'c}\textsuperscript{2,3,*}\quad
\textbf{Guillermo Bern\'a{}rdez}\textsuperscript{4}\\
\textbf{Nina Miolane}\textsuperscript{4}\quad
\textbf{Olga Fink}\textsuperscript{1}\quad
\textbf{Lev Telyatnikov}\textsuperscript{5}\\
\textsuperscript{1}IMOS Lab, EPFL, Lausanne, Switzerland\\
\textsuperscript{2}Ru\dj{}er Bo\v{s}kovi\'c Institute, Zagreb, Croatia\\
\textsuperscript{3}University of Zagreb, Zagreb, Croatia\\
\textsuperscript{4}UC Santa Barbara, Santa Barbara, USA\\
\textsuperscript{5}Capital One\thanks{Research was completed prior to Capital One.}}
\hypersetup{pdfauthor={David Leko, Luka Beni\'c, Guillermo Bern\'a{}rdez, Nina Miolane, Olga Fink, Lev Telyatnikov},
  pdftitle={Batch Before You Lift: Scalable Topological Deep Learning on Large Graphs}}
\begin{document}

\maketitle

\begin{abstract}
Topological Deep Learning extends graph-based learning to higher-order domains, such as hypergraphs, cellular, and simplicial complexes. These domains are typically constructed from patterns in an input graph through a process of graph lifting. Full-domain training constructs and stores the complete lifted representation before model execution. On large and dense datasets like Reddit ($233$k nodes and $57.3$M edges), this global materialization becomes a severe computational bottleneck, often rendering training infeasible. To address this limitation, we introduce \our, a domain-agnostic framework that avoids this bottleneck by lifting locally instead. After partitioning the input graph during preprocessing, at runtime \our dynamically samples groups of node clusters, reconstructs their induced subgraphs to form mini-batches, and applies the chosen lifting within each mini-batch. Retaining all edges among sampled nodes preserves the connectivity needed to construct higher-order structures across clusters, producing topological mini-batches that existing Topological Neural Networks can process directly. Across $21$ matched comparisons with full-graph execution, \our reduces peak GPU memory in every configuration, by $83.2\%$ on average while maintaining competitive predictive performance. Notably, such a reduction enables, to our knowledge, the first training of multiple different higher-order Topological Neural Networks on large datasets such as Reddit and OGBN Products. These results establish \our as a general strategy for scaling Topological Deep Learning beyond the limitations of global domain construction.

\end{abstract}

\section{Introduction}
\label{sec:introduction}

Graphs are a dominant representation for relational learning: nodes encode entities and edges encode pairwise relations. Graph Neural Networks (GNNs) learn on such structures across diverse machine learning tasks~\citep{kipf2017gcn,Hamilton2017RepresentationLO,gilmer,bronstein2021geometric,Corso2024GNN}. However, many systems involve multi-way interactions, such as coauthorship groups, molecular rings, or many-body couplings, that pairwise graphs cannot represent explicitly. Topological Deep Learning (TDL) extends relational learning to higher-order domains, such as hypergraphs, cellular, and simplicial complexes, on which Topological Neural Networks (TNNs) operate~\citep{position,hajij2023tdl,papillon2023architectures}. By explicitly representing higher-order relations, these domains provide complementary inductive biases for systems whose structure extends beyond pairwise connectivity.


However, scalability in higher-order domains remains a central challenge for TDL (\citet{position}, Open Problem~\(6\)). For graph-supported data, a \emph{lifting} maps patterns in the input graph, such as neighbourhoods, cliques, or cycles, to explicit higher-order elements and their incidence relations. This lifting determines both which higher-order relations become available to the model and the size of the resulting computational domain. Because the number of lifted elements depends on graph connectivity rather than node count alone, the lifted domain can be orders of magnitude larger than the input graph. For example, the Reddit dataset has approximately $233$k nodes and $57.3$M edges, yet clique-complex lifting produces approximately $8.36$B simplicial $2$-cells (filled triangles). Enumerating these elements, their features, and their incidence relations may exceed available memory before training begins, making full-domain construction a scalability bottleneck.

\begin{figure*}[t!]
    \centering
    \includegraphics[width=\textwidth]{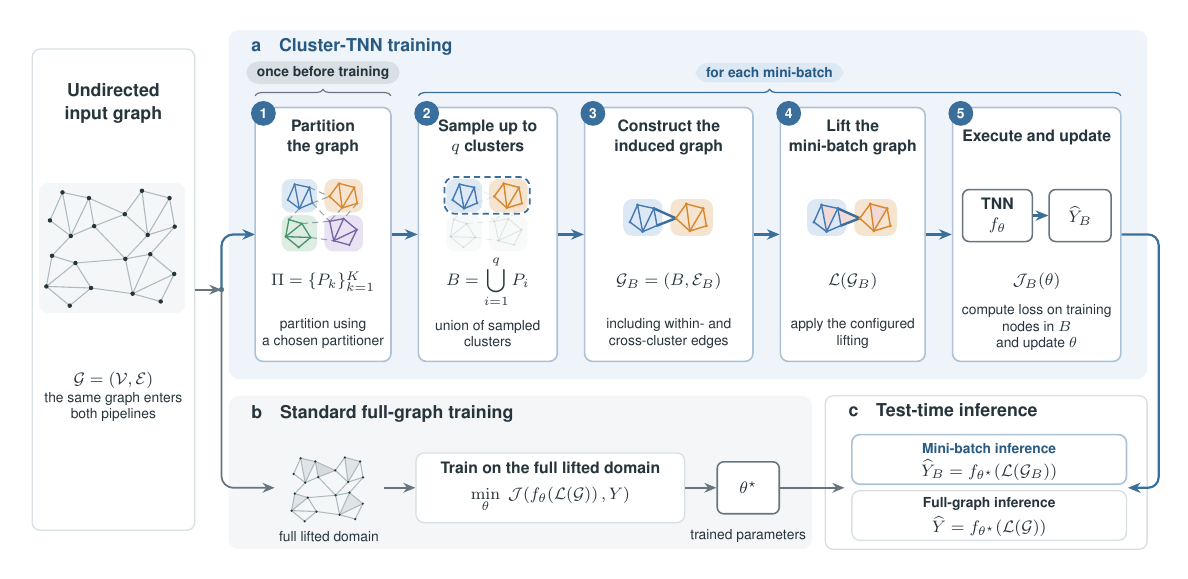}
    \caption{\textbf{Comparison of \our training (ours) and standard full-graph TDL training.}
    Both workflows start from the same featured undirected graph. \textbf{a} \our partitions the graph once per run and at each training step samples up to $q$ clusters, reconstructs their induced subgraph, and performs lifting and model execution only on the resulting mini-batch. \textbf{b} Standard full-graph training instead constructs and processes the complete lifted domain throughout training. \textbf{c} Trained models support both mini-batch and full-graph inference.}
    \label{fig:workflow}
\end{figure*}

Existing mini-batching strategies do not fully address this bottleneck because they typically intervene only after the higher-order domain has been constructed. Sampling or partitioning an already lifted hypergraph, simplicial complex, or cell complex can reduce the memory required by individual forward and backward passes, but it would still assume that the complete set of higher-order elements and their connectivity can be enumerated or materialized beforehand. Consequently, mini-batched model execution may remain infeasible when the preceding lifting operation exceeds the available computational resources. Moreover, batching a lifted representation generally requires domain-specific procedures, limiting reuse across liftings and TNN architectures.

On the other hand, performing mini-batching before lifting could address both limitations: it would restrict lifting to the data required for the current optimization step and perform batching on the graph shared by the different higher-order constructions. However, this change in execution order introduces a structural challenge: if graph partitions are lifted independently, edges crossing partition boundaries are omitted. Consequently, higher-order structures supported by nodes in different partitions, including neighborhood hyperedges, cliques, and cycles, may disappear from the training data. Independently lifting small graph fragments can therefore reduce computational cost at the expense of systematically altering the higher-order domain observed by the model. 

We address this gap by introducing \our, a \textit{batch-before-lift} strategy that avoids full-domain materialization while preserving all original edges among the sampled nodes. As illustrated in Figure~\ref{fig:workflow}, \our partitions the graph once into disjoint node clusters and samples a group of clusters at each optimization step. The proposed workflow then reconstructs the subgraph induced by the combined nodes of these clusters, and retains every original edge among its nodes, also including edges between clusters. Lastly, \our applies the requested lifting only to this reconstructed graph mini-batch and passes the resulting higher-order domain directly to a TNN.

Induced-subgraph reconstruction is central to the approach: unlike lifting each sampled cluster independently, it restores the edges connecting jointly sampled clusters. Under lifting compatibility (Sec.~\ref{subsec:theoretical_guarantees}), a higher-order structure can therefore be recovered whenever all of its supporting nodes occur in the same batch. By recombining clusters across epochs, \our provides repeated opportunities to observe higher-order structures spanning different clusters. 

\textbf{Contributions.} The main features and contributions of our work include:
\begin{itemize}[leftmargin=*,itemsep=2pt,parsep=0pt,topsep=0pt]
    \item \textbf{Batch-before-lift strategy.}
    We introduce \our, a general strategy for scaling TDL beyond the limits imposed by global topology materialization. The same batching mechanism supports different graph-to-higher-order liftings and existing TNN architectures, which we evaluate on Reddit and OGBN Products.

    \item \textbf{Cross-cluster structural recovery of \our.}
    We characterize higher-order structural recovery under repeated cluster recombination (Sec.~\ref{subsec:theoretical_guarantees}) and report empirical measurements of observable hyperedges, cellular 2-cells, and simplicial 2-cells (Figure~\ref{fig:structural_observability} and Appendix~\ref{subsec:exp_cora}).

    \item \textbf{Scalability-computation trade-off via extensive evaluation.}
    We empirically quantify the trade-off introduced by batch-before-lift in terms of full-graph preprocessing feasibility and cost, peak CPU and GPU memory, training time and predictive performance. In particular, across $21$ matched model-dataset comparisons, \our reduces peak GPU memory in every setting by $83.2\%$ on average, while maintaining competitive predictive performance.

    \item \textbf{Reusable open-source integration.}
    We implement \our in \tb{}~\citep{TopoBench}, reusing existing liftings and models across different higher-order domains without requiring architecture-specific modifications. Code is available in our \href{https://anonymous.4open.science/r/cluster-tnn-233C/}{anonymous repository}.
\end{itemize}
\FloatBarrier

\section{Related work}
\label{sec:related_work}


\textbf{Higher-order domains, liftings, and TDL software.} Evaluating TNNs on graph datasets requires constructing the higher-order domain on which the model operates, making the lifting itself part of the model evaluation. Hypergraph benchmarks show that lifting choice affects predictive performance, with no single lifting dominating across the evaluated settings~\citep{montagna2026lift}. TopoX~\citep{hajij2024topox} provides topological data structures and neural operators, while \tb~\citep{TopoBench} integrates datasets, liftings, and models within a common benchmarking framework. This standardization supports comparisons across hypergraph, cellular, and simplicial architectures. However, \citet{TopoBench} report out-of-memory errors during clique and cycle lifting on graphs with fewer than $25\,000$ nodes. Thus, even at this scale, full-domain construction can exhaust available memory before model training begins.

\textbf{Scalability methods for higher-order learning.} Existing approaches reduce model-execution costs. FastHGNN~\citep{fasthgnn} and \citet{telyatnikov2025hypergraph} subsample preconstructed hypergraphs, while Topo-MLP~\citep{ramamurthy2023topomlp} accelerates simplicial inference by removing explicit message passing. TopoMamba~\citep{montagna2025topological} introduces architecture-specific neighbourhood batching using node-to-cell incidence relations. SaNN~\citep{gurugubelli2024sann} precomputes simplicial feature aggregations. HOPSE~\citep{bernardez2025hopse} precomputes positional and structural encodings through Hasse graph decompositions for message-passing-free learning. These methods operate on constructed higher-order domains, leaving a complementary objective: limiting domain construction across lifting families while retaining existing TNNs.

\textbf{Graph mini-batching as the methodological foundation.} GNN methods restrict computation through neighbourhood sampling with GraphSAGE~\citep{reddit_paper}, layer-wise importance sampling with FastGCN~\citep{chen2018fastgcn}, or subgraph sampling with GraphSAINT~\citep{zeng2020graphsaint}. Cluster-GCN~\citep{cluster_gcn} partitions the graph's nodes into disjoint clusters and forms mini-batches from one or more clusters, retaining all edges among the selected nodes, including those connecting different clusters. For graph-supported TDL, preserving this connectivity has an additional significance: missing edges can prevent higher-order structures from being constructed, even when all their supporting nodes are present.


\section{Methods}
\label{sec:methods}


\our moves mini-batching before higher-order domain construction. At each training step, we group up to \(q\) graph clusters, reconstruct their induced subgraph, and apply the selected lifting only to that subgraph. Retaining all original edges among the sampled nodes provides the connectivity needed to construct higher-order structures spanning jointly sampled clusters. 
A TNN for the selected higher-order domain then processes the lifted mini-batch. This execution order makes mini-batching control both domain construction and model execution across higher-order domains, without materializing the full lifted domain.
\subsection{Mini-batch construction and lifting}
\label{subsec:clustered_mini_batch_lifting}
Let $\graph=(\vertices,\edges,F_{\vertices},F_{\edges})$ denote a graph with node set $\vertices$, edge set $\edges$, and corresponding feature maps $F_{\vertices}$ and $F_{\edges}$. Following the multi-cluster batching scheme of Cluster-GCN~\citep{cluster_gcn}, we first partition $\vertices$ into $K$ disjoint clusters $\Pi=\{P_1,\ldots,P_K\}$, with $\vertices=\bigcup_{k=1}^{K}P_k$. We compute this partition using METIS~\citep{metis} or random partitioning and keep it fixed throughout the run (Figure \ref{fig:workflow}a, step~1). At the beginning of each training epoch, \our randomly groups clusters containing at least one training node, referred to as \textit{active} clusters, into mini-batches of at most $q$ clusters
(see step~2). Mini-batch validation and testing similarly use only clusters containing validation and test nodes, respectively. For a group $Z$, the mini-batch node set is $B=\bigcup_{k\in Z}P_k$, from which we extract the induced subgraph $\graph_B=(B,\edges_B)$ with $\edges_B=\{(u,v)\in\edges:u,v\in B\}$. Node features, labels, split masks, and edge features are restricted accordingly. Thus, $\graph_B$ preserves all edges among co-sampled clusters, including inter-cluster edges (step~3).
The selected lifting $\lifting$ maps $\graph_B$ to higher-order domains required by the TNN, constructing higher-order elements and their features from the connectivity of $\graph_B$ (step~4, Appendix~\ref{app:experimental_liftings}). The TNN then processes $\lifting(\graph_B)$ through the same interface used for full-domain execution (step~5), enabling a unified workflow across hypergraph, cellular, and simplicial domains. In contrast, standard full-graph execution processes the entire lifted object $\lifting(\graph)$ (Figure~\ref{fig:workflow}b).

\begin{wrapfigure}[14]{l}{0.4\textwidth}
\vspace{-1.25\baselineskip}
\begin{minipage}{\linewidth}
\footnotesize
\refstepcounter{algorithm}
\label{alg_pseudo}
\addcontentsline{loa}{algorithm}{\protect\numberline{\thealgorithm} \our}
\hrule
\vspace{2pt}
{\raggedright\noindent\textbf{Algorithm \thealgorithm}\quad \our \par}
\vspace{2pt}
\hrule
\vspace{5pt}
\algsetup{indent=1.3em}
\begin{algorithmic}[1]
\REQUIRE Featured graph $\graph$, node labels $Y$, lifting $\lifting$,
         number of clusters $K$, maximum clusters per mini-batch $q$
\ENSURE Trained parameters $\theta$

\STATE Partition $\vertices$ into $K$ clusters

\FOR{each training step}
    \STATE Take at most $q$ active clusters
    \STATE Form mini-batch node set $B$
    \STATE Construct induced subgraph $\graph_B$
    \STATE Lift the mini-batch: $\lifting(\graph_B)$
    \STATE Compute $\mathcal{J}_B(\theta)$
    \STATE Take an optimizer step on $\mathcal{J}_B(\theta)$
\ENDFOR
\end{algorithmic}
\vspace{5pt}
\hrule
\end{minipage}
\end{wrapfigure}

Algorithm~\ref{alg_pseudo} summarizes the \our training protocol. The training objective and inference protocols are detailed in Sec.~\ref{subsec:training_inference}. The workflow applies when the selected graph-to-domain lifting can be evaluated on an induced graph and the downstream TNN can process the variable-size lifted domains through the same interface used for the full-domain execution. For node classification, the model must also produce predictions for the target nodes contained in each mini-batch. Architectures that require operators or state defined on the complete lifted domain require additional adaptation, whereas datasets provided natively as higher-order domains require batching directly in that domain rather than the graph-first construction considered here (see Appendix~\ref{subsec:tdl_scalability_solution_paths}).

\WFclear
\subsection{Training and inference}
\label{subsec:training_inference}
Let $Y$ denote the labels available for the training nodes $\vertices_{\mathrm{train}}\subseteq\vertices$, and let $f_\theta$ denote the model with parameters $\theta$. For each mini-batch, we evaluate the node-classification loss on training nodes contained in $B$, given by
\[
    \mathcal{J}_B(\theta)=\frac{1}{|B\cap\vertices_{\mathrm{train}}|}\!\sum_{v\in B\cap\vertices_{\mathrm{train}}}\!\!\!\!\!\!\ell\left(f_\theta\left(\lifting(\graph_B)\right)_v,Y_v\right),
\]
where $\ell(\cdot,\cdot)$ denotes the cross-entropy loss between the model predictions and the ground truth. Since the loader uses only active train clusters, each mini-batch contains supervised nodes. Each lifted mini-batch produces one optimization step, regardless of the number of supervised nodes. 

At inference, the model can operate on one cluster grouping, average predictions across several independent groupings, or, when memory permits, materialize the full lifted graph. We refer to these modes as \textit{single-pass mini-batch}, \textit{ensemble mini-batch}, and \textit{full-graph} inference, respectively. The primary partitioned results use ensemble mini-batch inference with ten independent cluster recombinations, whereas the inference ablation compares all three modes (Sec.~\ref{subsec:ablations}).

\WFclear
\subsection{Theoretical guarantees}
\label{subsec:theoretical_guarantees}

Each mini-batch is lifted from the subgraph induced by its sampled clusters (Sec.~\ref{subsec:clustered_mini_batch_lifting}). A structure from the full lifted graph cannot appear unless all its supporting nodes are included. This section establishes which higher-order structures \our can recover without full-domain lifting and quantifies how repeated recombination of all partition clusters increases their coverage.

\textbf{A structure can be recovered only if its supporting clusters fit within one mini-batch.}
Let $S^\ast$ denote the finite set of structures produced by lifting the full graph. For a structure $s\in S^\ast$, its \textit{cluster span} $c_s$ counts the clusters containing its supporting nodes. For example, a triangle whose vertices lie in three different clusters requires all three clusters to appear in the same mini-batch. We assume \textit{lifting compatibility}: a full-graph structure appears in a lifted mini-batch exactly when that mini-batch contains all its supporting nodes (Definition~\ref{def:lifting_comp}). Under this assumption, structures with $c_s\leq q$ can be recovered, whereas structures with $c_s>q$ cannot appear in a mini-batch containing $q$ clusters. We call the former structures \textit{$q$-observable} and define $S_q^\ast=\{s\in S^\ast:c_s\leq q\}$.

\begin{wrapfigure}{r}{0.35\columnwidth}
    \vspace{-5pt}
    \centering
    \includegraphics[width=\linewidth]{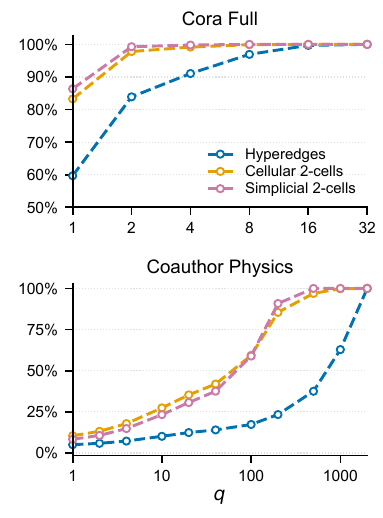}
    \caption{Structural support recovery as a function of $q$. Cora Full uses $K=32$ and Coauthor Physics $K=2\,000$. Curves show means over ten independent reshuffling runs. See Appendix~\ref{subsec:structural_observability_datasets} for more details.}
    \label{fig:structural_observability}
    \vspace{-20pt}
\end{wrapfigure}

\textbf{Cumulative recovery under cluster recombination.} A structure need not appear in every epoch to be observed over the course of training. Independent cluster recombinations provide repeated opportunities for its supporting clusters to occur together. The following result quantifies both its finite-epoch recovery probability and the limiting recovery of the observable structure set. Consider a finite graph with a fixed partition into $K$ clusters. At each epoch, independently draw a uniform random permutation of all $K$ clusters and divide it into groups of $q$ clusters. Assume $q\mid K$ and lifting compatibility. For a full-graph structure $s$ spanning $c_s$ clusters, its per-epoch recovery probability is $\left.p_s=\binom{q-1}{c_s-1}\middle/\binom{K-1}{c_s-1}\right.$ when $c_s\leq q$, and $p_s=0$ otherwise. After $T$ epochs, the probability that $s$ has been observed at least once is $1-(1-p_s)^T$.
Consequently, the probability that every structure in $S_q^\ast$ has been observed approaches one as $T\to\infty$. If $q\geq\max_{s\in S^\ast}c_s$, this gives cumulative recovery of the global structure set $S^\ast$.

When split-specific batching excludes inactive clusters, structures involving nodes from those clusters cannot be recovered (Appendix~\ref{sec:struct_recon}). The per-epoch probability and cumulative-recovery proofs are given in Appendix~\ref{subsec:per_epoch_recovery_prob}-\ref{subsec:cum_prob_conv}. Recovery is cumulative: different structures may be observed across mini-batches and epochs, without materializing the complete lifted domain. For models that exchange messages between incident cells, recovering the relevant cells and incidences makes the direct messages available (Appendix~\ref{subsec:direct_incidence_message_availability}-\ref{subsec:layerwise_incidence_aggregation}).

\textbf{Larger mini-batches expand what can be recovered, while additional epochs provide repeated opportunities.}
A structure spanning more than $q$ clusters cannot be recovered regardless of the number of epochs. Increasing $q$ can make more structures observable and increase their recovery probabilities. Figure~\ref{fig:structural_observability} reports the fraction of reference structures whose supporting nodes are jointly sampled within $200$ epochs. For cellular references, this does not imply identical local cycle-basis recovery. At $q=4$, this fraction exceeds $99\%$ for both cellular and simplicial 2-cells on Cora Full. On Coauthor Physics at $q=100$, the corresponding fractions are only $59.3\%$ for cellular and $58.9\%$ for simplicial supports, although each individual reference 2-cell could fit within a mini-batch of this size. Fitting within a mini-batch therefore does not ensure that the required clusters are sampled together. Complete neighborhood hyperedges reach only $62.8\%$ recovery even at $q=1\,000$. Their slower recovery under these partitions is consistent with larger cluster spans: triangles and retained cycles involve three and at most nine nodes, respectively, whereas a neighborhood hyperedge includes its center and all its neighbors. These larger supports can span more clusters, reducing their joint-sampling probability. See Appendix~\ref{subsec:repeated_structural_exposure} and~\ref{subsec:repeated_exposure_coverage} for more details.

\textbf{Repeated recombination motivates ensemble inference.}
A single cluster grouping can omit higher-order structures that involve a target node, even when those structures are observable. Independent recombinations provide further opportunities to include their supporting nodes. This motivates averaging predictions from the same trained model over several independently sampled cluster groupings, without materializing the full lifted domain (Sec.~\ref{subsec:training_inference}). Ensemble inference achieves a higher mean test metric than single-pass inference in all $21$ reported model-dataset comparisons (Table~\ref{tab:partitioning_ablations}). These results support using additional mini-batch inference passes to improve predictive performance while retaining the memory benefits of processing one lifted mini-batch at a time.

\section{Evaluation}
\label{sec:evaluation}
We evaluate whether batch-before-lift enables larger-scale TNN training and examine its computational costs, predictive performance, and structural recovery through three questions:
\makeatletter
\begin{itemize}[leftmargin=0pt,label={},itemsep=2pt,parsep=0pt,topsep=0pt]

\item \textbf{Q1: Scalability and computational trade-offs.}
Can \our enable training of topological models when full-domain construction or
model execution exceeds available resources, and how does it
affect preprocessing time, peak CPU/GPU memory, training
time, and end-to-end runtime?
\def\@currentlabel{Q1}\label{rq:scalability}

\item \textbf{Q2: Predictive performance.}
How does \our compare with full-graph execution in predictive
performance on the matched benchmarks, and how do higher-order
models trained with \our compare with the GCN baseline on
Reddit and OGBN Products?

\def\@currentlabel{Q2}\label{rq:predictive}

\item \textbf{Q3: Structural recovery and design choices.}
How do mini-batch size and repeated cluster recombination
influence structural recovery, and how do partitioning,
cross-cluster edge retention, and inference strategy affect
predictive performance?

\def\@currentlabel{Q3}\label{rq:structure}

\end{itemize}
\makeatother

\subsection{Experimental setup}
\label{subsec:exp_setup}

\textbf{Datasets.} We evaluate \our on six node-classification benchmarks spanning diverse application domains, and covering three experimental regimes: 
\begin{itemize}[leftmargin=*,itemsep=2pt,parsep=0pt,topsep=0pt]

\item \emph{Controlled full-domain} regime includes Questions~\citep{platonov2023critical} ($48\,921$ nodes), Amazon Ratings~\citep{platonov2023critical} ($24\,492$), and Cora Full~\citep{bojchevski2018deep} ($19\,793$), characterized by low mean degree ($6.28$-$7.60$) and limited lifting expansion ($\leq2.60$ cellular and $\leq4.52$ simplicial 2-cells/node).

\item \emph{Intermediate-density} regime is represented by Coauthor Physics~\citep{pitfalls} ($34{,}493$ nodes, mean degree $14.38$, $13.58$ simplicial 2-cells/node).

\item \emph{Large-scale} regime comprises Reddit~\citep{reddit_paper} ($232\,965$ nodes) and OGBN Products~\citep{hu2020} ($2\,449\,029$ nodes), both containing approximately $60$M edges. These datasets expose complementary scaling regimes: Reddit combines high connectivity (mean degree $491.99$) with substantial structural expansion ($174.20$ cellular and $35\,886.67$ simplicial 2-cells/node), whereas OGBN Products primarily stresses node capacity at moderate expansion (mean degree $50.52$, $13.98$ cellular and $253.60$ simplicial 2-cells/node).

\end{itemize}


\textbf{Models.} We compare a graph-based GCN model~\citep{kipf2017gcn} with representative higher-order models from three domains: hypergraph models EDHNN~\citep{wang2023equivariant_hypergraph_diffusion} and UniGNN~\citep{huang2021unignn}, cellular models CWN~\citep{bodnar2021cwn} and TopoTune~\citep{papillon2025topotune}, and simplicial models SCN~\citep{ebli2020simplicial_neural_networks} and SCCNN~\citep{yang2023convolutional}. Following TopoBench's projected-sum feature lifting~\citep{TopoBench}, we retain input node features. We sum member-node features for hyperedges and successively sum incident lower-rank features for cellular and simplicial elements (Appendix~\ref{app:experimental_liftings}).

\textbf{Setup.} For all \our experiments, we use the same mini-batch lifting workflow across models, removing differences from model-specific batching implementations. We tune the full-graph and partitioned training settings separately and use the same generated random node splits across models and settings\footnote{For OGBN Products, we use generated $70\%/15\%/15\%$ random splits rather than the official sales-ranked split of approximately $8\%/2\%/90\%$~\citep{hu2020}.} 
and report test performance as mean $\pm$ standard deviation. The controlled full-domain regime enables direct comparison between full-graph and \our training. For the intermediate-density and large-scale regimes, full-domain execution of at least one higher-order TNN exceeds the available resources. Therefore, we evaluate those regimes using \our training. Appendix~\ref{sec:tdl_scalability_appendix} provides a detailed analysis of the resulting resource requirements, including contributions from structural expansion, feature storage, and model-required operators.

\subsection{Computational efficiency}
\label{subsec:comp_efficiency}

\begin{figure}[!t]
    \centering
    \includegraphics[width=\textwidth]{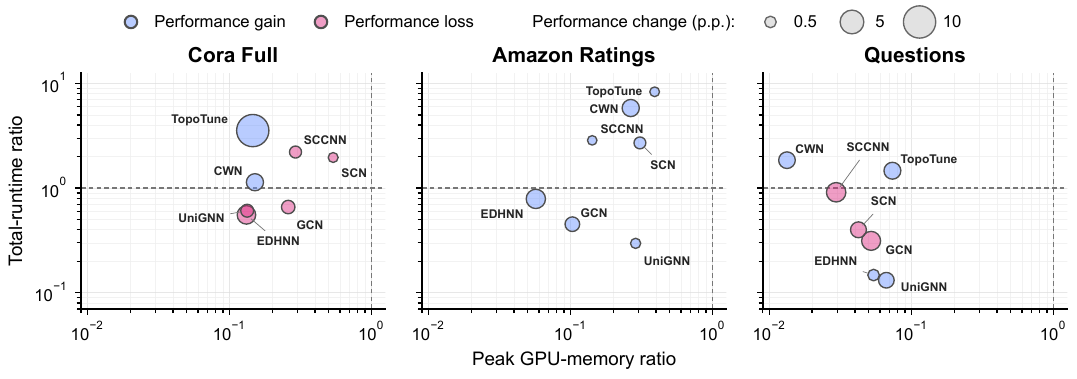}
    \caption{Cluster-TNN reduces peak GPU memory across all model-dataset pairs. Each circle represents one model, positioned by the mean ratios of \our to full-graph peak GPU memory and total runtime across five seeds. Moving left indicates memory savings, whereas moving down indicates shorter runtime. The lower-left region marks improvements in both measures. Marker size represents the absolute test-performance difference in percentage points, and color distinguishes performance gains from losses. Dashed lines mark parity with full-graph execution.}
    \label{fig:cwn_scalability_memory_runtime}
\end{figure}

\begin{wrapfigure}[24]{r}{0.325\columnwidth}
    \vspace{-2pt}
    \centering
    \includegraphics[width=\linewidth]{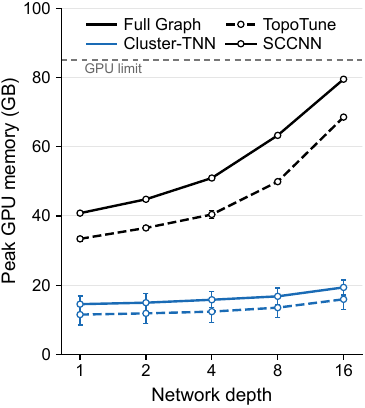}
    \caption{GPU memory scaling on Cora Full with 512 hidden channels for $K=64$ and $q=8$. Peak GPU memory is reported as mean $\pm$ standard deviation across five seeds. Network depth counts TopoTune blocks and SCCNN layers.}
    \label{fig:memory_with_depth}
    \vspace{-20pt}
\end{wrapfigure}

To address \ref{rq:scalability}, we primarily evaluate the memory savings enabled by \our execution, while also reporting its effects on preprocessing cost and total runtime. In the controlled full-domain regime \our{} reduces run-wide peak GPU memory in all $21$ comparisons with full-graph execution, with a mean reduction of $83.2\%$ and a median reduction of $86.7\%$. Comparison-level reductions range from $46.4\%$ to $98.7\%$, based on the reported means across five seeds (Figure~\ref{fig:cwn_scalability_memory_runtime} and Table~\ref{tab:tdl_end_to_end_memory}). Furthermore, Figure~\ref{fig:appendix_memory_runtime_other_models} separates CPU preprocessing peaks from GPU training and evaluation peaks. For CWN on Cora Full, for example, mean peak GPU memory decreases from $24.1$ to $3.7$~GB (Table~\ref{tab:tdl_end_to_end_memory}). A separate experiment with model hyperparameters fixed shows that processing a smaller fraction of clusters per mini-batch, $q/K$, reduces peak GPU memory during training and validation (Figure~\ref{fig:qk-gpu-memory}, Appendix~\ref{sec:additional_results}). Batching can therefore control memory requirements without reducing model size.

Memory savings are particularly important for computationally expensive topological configurations, where full-domain preprocessing can itself exceed practical memory budgets. On Reddit, storing full-graph simplicial features alone is estimated to require $20.3$~TB (Table~\ref{tab:tdl_static_feature_storage}). Full-graph preprocessing for TopoTune takes $6\,965.2$~s and reaches $480.1$~GB peak CPU memory (Table~\ref{tab:tdl_preprocessing_cost}). With $K=7\,000$ partitions selected by hyperparameter optimization (Table~\ref{tab:optuna_hparams}), \our preprocessing takes $500.4$~s (Table~\ref{tab:tdl_end_to_end_runtime}), and run-wide peak CPU memory is $11.81$~GB (Table~\ref{tab:tdl_partitioned_memory}).

At a fixed width of $512$ hidden channels, peak GPU memory grows substantially more slowly with network depth under \our for both TopoTune and SCCNN (Figure~\ref{fig:memory_with_depth}). At depth $16$, their mean peaks remain below $20$~GB, compared with approximately $69$--$80$~GB for full-graph execution, showing that the memory advantage extends to deeper architectures.

Across the $21$ matched comparisons, preprocessing time decreases in $16$, increases in four, and remains unchanged in one. Among $16$ reductions, the mean is $53.9\%$ and median $53.1\%$, including the graph-partitioning cost (Table~\ref{tab:tdl_end_to_end_runtime}).

For EDHNN on Questions, mean training time decreases from $203.1$ to $19.5$~s and total runtime from $208.8$ to $30.4$~s, an $85.4\%$ reduction, while mean test AUROC increases from $75.73$ to $76.23$ (Tables~\ref{tab:accuracy_main} and~\ref{tab:tdl_end_to_end_runtime}).
Training time, including periodic validation, decreases in $12$ of $21$ matched comparisons. Total runtime decreases in $11$, with a median reduction of $55.6\%$, and increases in $10$, with a median increase of $136.9\%$ (Table~\ref{tab:tdl_end_to_end_runtime}). For CWN on Cora Full, training time falls by $14.7\%$, but final evaluation rises from $14.6$ to $272.4$~s, increasing total runtime by $12.4\%$. Final evaluation includes ensemble inference over ten independent cluster recombinations, together with batch construction and lifting. Its additional cost can therefore outweigh training-time savings. Appendix~\ref{subsec:tdl_scalability_runtime} defines each runtime component and how percentage changes are aggregated across matched configurations.

\subsection{Feasibility and predictive performance}
\label{subsec:feasibility_predictive_perf}
Table~\ref{tab:accuracy_main} addresses the feasibility aspect of \ref{rq:scalability} and the predictive-performance comparison in \ref{rq:predictive}. It compares \our and full-graph training to assess whether \our preserves predictive performance while enabling larger-scale training where full-graph execution is not possible.

Across $21$ model-dataset pairs for which results are available under both training settings, \our training achieves a higher reported mean of the corresponding test metric in $13$ comparisons, while full-graph training performs better in the remaining eight. On Cora Full, partitioned training yields a slightly better result for CWN and a notably better result for TopoTune compared with full-graph training. On Amazon Ratings, partitioned training achieves better results for all seven evaluated models, with particularly pronounced differences for EDHNN and CWN. Similarly, on Questions, partitioned training yields better performance for four of the evaluated models. Across all three datasets, a higher-order TNN achieves the highest mean test metric under both full-graph and partitioned training. These results highlight two observations. First, they demonstrate the potential benefits of higher-order models over the GCN baseline on these datasets. Second, \our achieves higher mean test metrics than full-graph training across multiple model-dataset pairs.

\begin{table}[!t]
\centering
\caption{Test performance under full-graph and \our settings. We report AUROC for Questions and accuracy for all other datasets. \our uses ten-recombination ensemble mini-batch inference, whereas full-graph runs use full-graph inference. Values are mean $\pm$ sample standard deviation across five seeds. Boldface indicates best-performing model for each setting.}
\label{tab:accuracy_main}
\small
\setlength{\tabcolsep}{1.5pt}
\renewcommand{\arraystretch}{1.1}
\setlength{\extrarowheight}{1.1pt}
\resizebox{\textwidth}{!}{%
\begin{tabular}{lccccccccc}
\toprule
& \multicolumn{2}{c}{\makecell{\textbf{Cora} \\ \textbf{Full}}}
& \multicolumn{2}{c}{\makecell{\textbf{Amazon} \\ \textbf{Ratings}}}
& \multicolumn{2}{c}{\textbf{Questions}}
& \makecell{\textbf{Coauthor} \\ \textbf{Physics}}
& \textbf{Reddit}
& \makecell{\textbf{OGBN} \\ \textbf{Products}}\\
\cmidrule(lr){2-3}
\cmidrule(lr){4-5}
\cmidrule(lr){6-7}
\cmidrule(lr){8-8}
\cmidrule(lr){9-9}
\cmidrule(lr){10-10}
\textbf{Model}
& \textbf{Full-graph}
& \cellcolor{gray!14}\textbf{\our}
& \textbf{Full-graph}
& \cellcolor{gray!14}\textbf{\our}
& \textbf{Full-graph}
& \cellcolor{gray!14}\textbf{\our}
& \cellcolor{gray!14}\textbf{\our}
& \cellcolor{gray!14}\textbf{\our}
& \cellcolor{gray!14}\textbf{\our} \\
\midrule

GCN
& $70.18 \pm 0.42$
& \cellcolor{gray!14}$69.19 \pm 0.66$
& $48.29 \pm 1.31$
& \cellcolor{gray!14}$49.62 \pm 0.96$
& $76.69 \pm 1.18$
& \cellcolor{gray!14}$73.83 \pm 2.01$
& \cellcolor{gray!14}$\mathbf{96.68 \pm 0.08}$
& \cellcolor{gray!14}$\mathbf{92.62 \pm 0.10}$
& \cellcolor{gray!14}$85.81 \pm 0.04$ \\

EDHNN
& $69.74 \pm 0.89$
& \cellcolor{gray!14}$67.06 \pm 1.19$
& $48.41 \pm 1.18$
& \cellcolor{gray!14}$\mathbf{51.41 \pm 0.72}$
& $75.73 \pm 0.92$
& \cellcolor{gray!14}$\mathbf{76.23 \pm 1.05}$
& \cellcolor{gray!14}$96.31 \pm 0.32$
& \cellcolor{gray!14}$92.17 \pm 0.73$
& \cellcolor{gray!14}$86.51 \pm 0.25$ \\

UniGNN
& $68.44 \pm 0.79$
& \cellcolor{gray!14}$67.61 \pm 0.56$
& $47.05 \pm 1.76$
& \cellcolor{gray!14}$47.24 \pm 0.70$
& $69.44 \pm 1.68$
& \cellcolor{gray!14}$71.09 \pm 0.98$
& \cellcolor{gray!14}$94.97 \pm 0.37$
& \cellcolor{gray!14}$91.28 \pm 0.16$
& \cellcolor{gray!14}$85.87 \pm 0.07$ \\

CWN
& $60.50 \pm 0.60$
& \cellcolor{gray!14}$62.66 \pm 1.11$
& $44.07 \pm 0.64$
& \cellcolor{gray!14}$46.25 \pm 0.67$
& $68.21 \pm 3.48$
& \cellcolor{gray!14}$70.20 \pm 0.88$
& \cellcolor{gray!14}$95.80 \pm 0.24$
& \cellcolor{gray!14}$91.29 \pm 0.18$
& \cellcolor{gray!14}$83.93 \pm 0.06$ \\

TopoTune
& $46.40 \pm 0.47$
& \cellcolor{gray!14}$56.45 \pm 1.42$
& $43.18 \pm 1.03$
& \cellcolor{gray!14}$43.27 \pm 0.68$
& $71.53 \pm 1.73$
& \cellcolor{gray!14}$73.61 \pm 1.88$
& \cellcolor{gray!14}$95.22 \pm 0.20$
& \cellcolor{gray!14}$89.55 \pm 0.09$
& \cellcolor{gray!14}$83.79 \pm 0.55$ \\

SCN
& $70.45 \pm 0.35$
& \cellcolor{gray!14}$\mathbf{70.32 \pm 0.46}$
& $50.08 \pm 0.36$
& \cellcolor{gray!14}$50.65 \pm 0.64$
& $74.77 \pm 1.34$
& \cellcolor{gray!14}$73.04 \pm 1.84$
& \cellcolor{gray!14}$96.22 \pm 0.22$
& \cellcolor{gray!14}$91.80 \pm 0.13$
& \cellcolor{gray!14}$87.03 \pm 0.05$ \\

SCCNN
& $\mathbf{70.81 \pm 0.58}$
& \cellcolor{gray!14}$70.20 \pm 0.29$
& $\mathbf{51.20 \pm 1.10}$
& \cellcolor{gray!14}$51.24 \pm 0.79$
& $\mathbf{77.14 \pm 0.99}$
& \cellcolor{gray!14}$74.10 \pm 1.85$
& \cellcolor{gray!14}$96.00 \pm 0.25$
& \cellcolor{gray!14}$92.45 \pm 0.25$
& \cellcolor{gray!14}$\mathbf{88.63 \pm 0.05}$ \\

\bottomrule
\end{tabular}}
\end{table}

On Coauthor Physics, representing the intermediate-density regime, the six higher-order models achieve mean accuracies of $94.97$--$96.31\%$, compared with $96.68\%$ for GCN.
Experiments in the large-scale regime further demonstrate the practical advantages of \our. The workflow enables, to our knowledge, the first memory-efficient training of multiple higher-order TNN models on datasets at this scale. On Reddit, we successfully train higher-order TNNs across different domains while achieving accuracies comparable to the GCN baseline. 
On OGBN Products, SCCNN reaches $88.63\%$ accuracy, compared with $85.81\%$ for GCN. Both hypergraph models and SCN also achieve higher means than GCN, whereas the cellular models do not. These differences motivate comparing higher-order representations across datasets, a task enabled at scale by the common \our workflow. Tables~\ref{tab:accuracy_main}, \ref{tab:tdl_end_to_end_runtime}, and~\ref{tab:tdl_partitioned_memory} report predictive results and resource requirements.


\subsection{Ablation studies}
\label{subsec:ablations}
For \ref{rq:structure}, the structural analysis distinguishes structures that can fit in a mini-batch from those whose supporting nodes are jointly sampled within a finite epoch budget (Figure~\ref{fig:structural_observability} and Appendix~\ref{subsec:structural_observability_datasets}). Table~\ref{tab:partitioning_ablations} complements this analysis with predictive ablations of partitioning, cross-cluster edge retention, and inference strategy (Sec.~\ref{subsec:training_inference}) on the three controlled benchmarks.

\textbf{Partitioning method.} METIS yields the higher mean test metric in $15$ of $21$ comparisons, including six of seven models on both Cora Full and Amazon Ratings, but only three on Questions (Table~\ref{tab:partitioning_ablations}). Its predictive advantage therefore depends on the model and dataset. In GNN experiments, random node sampling can outperform Cluster-GCN and full-graph training~\citep{wang2026}.

\textbf{Cross-cluster edges.} Preserving cross-cluster edges yields the higher mean test metric in $19$ of $21$ comparisons, with the largest difference of $10.01$ percentage points for TopoTune on Cora Full. These results support retaining the full induced connectivity among co-sampled clusters.

\textbf{Inference strategy.} Ensemble mini-batch inference yields a higher mean test metric than a single pass in all $21$ comparisons, with gains from $0.04$ to $4.65$ percentage points. Full-graph inference improves on a single pass in $20$ comparisons. These results support repeated inference across sampled contexts without full-domain materialization. These gains require additional inference passes, whose computational cost is included in the runtime comparison for \ref{rq:scalability}.
\suppressfloats[t]
\begin{table}[!t]
\centering
\caption{Partitioning and inference ablations (shaded columns). \emph{Partitioner} compares METIS with random partitioning, and \emph{Edges} compares preserving with removing cross-cluster edges. \emph{Ensemble} and \emph{Full-graph} compare the respective inference protocols with single-pass mini-batch inference. Entries report mean paired differences in test performance (percentage points) over five seeds.}
\label{tab:partitioning_ablations}
\footnotesize
\setlength{\tabcolsep}{2pt}
\renewcommand{\arraystretch}{1.12}
\resizebox{\textwidth}{!}{%
\begin{tabular}{@{}lcccc@{\hspace{6pt}}cccc@{\hspace{6pt}}cccc@{}}
\toprule
& \multicolumn{4}{c}{\textbf{Cora Full}}
& \multicolumn{4}{c}{\textbf{Amazon Ratings}}
& \multicolumn{4}{c}{\textbf{Questions}} \\
\cmidrule(lr){2-5}\cmidrule(lr){6-9}\cmidrule(lr){10-13}
\textbf{Model}
& {\scriptsize\textbf{Partitioner}} & {\scriptsize\textbf{Edges}} & \cellcolor{gray!7}{\scriptsize\textbf{Ensemble}} & \cellcolor{gray!7}{\scriptsize\textbf{Full-graph}}
& {\scriptsize\textbf{Partitioner}} & {\scriptsize\textbf{Edges}} & \cellcolor{gray!7}{\scriptsize\textbf{Ensemble}} & \cellcolor{gray!7}{\scriptsize\textbf{Full-graph}}
& {\scriptsize\textbf{Partitioner}} & {\scriptsize\textbf{Edges}} & \cellcolor{gray!7}{\scriptsize\textbf{Ensemble}} & \cellcolor{gray!7}{\scriptsize\textbf{Full-graph}} \\
\midrule
GCN & $+0.98$ & $+0.08$ & \cellcolor{gray!7}$+2.51$ & \cellcolor{gray!7}$+3.44$ & $+2.24$ & $+0.25$ & \cellcolor{gray!7}$+0.05$ & \cellcolor{gray!7}$+0.08$ & $+0.52$ & $+0.99$ & \cellcolor{gray!7}$+0.65$ & \cellcolor{gray!7}$+2.71$ \\
EDHNN & $+6.28$ & $+1.72$ & \cellcolor{gray!7}$+1.55$ & \cellcolor{gray!7}$+2.59$ & $+2.88$ & $-0.44$ & \cellcolor{gray!7}$+0.45$ & \cellcolor{gray!7}$+0.33$ & $+1.25$ & $+1.04$ & \cellcolor{gray!7}$+0.42$ & \cellcolor{gray!7}$+0.22$ \\
UniGNN & $+6.45$ & $+0.63$ & \cellcolor{gray!7}$+0.04$ & \cellcolor{gray!7}$+0.30$ & $+2.47$ & $-0.43$ & \cellcolor{gray!7}$+0.33$ & \cellcolor{gray!7}$+0.34$ & $-1.01$ & $+1.10$ & \cellcolor{gray!7}$+1.35$ & \cellcolor{gray!7}$+0.99$ \\
CWN & $+1.91$ & $+0.66$ & \cellcolor{gray!7}$+2.57$ & \cellcolor{gray!7}$+5.37$ & $+0.18$ & $+0.23$ & \cellcolor{gray!7}$+0.10$ & \cellcolor{gray!7}$+0.08$ & $-0.09$ & $+0.56$ & \cellcolor{gray!7}$+0.14$ & \cellcolor{gray!7}$+0.15$ \\
TopoTune & $-3.86$ & $+10.01$ & \cellcolor{gray!7}$+4.65$ & \cellcolor{gray!7}$+4.75$ & $-2.40$ & $+0.41$ & \cellcolor{gray!7}$+1.56$ & \cellcolor{gray!7}$+0.48$ & $+1.15$ & $+1.49$ & \cellcolor{gray!7}$+0.30$ & \cellcolor{gray!7}$-0.58$ \\
SCN & $+1.78$ & $+0.42$ & \cellcolor{gray!7}$+1.27$ & \cellcolor{gray!7}$+1.78$ & $+2.51$ & $+0.27$ & \cellcolor{gray!7}$+0.71$ & \cellcolor{gray!7}$+0.76$ & $-0.61$ & $+1.27$ & \cellcolor{gray!7}$+0.34$ & \cellcolor{gray!7}$+1.54$ \\
SCCNN & $+5.32$ & $+0.84$ & \cellcolor{gray!7}$+1.40$ & \cellcolor{gray!7}$+2.22$ & $+2.99$ & $+0.20$ & \cellcolor{gray!7}$+0.31$ & \cellcolor{gray!7}$+0.51$ & $-0.18$ & $+1.07$ & \cellcolor{gray!7}$+1.37$ & \cellcolor{gray!7}$+2.16$ \\
\bottomrule
\end{tabular}%
}
\end{table}

\section{Conclusions and discussion}
\label{sec:conclusion}

We introduced \our, a batch-before-lift workflow for graph-supported TDL that constructs higher-order domains and executes TNNs within mini-batches. This work aims to make higher-order learning scalable across domains while reusing existing liftings and architectures. Across $21$ matched full-graph comparisons, \our reduces peak GPU memory by $83.2\%$ on average while retaining competitive predictive performance. It also enables training of six higher-order models on Reddit and OGBN Products, extending TDL beyond the limits of global domain construction. Our structural analysis characterizes how mini-batch size constrains recoverable structures and how cluster recombination provides further opportunities to encounter them. Ensemble inference also achieves a higher mean test metric than single-pass inference in all $21$ controlled comparisons.

\textbf{Limitations.} Structural recovery does not establish equivalence to full-graph predictions or gradients. Its role in the ensemble gains remains unresolved. Repeated lifting and ensemble inference can increase runtime despite lower peak memory. We evaluate selected liftings for graph-supported node classification, without establishing which representation best suits each dataset and model.

\textbf{Future outlook.} Extending to native higher-order data requires batching their given cells and incidence relations. Selective caching could reduce repeated construction, but its storage and recombination trade-offs require evaluation. The changing model rankings across datasets motivate studying lifting and batching jointly, assessing whether broader structural context improves prediction enough to justify its construction and inference costs under fixed memory and end-to-end runtime budgets.

\newpage
\section*{Author contributions}
\label{sec:author_contributions}
D.L. and L.B. led the development of the project, including code implementation and the design, execution, and analysis of the final experiments. G.B. and L.T. conceived the initial research idea and coordinated the project. The idea was further developed by D.L., L.B., G.B., and L.T. D.L. and L.B. derived the main theoretical results, with contributions from L.T. L.T. also contributed substantially to the analysis of the final experiments. D.L., L.B., and L.T. jointly led the writing and revision of the manuscript. G.B. made substantial contributions to the experimental work  and to the writing and revision of the manuscript. N.M. and O.F. provided infrastructure and project support and contributed to manuscript editing.

\section*{Acknowledgments}
This work grew out of the research activities of the Topological Intelligence team. We thank members of its community for helpful discussions and feedback.

L.B. acknowledges financial support from the European Regional Development Fund for the project ‘Materials for clean energy, advanced sensors and quantum technologies’ (Grant No. PK.1.1.10.0002). G. B. and N. M. acknowledge partial support from NSF grant 2602079, the Noyce foundation, and Arlequin AI.

\bibliography{references}
\bibliographystyle{iclr2027_conference}

\clearpage

\appendix

\section*{Appendix contents}
\addcontentsline{toc}{section}{Appendix contents}
\vspace{1em}

\begingroup
\makeatletter
\setlength{\parindent}{0pt}
\setlength{\parskip}{0pt}
\footnotesize
\hypersetup{linkcolor=blue}

\newcommand{\scalappcontentsection}[2]{%
  \@dottedtocline{1}{0em}{2.8em}{\hyperref[#1]{\textbf{\ref*{#1}\hspace{0.6em}#2}}}{\pageref{#1}}%
}
\newcommand{\scalappcontentunnumbered}[2]{%
  \@dottedtocline{1}{0em}{2.8em}{\hyperref[#1]{\textbf{#2}}}{\pageref{#1}}%
}
\newcommand{\scalappcontentsubsection}[2]{%
  \@dottedtocline{2}{2.8em}{3.4em}{\hyperref[#1]{\ref*{#1}\hspace{0.6em}#2}}{\pageref{#1}}%
}
\newcommand{\scalappcontentsubsubsection}[2]{%
  \@dottedtocline{3}{6.2em}{4.2em}{\hyperref[#1]{\ref*{#1}\hspace{0.6em}#2}}{\pageref{#1}}%
}

\scalappcontentunnumbered{app:reproducility_statement}{Reproducibility statement}
\vspace{0.2\baselineskip}

\scalappcontentunnumbered{app:ai_use_statement}{AI use statement}
\vspace{0.2\baselineskip}

\scalappcontentunnumbered{app:ethics_statement}{Ethics statement}
\vspace{0.2\baselineskip}

\scalappcontentunnumbered{app:notation_guide}{Notation guide}
\vspace{0.2\baselineskip}

\scalappcontentsection{sec:domains_liftings}{Featured topological domains and lifting maps}
\scalappcontentsubsection{app:topological-domains}{Featured topological domains}
\scalappcontentsubsection{app:liftings}{Lifting maps}
\scalappcontentsubsubsection{app:experimental_liftings}{Lifting maps used in the experiments}
\vspace{0.2\baselineskip}

\scalappcontentsection{sec:tdl_scalability_appendix}{Practical scalability of graph-supported TDL: bottlenecks and solution paths}
\scalappcontentsubsection{subsec:tdl_scalability_pipeline}{Pipeline view of graph-supported TDL scalability}
\scalappcontentsubsection{subsec:tdl_scalability_representation}{Representation growth: from structure to storage}
\scalappcontentsubsubsection{subsec:tdl_scalability_liftings}{Lifting-specific mechanisms of structural expansion}
\scalappcontentsubsubsection{subsec:tdl_scalability_benchmark_coverage}{Benchmark scale regimes and coverage}
\scalappcontentsubsubsection{subsec:tdl_scalability_feature_storage}{Static feature-matrix storage}
\scalappcontentsubsection{subsec:tdl_scalability_preprocessing}{Full-domain CPU preprocessing: construction time and memory}
\scalappcontentsubsection{subsec:tdl_scalability_execution}{Phase-specific memory during preprocessing and TNN execution}
\scalappcontentsubsection{subsec:tdl_scalability_end_to_end}{End-to-end cost}
\scalappcontentsubsubsection{subsec:tdl_scalability_runtime}{End-to-end runtime}
\scalappcontentsubsubsection{subsec:tdl_scalability_run_wide_memory}{Run-wide peak memory}
\scalappcontentsubsection{subsec:tdl_scalability_statistical}{Structural exposure and statistical efficiency}
\scalappcontentsubsection{subsec:tdl_scalability_solution_paths}{Solution paths and practical guidance}
\vspace{0.2\baselineskip}

\scalappcontentsection{sec:struct_recon}{Structural reconstruction}
\scalappcontentsubsection{subsec:setup_notation}{Setup and notation}
\scalappcontentsubsection{subsec:per_epoch_recovery_prob}{Per-epoch recovery probability}
\scalappcontentsubsection{subsec:repeated_structural_exposure}{Repeated structural exposure}
\scalappcontentsubsection{subsec:cum_prob_conv}{Cumulative recovery probability}
\scalappcontentsubsection{subsec:struct_cov_conv}{Structural coverage convergence}
\scalappcontentsubsection{subsec:repeated_exposure_coverage}{Repeated-exposure structural coverage}
\scalappcontentsubsection{subsec:mean_structural_exposure_rate}{Mean structural exposure rate}
\scalappcontentsubsection{subsec:repeated_exposure_epoch}{Exposure epoch distribution}
\scalappcontentsubsection{subsec:direct_incidence_message_availability}{Direct-incidence message availability}
\scalappcontentsubsection{subsec:repeated_direct_message_exposure}{Repeated direct-incidence message exposure}
\scalappcontentsubsection{subsec:layerwise_incidence_aggregation}{Layerwise incidence aggregation}
\scalappcontentsubsection{subsec:entropy_cov_conv}{Cumulative recovery entropy}
\scalappcontentsubsection{subsec:exp_cora}{Empirical analysis of structural recovery}
\scalappcontentsubsubsection{subsec:structural_observability_datasets}{Structural recovery across datasets}
\scalappcontentsubsubsection{subsec:recovery_epochs_cora_full}{Recovery across epochs on Cora Full}
\vspace{0.2\baselineskip}

\scalappcontentsection{sec:additional_results}{Additional results}
\vspace{0.2\baselineskip}

\scalappcontentsection{sec:hyper_opt}{Experimental protocol and hyperparameter optimization}

\makeatother
\endgroup
\newpage

\section*{Reproducibility statement}
\label{app:reproducility_statement}
Sections~\ref{subsec:clustered_mini_batch_lifting} and~\ref{subsec:training_inference} describe mini-batch construction, training, and inference. Appendix~\ref{app:experimental_liftings} specifies the evaluated liftings, and Appendix~\ref{sec:hyper_opt} reports the experimental protocol, hyperparameter search spaces, and selected configurations. Appendix~\ref{subsec:structural_observability_datasets} describes the structural-recovery experiments. Experiments were conducted on a Linux machine equipped with dual AMD EPYC 7543 processors, 64 physical CPU cores, 1 TB of system memory, and 8 NVIDIA A100 GPUs, each with 80 GiB of GPU memory. To facilitate reproducibility, we make our code implementation available at \url{https://anonymous.4open.science/r/cluster-tnn-233C/}.

\section*{AI use statement}
\label{app:ai_use_statement}
In this work, AI assistance was used as a supplementary tool during the research and writing process. Specifically, AI tools were used to polish and improve the clarity, grammar, and organization of the manuscript, identify and locate potentially relevant related work, assist in drafting and restructuring selected sections, and improve the presentation and exposition of some mathematical proofs. All AI-assisted content was reviewed, revised, and verified by the authors, who remain responsible for the accuracy, originality, interpretation, and final presentation of the work. AI tools were not used as a substitute for the authors' scientific judgment, experimental analysis, or verification of the reported results.

\section*{Ethics statement}
\label{app:ethics_statement}
The work presented in this paper aims to advance the field of Topological Deep Learning. While it may have broader societal implications, we do not identify any that warrant specific discussion here.

\section*{Notation guide}
\label{app:notation_guide}

The table collects the symbols used across multiple appendix sections. Symbols introduced only inside individual proofs remain local to those proofs.

\begin{center}
\small
\renewcommand{\arraystretch}{1.16}
\begin{tabular}{@{}>{\raggedright\arraybackslash}p{0.24\textwidth}>{\raggedright\arraybackslash}p{0.70\textwidth}@{}}
\toprule
\textbf{Symbol} & \textbf{Description} \\
\midrule
$\graph=(\vertices,\edges,F_{\vertices},F_{\edges})$ & Featured input graph used by the evaluated liftings. \\
$\mathcal T=(\mathcal X,F)$ & Featured topological domain. \\
$\mathcal X_r$, $F_r$, $\mathbf X_r$, $d_r$ & Rank-$r$ elements, feature map, feature matrix, and feature width, respectively. \\
$\lifting$, $\structlifting$, $\featurelifting$ & Complete lifting and its structural and feature components, respectively. \\
$\mathbf B_{r,r+1}$, $\mathbf H$ & Incidence matrix between ranks $r$ and $r+1$, and node-hyperedge incidence matrix, respectively. \\
$K$ & Total number of clusters in the fixed graph partition. \\
$\Pi=\{P_k\}_{k=1}^{K}$ & Fixed partition of the graph nodes. \\
$q$, $Z$, $B$, $\graph_B$ & $q$ is the number of clusters per mini-batch. $Z$ is a set of cluster indices, $B=\bigcup_{k\in Z}P_k$ is the resulting node set, and $\graph_B$ is the graph induced by $B$. \\
\bottomrule
\end{tabular}
\end{center}

\section{Featured topological domains and lifting maps}
\label{sec:domains_liftings}

We adopt the featured-domain and lifting formalism of \tb{}~\citep{TopoBench}, restricting the presentation to the objects used in this study. The input to every lifting is a featured graph, and the output is a featured graph, hypergraph, simplicial complex, or cell complex.

\subsection{Featured topological domains}
\label{app:topological-domains}

\begin{definition}[Featured graph]
A featured graph is a tuple
\[
    \graph=(\vertices,\edges,F_{\vertices},F_{\edges}),
    \qquad
    F_{\vertices}:\vertices\rightarrow\mathbb{R}^{d_{\vertices}},
    \qquad
    F_{\edges}:\edges\rightarrow\mathbb{R}^{d_{\edges}}.
\]
Here, $(\vertices,\edges)$ specifies the underlying graph, with $\vertices$ being the set of nodes and $\edges$ the set of edges. While $F_{\vertices}$ and $F_{\edges}$ assign the node and edge features, respectively. We omit $F_{\edges}$ when the edge features are not available.
\end{definition}

\paragraph{Topological domains.}
A topological domain is a discrete representation of a system in terms of its constituent elements and the relations among them. Beyond pairwise graph edges, these relations may connect several nodes or organize elements through containment, incidence, or boundary relations. The term therefore provides a common language for graphs and their higher-order generalizations~\citep{hajij2023tdl,papillon2023architectures,TopoBench}.

\paragraph{Higher-order domains.}
We consider three families of higher-order domains. Hypergraphs represent set-type relations by allowing a hyperedge to connect any number of nodes, without imposing a hierarchy among hyperedges. Simplicial complexes add a strict part-whole hierarchy in which every simplex contains all of its lower-dimensional faces. Cell complexes retain a boundary-based hierarchy but admit non-simplicial cells, including polygonal faces that can represent graph cycles directly~\citep{hajij2023tdl,papillon2023architectures,TopoBench}.

\begin{definition}[Hypergraph]
Let $\mathcal{P}(\vertices)$ denote the power set of a non-empty node set $\vertices$. A hypergraph on $\vertices$ is a pair $H=(\vertices,\edges_H)$, where $\edges_H\subseteq\mathcal{P}(\vertices)\setminus\{\emptyset\}$.
\end{definition}

\begin{definition}[Simplicial complex]
A simplicial complex on a non-empty node set $\vertices$ is a pair $S=(\vertices,\mathcal{X})$, where $\mathcal{X}\subseteq\mathcal{P}(\vertices)\setminus\{\emptyset\}$ and
\[
    \sigma\in\mathcal{X},\quad
    \emptyset\neq\tau\subseteq\sigma
    \quad\Longrightarrow\quad
    \tau\in\mathcal{X}.
\]
The elements of $\mathcal{X}$ are called simplices.
\end{definition}

\begin{definition}[Cell complex]
A regular cell complex is a topological space $S$ partitioned into cells $\{x_\alpha\}_{\alpha\in P_S}$, where $P_S$ is an index set, such that:
\begin{enumerate}
    \item $S=\bigcup_{\alpha\in P_S}\operatorname{int}(x_\alpha)$.
    \item for every $\alpha\in P_S$, the interior of $x_\alpha$ is homeomorphic to $\mathbb{R}^{n_\alpha}$ for some $n_\alpha\in\mathbb{Z}_{\geq0}$, and $n_\alpha$ is the dimension of $x_\alpha$.
    \item the boundary of each cell is a union of finitely many cells of strictly lower dimension.
\end{enumerate}
\end{definition}

The experimental liftings instantiate these domains using node neighbourhoods as hyperedges, cliques as simplicial cells, and graph cycles as cellular 2-cells, as detailed in Appendix~\ref{app:experimental_liftings}.

\begin{definition}[Featured topological domain]
Let $\mathcal T=(\mathcal{X},F)$, where $\mathcal{X}$ is a topological domain, and
\[
    F=\{F_r\}_{r\geq0},
    \qquad
    F_r:\mathcal{X}_r\rightarrow\mathbb{R}^{d_r}
\]
assigns feature vectors to its rank-$r$ elements. For hypergraphs the feature maps instead distinguish nodes and hyperedges.
\end{definition}

\subsection{Lifting maps}
\label{app:liftings}

\begin{definition}[Lifting between featured topological domains]
Let $\mathcal T_{\mathrm{src}}=(\mathcal{X}_{\mathrm{src}},F_{\mathrm{src}})$ be the featured source domain and $\mathcal T_{\mathrm{tgt}}=(\mathcal{X}_{\mathrm{tgt}},F_{\mathrm{tgt}})$ the featured target domain. A lifting $\lifting:\mathcal T_{\mathrm{src}}\rightarrow\mathcal T_{\mathrm{tgt}}$ consists of a structural map $\structlifting$ and a feature map $\featurelifting$:
\begin{equation}
    \mathcal{X}_{\mathrm{tgt}}=\structlifting(\mathcal{X}_{\mathrm{src}},F_{\mathrm{src}}),
    \qquad
    F_{\mathrm{tgt}}=\featurelifting(\mathcal{X}_{\mathrm{src}},F_{\mathrm{src}}),
    \qquad
    \lifting(\mathcal T_{\mathrm{src}})=\mathcal T_{\mathrm{tgt}}.
    \label{eq:lifting_components}
\end{equation}
The structural map constructs the elements and incidence relations of $\mathcal{X}_{\mathrm{tgt}}$, while the feature map transfers or constructs their features $F_{\mathrm{tgt}}$.
\end{definition}

This definition follows the lifting formalism of \tb{}~\citep{TopoBench}. In our experiments, $\mathcal T_{\mathrm{src}}$ is a featured graph and $\mathcal T_{\mathrm{tgt}}$ is the representation required by the model. For example, the clique-complex lifting constructs simplicial cells from graph cliques and assigns features to the resulting elements. All structural liftings used in this study derive the target domain from graph connectivity.

For $B\subseteq\vertices$, the featured graph provided to a mini-batch lifting is
\[
    \graph_B
    =\left(B,\edges_B,
    \left.F_{\vertices}\right|_B,
    \left.F_{\edges}\right|_{\edges_B}\right),
    \qquad
    \edges_B=\{(u,v)\in\edges:u,v\in B\},
\]
with the edge-feature term omitted when unavailable. During mini-batch training and inference, the workflow constructs $\lifting(\graph_B)$ directly without first materializing $\lifting(\graph)$.

\subsubsection{Lifting maps used in the experiments}
\label{app:experimental_liftings}

The experiments instantiate each lifting on a featured graph \(\graph=(\vertices,\edges,F_{\vertices},F_{\edges})\). We first specify its structural component \(\structlifting\). The identity lifting preserves the input features, whereas the higher-order liftings use a common feature map that assigns features to the constructed elements. Each construction applies to either the full graph or an induced mini-batch graph, with \(\graph_B\) substituted for \(\graph\). The lifting-specific scale parameters and their experimental values are stated below.

\paragraph{Identity graph lifting.}
The identity lifting preserves the graph and its features,
\[
    \lifting_{\mathrm{id}}(\graph)=\graph.
\]
We use it with GCN.

\paragraph{$k$-hop neighbourhood hypergraph lifting.}
The $k$-hop lifting assigns a hyperedge to the closed $k$-hop neighbourhood of each node~\citep{TopoBench}. For an integer $k\geq1$ and a node $v\in\vertices$, define
\[
    N_{\graph}^{(k)}[v]
    =\{u\in\vertices:\operatorname{dist}_{\graph}(u,v)\leq k\},
\]
where $\operatorname{dist}_{\graph}$ denotes shortest-path distance in $\graph$. The resulting node-indexed hyperedge family and structural lifting are
\[
    \edges_H^{(k)}=\bigl(e_v\bigr)_{v\in\vertices},
    \qquad
    e_v=N_{\graph}^{(k)}[v],
    \qquad
    \mathcal{L}_{S,k}^{\mathrm{hypergraph}}(\graph)
    =\left(\vertices,\edges_H^{(k)}\right).
\]
The center-node index is retained, so the implementation creates exactly one incidence-matrix column per node even when two indexed hyperedges have identical memberships.
The parameter $k$ determines the graph radius represented by each hyperedge. We set $k=1$ and use this lifting with EDHNN and UniGNN.

\paragraph{Cycle-basis cell lifting.}
Cycle-based cellular liftings attach 2-cells along selected graph cycles~\citep{bodnar2021cwn,TopoBench}. Let $\mathcal{B}(\graph)$ denote a fundamental cycle basis obtained using Paton's algorithm~\citep{paton1969cycles}. For a maximum cycle length $\ell_{\max}$, we retain
\[
    \mathcal{B}_{\leq\ell_{\max}}(\graph)
    =\{c\in\mathcal{B}(\graph):|c|\leq\ell_{\max}\}.
\]
Attaching one 2-cell along each retained cycle gives
\[
    \mathcal{L}_{S,\ell_{\max}}^{\mathrm{cell}}(\graph)
    =(\mathcal{C}_0,\mathcal{C}_1,\mathcal{C}_2),
    \qquad
    \mathcal{C}_0=\vertices,
    \qquad
    \mathcal{C}_1=\edges,
    \qquad
    \mathcal{C}_2=\mathcal{B}_{\leq\ell_{\max}}(\graph).
\]
We set $\ell_{\max}=9$ and use this lifting with CWN and TopoTune.

\paragraph{Clique-complex lifting.}
The clique-complex lifting maps every clique of size $r+1$ to an $r$-simplex~\citep{bodnar2021cwn,TopoBench}. For a maximum simplex dimension $d_{\max}$, define the truncated clique complex
\[
    \mathrm{Cl}_{\leq d_{\max}}(\graph)
    =\left\{\emptyset\neq\sigma\subseteq\vertices:
    |\sigma|\leq d_{\max}+1,
    (u,v)\in\edges\ \text{for all distinct }u,v\in\sigma\right\}.
\]
The corresponding structural lifting is
\[
    \mathcal{L}_{S,d_{\max}}^{\mathrm{simplicial}}(\graph)
    =\left(\vertices,\mathrm{Cl}_{\leq d_{\max}}(\graph)\right).
\]
We set $d_{\max}=2$, yielding nodes, edges, and filled triangles. We use this lifting with SCN and SCCNN.

\paragraph{Feature lifting.}
The preceding maps specify the structural component $\structlifting$ of each lifting. In the experiments, the common feature component $\featurelifting$ retains existing features and constructs missing higher-rank features by summing over incident lower-rank elements~\citep{TopoBench}. For a ranked domain, let $\mathbf{B}_{r,r+1}$ be the incidence matrix between rank-$r$ and rank-$(r+1)$ elements. The feature assignment from $\mathbf{X}_r$ to $\mathbf{X}_{r+1}$ is
\[
    \mathbf{X}_{r+1}
    =\left|\mathbf{B}_{r,r+1}\right|^{\mathsf{T}}\mathbf{X}_r.
\]
The absolute value removes orientation signs, and the operation is applied successively across ranks. For a hypergraph with node-hyperedge incidence matrix $\mathbf{H}$, the corresponding assignment is $\mathbf{X}_{H}=|\mathbf{H}|^{\mathsf{T}}\mathbf{X}_0$, where $\mathbf{X}_0$ is the input node-feature matrix. Each new element therefore receives the sum of the features on its incident source elements. The feature assignment does not alter the structure produced by $\structlifting$.

\section{Practical scalability of graph-supported TDL: bottlenecks and solution paths}
\label{sec:tdl_scalability_appendix}

\subsection{Pipeline view of graph-supported TDL scalability}
\label{subsec:tdl_scalability_pipeline}

The scalability of graph-supported TDL depends on the entire computational pipeline, not only on the TNN architecture. The input graph must first be lifted by identifying higher-order structures and constructing their feature matrices and connectivity operators. In the evaluated implementation, the constructed representation is held in CPU memory and transferred to the GPU as needed for training or inference. A bottleneck at any one of these stages can make an otherwise efficient architecture impractical.

Where these costs occur depends on the execution regime. Full-domain execution constructs the topology and lifts features once during preprocessing, then reuses the complete lifted representation throughout training and evaluation. \our instead partitions the graph during preprocessing and reconstructs and lifts individual induced graph mini-batches during training and inference. This shift trades full-domain storage for repeated local construction and therefore changes preprocessing, execution time, and memory jointly. Both regimes use the same parameterized lifting procedure and structural restrictions. Applying that procedure to induced subgraphs, however, need not produce the restriction of the full-domain lift.

We use the notation introduced in Appendix~\ref{sec:domains_liftings}. For a lifting \(\lifting=(\structlifting,\featurelifting)\), the stored representation depends on the number of generated elements \(|\mathcal X_r|\), the number of entries in its incidence or neighbourhood operators, and the feature width \(d_r\) at each rank. The structural counts in Table~\ref{tab:tdl_structural_counts} determine the number of feature rows. The feature widths convert those row counts into the storage estimates in Table~\ref{tab:tdl_static_feature_storage}. Table~\ref{tab:tdl_preprocessing_cost} distinguishes models that request different operators from the same lifted domain.

\subsection{Representation growth: from structure to storage}
\label{subsec:tdl_scalability_representation}

Structural expansion and feature storage are related but distinct. We first explain how liftings create higher-order structures and represent their connectivity (Sec.~\ref{subsec:tdl_scalability_liftings}). We then compare the structural characteristics of the benchmarks and their lifted domains (Sec.~\ref{subsec:tdl_scalability_benchmark_coverage}). Finally, we combine these structural counts with feature dimensions to estimate feature-matrix storage (Sec.~\ref{subsec:tdl_scalability_feature_storage}).

\subsubsection{Lifting-specific mechanisms of structural expansion}
\label{subsec:tdl_scalability_liftings}

Different liftings can produce very different numbers of higher-order structures from the same graph. The connectivity information also varies, including node memberships in hyperedges and the edges forming cell boundaries. We illustrate these sources of growth using the neighbourhood-hypergraph, cycle-based cellular, and clique-complex liftings used in our experiments, which are \tb{}'s default graph mappings to the corresponding domains.

\paragraph{Neighbourhood-hypergraph lifting.}

For the evaluated \(k=1\) neighbourhood-hypergraph lifting, each node defines one indexed hyperedge containing its closed one-hop neighbourhood. The number of hyperedges is therefore exactly \(|\vertices|\) for every dataset and provides no additional distinction beyond the node count. Connectivity nevertheless determines the incidence size. Across all neighbourhood hyperedges, the lifting stores \(|\vertices|+2|\edges|\) node-hyperedge memberships, corresponding to an average hyperedge size of \(1+2|\edges|/|\vertices|\).

\paragraph{Cycle-based cellular lifting.}

The cycle-based cellular lifting first computes a fundamental cycle basis and retains cycles of length at most \(\ell_{\max}=9\), attaching one cellular 2-cell along every retained cycle. Each retained cycle contributes one 2-cell and one nonzero entry in the edge-cell incidence matrix for each edge on its boundary. Its storage requirements therefore depend on both the number and lengths of the retained cycles. The retained cycle basis can also change when the lifting is recomputed on an induced subgraph, as discussed in Appendix~\ref{subsec:exp_cora}.\footnote{The evaluated implementation computes the complete cycle basis before discarding cycles longer than \(\ell_{\max}\). The length restriction therefore bounds the retained cell complex, but not necessarily the time or temporary memory required to construct the basis.}

\paragraph{Clique-complex lifting.}

The clique-complex lifting fills every graph triangle with a simplicial 2-cell. Triangle-rich graphs can therefore produce far more 2-cells than input nodes or edges. In the stored representation, each generated 2-cell contributes one feature-vector row and three nonzero entries to the edge-cell incidence matrix, one for each boundary edge.

\subsubsection{Benchmark scale regimes and coverage}
\label{subsec:tdl_scalability_benchmark_coverage}

Table~\ref{tab:tdl_structural_counts} reports the numbers of input nodes and edges, together with the numbers of 2-cells generated by the cellular and simplicial liftings.\footnote{For the evaluated \(k=1\) neighbourhood-hypergraph lifting, the hyperedge count is \(|\vertices|\) and the node-hyperedge membership count is \(|\vertices|+2|\edges|\). Both follow from the input-graph columns and are omitted from the table.} Mean node degree, calculated as \(2|\edges|/|\vertices|\), describes how many neighbours an input node has on average. It distinguishes graphs with similar edge counts but substantially different node counts.

For each lifting, the \emph{Per node} column divides the total number of generated 2-cells by the number of input nodes. This shows how large the generated 2-cell set is relative to the original node set. The absolute counts describe the size of the lifted domain, while these ratios allow us to compare structural expansion across graphs of different sizes. They are ratios of total counts, not counts of the cells incident to an average node.

\begin{table}[h]
\centering
\caption{Structural characteristics of the input graphs and their lifted domains. The edge column counts each undirected relation once, and mean node degree is derived as \(2|\edges|/|\vertices|\). Per-node values are the corresponding 2-cell counts divided by the number of input nodes. Cellular 2-cells are retained basis cycles of length at most nine, whereas simplicial 2-cells are filled graph triangles.}
\label{tab:tdl_structural_counts}
\small
\setlength{\tabcolsep}{4pt}
\resizebox{\textwidth}{!}{%
\begin{tabular}{lrrrrrrr}
\toprule
\multirow{2}{*}{\textbf{Dataset}}
& \multicolumn{3}{c}{\textbf{Input graph}}
& \multicolumn{2}{c}{\textbf{Cellular 2-cells}}
& \multicolumn{2}{c}{\textbf{Simplicial 2-cells}} \\
\cmidrule(lr){2-4}\cmidrule(lr){5-6}\cmidrule(lr){7-8}
& \textbf{Nodes}
& \textbf{Edges}
& \makecell{\textbf{Mean node}\\\textbf{degree}}
& \textbf{Count}
& \makecell{\textbf{Per}\\\textbf{node}}
& \textbf{Count}
& \makecell{\textbf{Per}\\\textbf{node}} \\
\midrule
Questions        & $48\,921$    & $153\,540$    & $6.28$   & $24\,405$     & $0.50$   & $110\,209$       & $2.25$ \\
Amazon Ratings   & $24\,492$    & $93\,050$     & $7.60$   & $63\,667$     & $2.60$   & $110\,765$       & $4.52$ \\
Cora Full        & $19\,793$    & $63\,421$     & $6.41$   & $24\,306$     & $1.23$   & $48\,386$        & $2.44$ \\
Coauthor Physics & $34\,493$    & $247\,962$    & $14.38$  & $102\,152$    & $2.96$   & $468\,550$       & $13.58$ \\
Reddit           & $232\,965$   & $57\,307\,946$ & $491.99$ & $40\,582\,059$ & $174.20$ & $8\,360\,338\,411$ & $35\,886.67$ \\
OGBN Products    & $2\,449\,029$ & $61\,859\,140$ & $50.52$  & $34\,226\,557$ & $13.98$  & $621\,082\,120$   & $253.60$ \\
\bottomrule
\end{tabular}%
}
\end{table}

\paragraph{Input and lifted scale regimes.}

Questions, Amazon Ratings, and Cora Full have mean node degrees between \(6.28\) and \(7.60\). Coauthor Physics reaches \(14.38\) and produces \(13.58\) simplicial 2-cells per node, placing it between the smaller controlled benchmarks and the two scalability-oriented cases.

Reddit and OGBN Products expose complementary scalability regimes. OGBN Products contains approximately \(10.5\times\) more nodes, yet the two graphs have comparable edge counts. Reddit consequently has a mean node degree of \(491.99\), compared with \(50.52\) for OGBN Products. The higher-order constructions also differ in their expansion. Reddit contains \(174.20\) cellular and \(35886.67\) simplicial 2-cells per node, whereas OGBN Products contains \(13.98\) and \(253.60\), respectively. These datasets therefore contrast greater higher-order expansion per node with a substantially larger input node count at comparable edge counts.

\paragraph{Why the benchmarks are complementary.}

The same progression gives the datasets complementary experimental roles. Questions, Amazon Ratings, and Cora Full support controlled comparisons between full-domain and \our partitioned execution because every evaluated lifting can be constructed on the complete graph. Coauthor Physics represents an intermediate case in which a moderate node count already produces substantially greater higher-order expansion. Reddit and OGBN Products extend the suite to graphs with tens of millions of edges and contrasting node counts. Together, these datasets support controlled comparisons with full-domain execution and allow us to examine how graph size, connectivity, and lifting-induced structural expansion affect computational cost.

The benchmarks also cover distinct application domains and relation semantics. Questions and Reddit represent online interactions, although Questions connects users through question-answer activity and Reddit connects posts through shared commenters~\citep{platonov2023critical,reddit_paper}. Cora Full and Coauthor Physics represent scientific-information networks through citation and coauthorship relations~\citep{bojchevski2018deep,pitfalls}. Amazon Ratings and OGBN Products represent product co-purchasing networks at different scales~\citep{platonov2023critical,hu2020}. Together, these datasets expose the pipeline to graph structures generated by different relational processes while keeping the downstream task fixed as node classification.

\subsubsection{Static feature-matrix storage}
\label{subsec:tdl_scalability_feature_storage}

Here, static feature-matrix storage denotes the persistent rank-wise input feature matrices of the full lifted domain. It excludes incidence tensors, derived operators, temporary construction objects, activations, gradients, and optimizer state.

Feature width provides a further source of variation across the benchmarks. Cora Full and Coauthor Physics have input widths of \(8\,710\) and \(8\,415\), respectively, compared with \(602\) for Reddit and \(100\) for OGBN Products. Their wider features increase storage per generated element, so a smaller graph need not have a proportionally smaller feature-storage requirement.

The memory required for the rank-wise feature matrices follows directly from the structural counts. At rank \(r\), the feature matrix contains \(|\mathcal X_r|\) rows and \(d_r\) values per row. We assume single-precision floating-point storage (float32), which requires four bytes per value. Storing the feature matrices through rank two therefore requires
\begin{equation}
M_{\mathrm{features}}
=4\left(
|\mathcal X_0|d_0
+|\mathcal X_1|d_1
+|\mathcal X_2|d_2
\right)\ \text{bytes}.
\label{eq:tdl_static_feature_payload}
\end{equation}
For these estimates, each generated element receives a feature vector with the dataset's input feature width, consistent with the feature assignment described in Appendix~\ref{app:experimental_liftings}. For cellular and simplicial domains, this gives \(d_0=d_1=d_2\) and \(M_{\mathrm{features}}=4d_0\sum_r|\mathcal X_r|\). For hypergraphs, we count the node and hyperedge feature matrices. Table~\ref{tab:tdl_static_feature_storage} uses the structural counts in Table~\ref{tab:tdl_structural_counts}. Other numerical precisions may be appropriate depending on the features and numerical requirements. For unchanged matrix dimensions, storage scales directly with the number of bytes per value. These estimates account only for input feature matrices, not total preprocessing or training memory.


\begin{table}[h]
\centering
\caption{Estimated memory required to store the float32 input feature matrices of each full-domain lifting. Values use the input feature dimension of each dataset. Observed preprocessing memory is reported in Table~\ref{tab:tdl_preprocessing_cost}.}
\label{tab:tdl_static_feature_storage}
\small
\setlength{\tabcolsep}{7pt}
\resizebox{\textwidth}{!}{%
\begin{tabular}{lcccc}
\toprule
\textbf{Dataset} & \makecell{\textbf{Input width}\\\textbf{$d_0$}} & \makecell{\textbf{$k=1$}\\\textbf{hypergraph}} & \makecell{\textbf{Cycle cell}\\\textbf{complex}} & \makecell{\textbf{Clique simplicial}\\\textbf{complex}} \\
\midrule
Questions        & $301$   & $0.118$ GB & $0.273$ GB   & $0.376$ GB \\
Amazon Ratings   & $300$   & $0.059$ GB & $0.217$ GB   & $0.274$ GB \\
Cora Full        & $8\,710$ & $1.379$ GB & $3.746$ GB   & $4.585$ GB \\
Coauthor Physics & $8\,415$ & $2.322$ GB & $12.946$ GB  & $25.279$ GB \\
Reddit           & $602$   & $1.122$ GB & $236.280$ GB & $20.270$ TB \\
OGBN Products    & $100$   & $1.959$ GB & $39.414$ GB & $274.156$ GB \\
\bottomrule
\end{tabular}%
}
\end{table}

For OGBN Products, substituting \(d_0=d_1=d_2=100\) and the structural counts from Table~\ref{tab:tdl_structural_counts} into Eq.~\eqref{eq:tdl_static_feature_payload} gives 39.414~GB for the cellular feature matrices and 274.156~GB for the simplicial feature matrices.

For Reddit, the cellular calculation is explicit:
\begin{align}
M_{\mathrm{features}}^{\mathrm{cell}}
&=4\times602\times
\left(232\,965+57\,307\,946+40\,582\,059\right) \notag\\
&=236\,280\,111\,760\ \text{bytes}
\approx236.280\ \text{GB}.
\label{eq:reddit_cell_feature_payload}
\end{align}
Replacing the cycle-cell count with Reddit's $8.36$B simplicial 2-cells in the same calculation gives the 20.270~TB simplicial estimate reported in Table~\ref{tab:tdl_static_feature_storage}. Thus, before adding sparse incidence tensors or any neural-network state, the feature matrices alone can require hundreds of gigabytes for the cellular lifting and tens of terabytes for the simplicial lifting. This estimate results from storing a 602-dimensional float32 feature vector for every node, edge, and generated higher-order element.

\subsection{Full-domain CPU preprocessing: construction time and memory}
\label{subsec:tdl_scalability_preprocessing}

The analytical estimate above accounts only for persistent rank-wise feature matrices. The evaluated implementation stores incidence relations sparsely and constructs only the additional operators requested by a model.\footnote{See the \hyperref[app:reproducility_statement]{reproducibility statement} for implementation and reproduction information.} Cellular models may derive adjacency or Laplacian operators from cell incidences, while simplicial models may derive Laplacians from simplicial incidences. Full-domain preprocessing can therefore hold incidence matrices, derived operators, temporary construction objects, and implementation overhead in memory at the same time. We measure peak CPU memory as the largest resident-set size (RSS) sampled for the main Python process during full-domain preprocessing. It should be interpreted as the largest sampled resident-memory footprint of the main process, rather than as the size of any single tensor. Sampling may miss short-lived peaks and does not include memory held exclusively by child processes. Table~\ref{tab:tdl_preprocessing_cost} reports the available measurements for the evaluated dataset-lifting-model configurations.

\begingroup
\setlength{\LTcapwidth}{\textwidth}
\makeatletter
\patchcmd{\LT@makecaption}{\reset@font}{\reset@font\normalsize}{}{}
\makeatother
\scriptsize
\setlength{\tabcolsep}{2pt}
\renewcommand{\arraystretch}{1.12}
\setlength{\LTleft}{0pt}
\setlength{\LTright}{0pt}
\begin{longtable}{@{}>{\raggedright\arraybackslash}p{0.13\textwidth}>{\raggedright\arraybackslash}p{0.15\textwidth}>{\raggedright\arraybackslash}p{0.12\textwidth}>{\raggedright\arraybackslash}p{0.30\textwidth}r@{\hspace{6pt}}r@{}}
\caption{Available full-graph preprocessing time and peak CPU memory measurements for the evaluated dataset-lifting-model configurations. Models are distinguished because they may request different operators from the same lifted domain. Values are reported as the mean $\pm$ standard deviation across five runs. Memory usage is reported in decimal GB.}
\label{tab:tdl_preprocessing_cost}\\
\toprule
\textbf{Dataset} &
\textbf{Lifting} &
\textbf{Model} &
\textbf{Additional operators} &
\makecell[c]{\textbf{Preprocessing}\\\textbf{time (s)}} &
\makecell[c]{\textbf{Peak CPU}\\\textbf{memory (GB)}} \\
\midrule
\endfirsthead
\multicolumn{6}{c}{\tablename~\thetable{} continued} \\
\toprule
\textbf{Dataset} &
\textbf{Lifting} &
\textbf{Model} &
\textbf{Additional operators} &
\makecell[c]{\textbf{Preprocessing}\\\textbf{time (s)}} &
\makecell[c]{\textbf{Peak CPU}\\\textbf{memory (GB)}} \\
\midrule
\endhead
\midrule
\multicolumn{6}{r}{Continued on next page} \\
\endfoot
\bottomrule
\endlastfoot
Amazon Ratings & $k=1$ neighbourhood hypergraph & EDHNN & None beyond the node-hyperedge incidence & $0.42 \pm 0.07$ & $1.00 \pm 0.01$ \\
Amazon Ratings & $k=1$ neighbourhood hypergraph & UniGNN & None beyond the node-hyperedge incidence & $0.39 \pm 0.02$ & $1.01 \pm 0.01$ \\
Amazon Ratings & Cycle-basis cell & CWN & Rank-1 upper adjacency & $5.7 \pm 0.2$ & $1.38 \pm 0.00$ \\
Amazon Ratings & Cycle-basis cell & TopoTune & Rank-1 and two-hop rank-0 upper adjacencies & $8.5 \pm 0.2$ & $1.39 \pm 0.02$ \\
Amazon Ratings & Dimension-2 clique complex & SCN & None beyond the rank-1 and rank-2 incidences & $4.3 \pm 0.3$ & $1.44 \pm 0.00$ \\
Amazon Ratings & Dimension-2 clique complex & SCCNN & None beyond the rank-1 and rank-2 incidences & $4.3 \pm 0.3$ & $1.44 \pm 0.00$ \\
Questions & $k=1$ neighbourhood hypergraph & EDHNN & None beyond the node-hyperedge incidence & $0.63 \pm 0.09$ & $1.09 \pm 0.02$ \\
Questions & $k=1$ neighbourhood hypergraph & UniGNN & None beyond the node-hyperedge incidence & $0.61 \pm 0.05$ & $1.10 \pm 0.03$ \\
Questions & Cycle-basis cell & CWN & Rank-1 upper adjacency & $6.1 \pm 0.1$ & $1.52 \pm 0.08$ \\
Questions & Cycle-basis cell & TopoTune & Rank-1 and two-hop rank-0 upper adjacencies & $7.3 \pm 0.4$ & $1.51 \pm 0.07$ \\
Questions & Dimension-2 clique complex & SCN & None beyond the rank-1 and rank-2 incidences & $6.2 \pm 0.3$ & $1.67 \pm 0.03$ \\
Questions & Dimension-2 clique complex & SCCNN & None beyond the rank-1 and rank-2 incidences & $6.2 \pm 0.4$ & $1.65 \pm 0.06$ \\
Cora Full & $k=1$ neighbourhood hypergraph & EDHNN & None beyond the node-hyperedge incidence & $3.3 \pm 0.1$ & $2.98 \pm 0.00$ \\
Cora Full & $k=1$ neighbourhood hypergraph & UniGNN & None beyond the node-hyperedge incidence & $3.4 \pm 0.1$ & $2.98 \pm 0.00$ \\
Cora Full & Cycle-basis cell & CWN & Rank-1 upper adjacency & $12.7 \pm 1.5$ & $6.14 \pm 0.01$ \\
Cora Full & Cycle-basis cell & TopoTune & Rank-1 and two-hop rank-0 upper adjacencies & $13.6 \pm 0.7$ & $6.14 \pm 0.01$ \\
Cora Full & Dimension-2 clique complex & SCN & None beyond the rank-1 and rank-2 incidences & $13.7 \pm 0.7$ & $7.28 \pm 0.01$ \\
Cora Full & Dimension-2 clique complex & SCCNN & None beyond the rank-1 and rank-2 incidences & $14.8 \pm 3.3$ & $7.28 \pm 0.01$ \\
Coauthor Physics & $k=1$ neighbourhood hypergraph & EDHNN & None beyond the node-hyperedge incidence & $6.2 \pm 0.7$ & $4.48 \pm 0.01$ \\
Coauthor Physics & $k=1$ neighbourhood hypergraph & UniGNN & None beyond the node-hyperedge incidence & $6.5 \pm 0.8$ & $4.48 \pm 0.01$ \\
Coauthor Physics & Cycle-basis cell & CWN & Rank-1 upper adjacency & $51.0 \pm 9.3$ & $19.79 \pm 0.01$ \\
Coauthor Physics & Cycle-basis cell & TopoTune & Rank-1 and two-hop rank-0 upper adjacencies & $55.3 \pm 7.0$ & $19.79 \pm 0.01$ \\
Coauthor Physics & Dimension-2 clique complex & SCN & None beyond the rank-1 and rank-2 incidences & $75.9 \pm 5.6$ & $42.42 \pm 0.02$ \\
Coauthor Physics & Dimension-2 clique complex & SCCNN & None beyond the rank-1 and rank-2 incidences & $77.3 \pm 5.3$ & $42.42 \pm 0.01$ \\
Reddit & $k=1$ neighbourhood hypergraph & EDHNN & None beyond the node-hyperedge incidence & $137.7 \pm 7.6$ & $20.77 \pm 0.08$ \\
Reddit & $k=1$ neighbourhood hypergraph & UniGNN & None beyond the node-hyperedge incidence & $136.9 \pm 6.9$ & $20.79 \pm 0.12$ \\
Reddit & Cycle-basis cell & CWN & Rank-1 upper adjacency & $4684.8 \pm 304.8$ & $479.96 \pm 0.12$ \\
Reddit & Cycle-basis cell & TopoTune & Rank-1 and two-hop rank-0 upper adjacencies & $6965.2 \pm 682.4$ & $480.06 \pm 0.01$ \\
Reddit & Dimension-2 clique complex & SCN & None beyond the rank-1 and rank-2 incidences & OOM & OOM \\
Reddit & Dimension-2 clique complex & SCCNN & None beyond the rank-1 and rank-2 incidences & OOM & OOM \\
\end{longtable}
\endgroup

For OGBN Products, the higher-order structures can be constructed, but retaining their complete feature matrices exceeds the available CPU RAM in the affected full-graph configurations. Even if this preprocessing bottleneck were overcome, full-graph training is expected to exceed the available GPU memory. We therefore report partitioned results with \our, while omitting incomplete full-graph preprocessing measurements from Table~\ref{tab:tdl_preprocessing_cost}.

\paragraph{Reddit full-domain bottlenecks.}

For Reddit, Eq.~\eqref{eq:reddit_cell_feature_payload} predicts 236.280~GB for the cellular feature matrices alone, before accounting for incidence matrices, model-requested operators, and temporary allocations. Across five runs, CWN preprocessing takes $4684.8\pm304.8$~s and reaches a peak CPU memory usage of $479.96\pm0.12$~GB.

Storing the full Reddit simplicial input feature matrices at the original feature width in float32 would require an estimated 20.270~TB (Table~\ref{tab:tdl_static_feature_storage}).

\subsection{Phase-specific memory during preprocessing and TNN execution}
\label{subsec:tdl_scalability_execution}

The following paired analyses focus on Questions, Amazon Ratings, and Cora Full, the three controlled benchmarks with complete full-graph and \our measurements across all seven evaluated models. Together, they provide 21 matched model-dataset comparisons. Each execution setting uses its independently selected hyperparameters, as reported in Appendix~\ref{sec:hyper_opt}. These comparisons therefore characterize the resource requirements and predictive performance of the tuned configurations.

Successfully constructing a lifted full domain does not imply that the complete representation can be processed by a TNN in one optimization step. A TNN layer may update several ranks and retain intermediate messages for backpropagation. For example, an edge representation can receive separate messages from incident nodes and 2-cells. These intermediate tensors add to the GPU memory required by the input features and connectivity operators.

Figure~\ref{fig:appendix_memory_runtime_other_models} places preprocessing CPU-memory peaks alongside training and evaluation GPU-memory peaks for all seven evaluated models, complementing the joint memory-runtime comparison in Figure~\ref{fig:cwn_scalability_memory_runtime}.

For the depth-scaling experiment in Figure~\ref{fig:memory_with_depth}, peak reserved GPU memory is measured over five seeds, with each run lasting three epochs. This experiment uses $512$ hidden channels, $K=64$, and $q=8$.

\begin{figure*}[!ht]
    \centering
    \includegraphics[width=\textwidth]{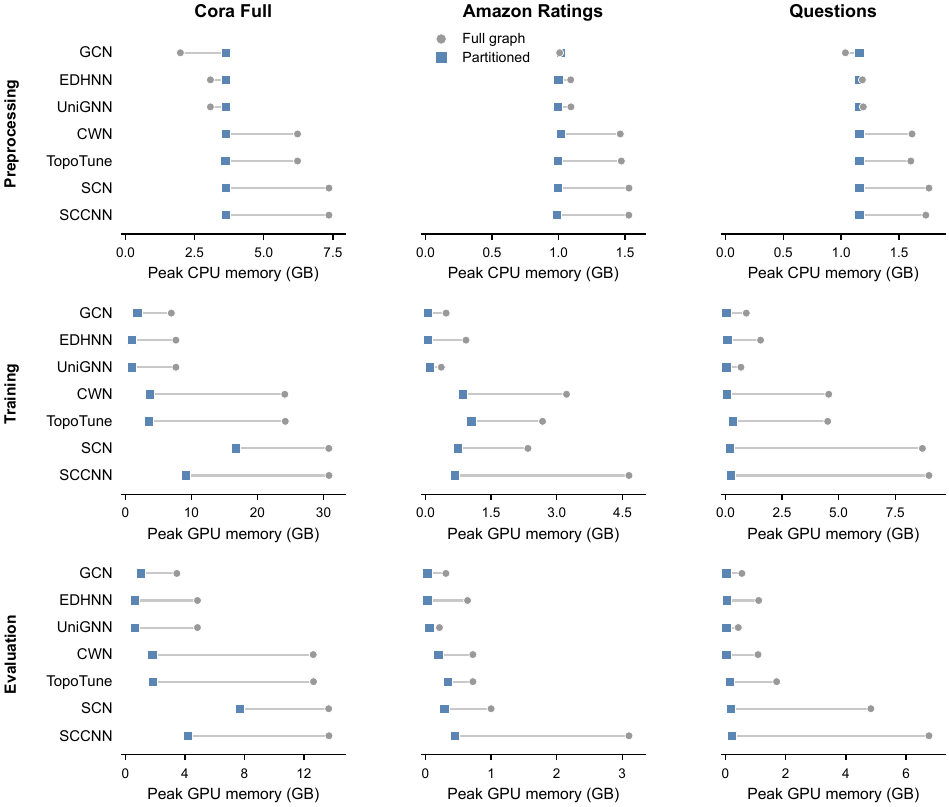}
    \caption{Cluster-TNN consistently reduces peak GPU memory during training and evaluation, whereas preprocessing memory varies by configuration and can increase when partition construction exceeds the cost of full-graph lifting. Values are means across five seeds.}
    \label{fig:appendix_memory_runtime_other_models}
\end{figure*}
\FloatBarrier

Across the displayed comparisons, \our execution lowers peak GPU memory during both training and evaluation, whereas preprocessing CPU memory varies by configuration. These phase-specific measurements locate where memory pressure occurs. The next subsection reports elapsed time and peak CPU and GPU memory over a complete run.

\subsection{End-to-end cost}
\label{subsec:tdl_scalability_end_to_end}

End-to-end scalability has two distinct empirical dimensions: elapsed time across workflow phases and the peak CPU and GPU memory reached anywhere during a run. We report them separately because \our execution can reduce peak memory while increasing repeated lifting, validation, or inference time.

\subsubsection{End-to-end runtime}
\label{subsec:tdl_scalability_runtime}

An end-to-end comparison must account for when each method incurs its costs. Full-domain execution constructs the lifted domain once before training, whereas \our constructs the graph partition once and lifts mini-batches repeatedly. Table~\ref{tab:tdl_end_to_end_runtime} reports preprocessing, training including validation passes, final validation and test inference, and total runtime for the $21$ matched model-dataset comparisons, together with partitioned-only results on Coauthor Physics, Reddit, and OGBN Products. Table~\ref{tab:tdl_preprocessing_cost} reports the separate full-domain preprocessing measurements.

{
\scriptsize
\setlength{\tabcolsep}{3.2pt}
\renewcommand{\arraystretch}{1.02}
\setlength{\LTcapwidth}{\textwidth}
\makeatletter
\patchcmd{\LT@makecaption}{\reset@font}{\reset@font\normalsize}{}{}
\makeatother
\begin{longtable}{@{}lllcc*{4}{r}@{}}
\caption{End-to-end runtime in seconds under full-graph and \our settings. Preprocessing, training, and final evaluation are reported separately. Training is the complete fit call, including periodic validation. Final evaluation spans the end of fitting through checkpoint validation and test inference. Total is the sum of these non-overlapping components. For \our inference, we use the ensemble protocol and average predictions over ten cluster recombinations. Values are mean $\pm$ sample standard deviation in seconds across five seeds.}
\label{tab:tdl_end_to_end_runtime}\\
\toprule
\textbf{Dataset} & \textbf{Model} & \textbf{Setting} & $\boldsymbol{K}$ & $\boldsymbol{q}$ &
\textbf{Preprocessing} & \textbf{Train} & \makecell{\textbf{Final}\\ \textbf{evaluation}} & \textbf{Total} \\
\midrule
\endfirsthead
\multicolumn{9}{c}{\tablename~\thetable{} continued} \\
\toprule
\textbf{Dataset} & \textbf{Model} & \textbf{Setting} & $\boldsymbol{K}$ & $\boldsymbol{q}$ &
\textbf{Preprocessing} & \textbf{Train} & \makecell{\textbf{Final}\\ \textbf{evaluation}} & \textbf{Total} \\
\midrule
\endhead
\midrule
\multicolumn{9}{r}{Continued on next page} \\
\endfoot
\bottomrule
\endlastfoot

& GCN & Full-graph & - & - & $4.0 \pm 2.5$ & $222.3 \pm 57.4$ & $3.4 \pm 0.2$ & $229.7 \pm 57.5$ \\*
& & \our & $32$ & $8$ & $11.2 \pm 1.1$ & $83.7 \pm 8.8$ & $47.2 \pm 0.5$ & $142.1 \pm 8.8$ \\*
 & EDHNN & Full-graph & - & - & $8.0 \pm 0.8$ & $438.7 \pm 114.3$ & $7.2 \pm 0.3$ & $453.9 \pm 113.8$ \\*
& & \our & $32$ & $4$ & $11.2 \pm 1.1$ & $180.5 \pm 40.3$ & $48.7 \pm 0.9$ & $240.3 \pm 40.4$ \\*
 & UniGNN & Full-graph & - & - & $8.4 \pm 0.6$ & $343.8 \pm 71.9$ & $7.1 \pm 0.1$ & $359.3 \pm 71.8$ \\*
& & \our & $32$ & $4$ & $11.2 \pm 1.1$ & $150.7 \pm 21.9$ & $50.4 \pm 1.0$ & $212.2 \pm 22.3$ \\*
Cora Full & CWN & Full-graph & - & - & $18.7 \pm 1.8$ & $909.6 \pm 117.6$ & $14.6 \pm 0.5$ & $942.8 \pm 118.9$ \\*
& & \our & $32$ & $4$ & $11.2 \pm 1.1$ & $776.2 \pm 134.2$ & $272.4 \pm 8.9$ & $1059.8 \pm 135.7$ \\*
 & TopoTune & Full-graph & - & - & $19.4 \pm 1.4$ & $401.1 \pm 53.1$ & $14.4 \pm 0.4$ & $434.9 \pm 51.9$ \\*
& & \our & $64$ & $8$ & $11.4 \pm 2.0$ & $1211.6 \pm 316.8$ & $312.7 \pm 10.6$ & $1535.8 \pm 309.1$ \\*
 & SCN & Full-graph & - & - & $22.0 \pm 4.2$ & $473.5 \pm 97.5$ & $17.0 \pm 0.9$ & $512.5 \pm 100.7$ \\*
& & \our & $32$ & $16$ & $11.2 \pm 1.1$ & $562.9 \pm 37.0$ & $405.2 \pm 25.6$ & $979.3 \pm 38.4$ \\*
 & SCCNN & Full-graph & - & - & $21.6 \pm 0.9$ & $458.9 \pm 22.7$ & $17.3 \pm 0.8$ & $497.9 \pm 22.6$ \\*
& & \our & $32$ & $8$ & $11.2 \pm 1.1$ & $627.1 \pm 118.0$ & $455.6 \pm 64.4$ & $1093.9 \pm 173.4$ \\
\midrule
& GCN & Full-graph & - & - & $2.6 \pm 0.8$ & $81.3 \pm 16.6$ & $1.3 \pm 0.2$ & $85.2 \pm 15.8$ \\*
& & \our & $64$ & $4$ & $2.6 \pm 0.9$ & $29.0 \pm 8.7$ & $6.2 \pm 0.4$ & $37.8 \pm 8.9$ \\*
 & EDHNN & Full-graph & - & - & $3.4 \pm 0.3$ & $143.9 \pm 40.5$ & $1.5 \pm 0.1$ & $148.8 \pm 40.3$ \\*
& & \our & $128$ & $4$ & $1.8 \pm 0.1$ & $100.5 \pm 23.0$ & $12.9 \pm 2.7$ & $115.3 \pm 24.6$ \\*
 & UniGNN & Full-graph & - & - & $3.5 \pm 0.2$ & $119.7 \pm 54.2$ & $1.6 \pm 0.0$ & $124.8 \pm 54.1$ \\*
& & \our & $32$ & $8$ & $1.8 \pm 0.2$ & $26.8 \pm 6.8$ & $6.0 \pm 0.2$ & $34.7 \pm 6.9$ \\*
Amazon Ratings & CWN & Full-graph & - & - & $10.6 \pm 1.6$ & $150.7 \pm 19.5$ & $2.3 \pm 0.1$ & $163.5 \pm 19.2$ \\*
& & \our & $64$ & $16$ & $2.6 \pm 0.9$ & $849.4 \pm 112.4$ & $84.0 \pm 8.9$ & $936.1 \pm 105.9$ \\*
 & TopoTune & Full-graph & - & - & $13.4 \pm 1.3$ & $68.3 \pm 11.9$ & $2.4 \pm 0.2$ & $84.1 \pm 12.4$ \\*
& & \our & $32$ & $8$ & $1.8 \pm 0.2$ & $561.2 \pm 108.0$ & $120.7 \pm 10.3$ & $683.7 \pm 112.1$ \\*
 & SCN & Full-graph & - & - & $7.6 \pm 0.3$ & $240.0 \pm 41.4$ & $2.4 \pm 0.1$ & $250.0 \pm 41.5$ \\*
& & \our & $32$ & $8$ & $1.8 \pm 0.2$ & $572.4 \pm 180.9$ & $61.2 \pm 3.2$ & $635.4 \pm 180.0$ \\*
 & SCCNN & Full-graph & - & - & $7.7 \pm 0.3$ & $209.5 \pm 47.8$ & $2.3 \pm 0.1$ & $219.5 \pm 47.8$ \\*
& & \our & $32$ & $4$ & $1.8 \pm 0.2$ & $525.1 \pm 89.3$ & $76.4 \pm 5.2$ & $603.3 \pm 93.8$ \\
\midrule
& GCN & Full-graph & - & - & $4.6 \pm 2.1$ & $164.5 \pm 29.2$ & $1.3 \pm 0.1$ & $170.4 \pm 28.9$ \\*
& & \our & $500$ & $20$ & $3.8 \pm 0.4$ & $42.0 \pm 8.9$ & $7.0 \pm 0.1$ & $52.9 \pm 9.1$ \\*
 & EDHNN & Full-graph & - & - & $4.0 \pm 0.5$ & $203.1 \pm 42.2$ & $1.7 \pm 0.1$ & $208.8 \pm 42.4$ \\*
& & \our & $500$ & $50$ & $3.8 \pm 0.4$ & $19.5 \pm 4.7$ & $7.1 \pm 0.2$ & $30.4 \pm 5.0$ \\*
 & UniGNN & Full-graph & - & - & $3.8 \pm 0.3$ & $260.0 \pm 10.0$ & $1.7 \pm 0.1$ & $265.5 \pm 10.2$ \\*
& & \our & $1000$ & $40$ & $4.2 \pm 0.1$ & $23.2 \pm 11.6$ & $7.3 \pm 0.2$ & $34.8 \pm 11.6$ \\*
Questions & CWN & Full-graph & - & - & $9.8 \pm 0.4$ & $68.4 \pm 23.6$ & $2.5 \pm 0.1$ & $80.7 \pm 23.8$ \\*
& & \our & $1000$ & $40$ & $4.2 \pm 0.1$ & $116.4 \pm 31.1$ & $22.8 \pm 0.4$ & $143.4 \pm 30.9$ \\*
 & TopoTune & Full-graph & - & - & $12.1 \pm 1.9$ & $170.3 \pm 21.6$ & $2.8 \pm 0.2$ & $185.2 \pm 20.8$ \\*
& & \our & $500$ & $50$ & $3.8 \pm 0.4$ & $226.6 \pm 61.0$ & $32.9 \pm 1.0$ & $263.4 \pm 61.0$ \\*
 & SCN & Full-graph & - & - & $10.2 \pm 0.3$ & $378.6 \pm 119.1$ & $2.8 \pm 0.1$ & $391.6 \pm 119.1$ \\*
& & \our & $500$ & $50$ & $3.8 \pm 0.4$ & $107.6 \pm 9.8$ & $30.6 \pm 1.5$ & $142.0 \pm 11.2$ \\*
 & SCCNN & Full-graph & - & - & $10.2 \pm 0.5$ & $131.8 \pm 14.5$ & $2.9 \pm 0.2$ & $144.9 \pm 14.7$ \\*
& & \our & $500$ & $30$ & $3.8 \pm 0.4$ & $91.5 \pm 9.9$ & $36.1 \pm 1.5$ & $131.5 \pm 10.5$ \\
\midrule
 & GCN & \our & $500$ & $20$ & $10.4 \pm 1.1$ & $126.2 \pm 52.2$ & $91.4 \pm 1.4$ & $228.1 \pm 51.4$ \\*
 & EDHNN & \our & $2\,000$ & $30$ & $11.2 \pm 0.1$ & $284.2 \pm 56.8$ & $55.3 \pm 9.1$ & $350.6 \pm 61.6$ \\*
 & UniGNN & \our & $2\,000$ & $20$ & $11.2 \pm 0.1$ & $258.4 \pm 52.6$ & $42.9 \pm 2.2$ & $312.5 \pm 52.7$ \\*
Coauthor Physics & CWN & \our & $2\,000$ & $20$ & $11.2 \pm 0.1$ & $316.7 \pm 114.0$ & $80.3 \pm 6.6$ & $408.2 \pm 110.9$ \\*
 & TopoTune & \our & $2\,000$ & $20$ & $11.2 \pm 0.1$ & $263.4 \pm 38.4$ & $86.3 \pm 6.3$ & $360.9 \pm 32.9$ \\*
 & SCN & \our & $2\,000$ & $20$ & $11.2 \pm 0.1$ & $339.0 \pm 113.2$ & $96.8 \pm 7.0$ & $447.0 \pm 111.2$ \\*
 & SCCNN & \our & $2\,000$ & $20$ & $11.2 \pm 0.1$ & $599.8 \pm 215.2$ & $161.3 \pm 17.6$ & $772.2 \pm 211.1$ \\
\midrule
 & GCN & \our & $7\,000$ & $25$ & $500.4 \pm 3.7$ & $449.7 \pm 61.9$ & $106.3 \pm 3.1$ & $1056 \pm 62$ \\*
 & EDHNN & \our & $7\,000$ & $20$ & $500.4 \pm 3.7$ & $1165 \pm 255$ & $170.8 \pm 4.6$ & $1836 \pm 253$ \\*
 & UniGNN & \our & $7\,000$ & $25$ & $500.4 \pm 3.7$ & $967.1 \pm 208.2$ & $116.8 \pm 3.4$ & $1584 \pm 202$ \\*
Reddit & CWN & \our & $7\,000$ & $20$ & $500.4 \pm 3.7$ & $27148 \pm 1703$ & $1659 \pm 26$ & $29307 \pm 1703$ \\*
 & TopoTune & \our & $7\,000$ & $20$ & $500.4 \pm 3.7$ & $18696 \pm 2834$ & $2617 \pm 35$ & $21813 \pm 2858$ \\*
 & SCN & \our & $7\,000$ & $25$ & $500.4 \pm 3.7$ & $26897 \pm 981$ & $4342 \pm 95$ & $31739 \pm 950$ \\*
 & SCCNN & \our & $7\,000$ & $20$ & $500.4 \pm 3.7$ & $28571 \pm 8714$ & $5145 \pm 128$ & $34217 \pm 8659$ \\
\midrule
 & GCN & \our & $15\,000$ & $25$ & $245.2 \pm 5.6$ & $3665 \pm 153$ & $2729 \pm 24$ & $6639 \pm 161$ \\*
 & EDHNN & \our & $20\,000$ & $25$ & $263.7 \pm 3.5$ & $4169 \pm 1019$ & $3229 \pm 18$ & $7662 \pm 1010$ \\*
 & UniGNN & \our & $15\,000$ & $20$ & $245.2 \pm 5.6$ & $5170 \pm 195$ & $2691 \pm 10$ & $8106 \pm 202$ \\*
OGBN Products & CWN & \our & $15\,000$ & $20$ & $245.2 \pm 5.6$ & $234389 \pm 1997$ & $17057 \pm 140$ & $251691 \pm 2042$ \\*
 & TopoTune & \our & $15\,000$ & $25$ & $245.2 \pm 5.6$ & $207711 \pm 63892$ & $25215 \pm 398$ & $233172 \pm 63530$ \\*
 & SCN & \our & $15\,000$ & $20$ & $245.2 \pm 5.6$ & $275955 \pm 3781$ & $23405 \pm 1106$ & $299605 \pm 3287$ \\*
 & SCCNN & \our & $15\,000$ & $20$ & $245.2 \pm 5.6$ & $345176 \pm 3392$ & $19851 \pm 491$ & $365272 \pm 3407$ \\
\end{longtable}
}

For configuration $i$, let $\bar t_i^{\mathrm{FG}}$ and $\bar t_i^{\mathrm{P}}$ denote the mean total runtimes under full-graph and partitioned execution reported in Table~\ref{tab:tdl_end_to_end_runtime}. Its relative change is $\Delta_i=(\bar t_i^{\mathrm{P}}-\bar t_i^{\mathrm{FG}})/\bar t_i^{\mathrm{FG}}\times100\%$, with negative values indicating reductions. We calculate changes in individual runtime components analogously. The reported means and medians give each configuration equal weight within the relevant subset.

The preprocessing component is shorter under \our partitioned execution in $16$ comparisons, longer in four, and unchanged in one. Among the $16$ decreases, the mean and median reductions are $53.9\%$ and $53.1\%$. Among the four increases, the mean and median increases are $66.0\%$ and $36.7\%$. These partitioned preprocessing measurements include the one-off graph-partitioning cost. Training time is lower in $12$ comparisons by $60.2\%$ on average (median $63.3\%$) and higher in nine by $203.9\%$ on average (median $138.5\%$). Mean total runtime is lower for all matched GCN and hypergraph configurations, higher for all cellular configurations, and lower for simplicial configurations only on Questions. This pattern describes the evaluated configurations rather than an intrinsic runtime ordering of the domains.

To complement these comparison-level summaries, Table~\ref{tab:tdl_pooled_runtime} pools mean total runtimes within each subset. For a set of matched configurations $\mathcal I$, the pooled relative change is
{
\begin{equation*}
\Delta_{\mathrm{pooled}}(\mathcal I)
=
\frac{\sum_{i\in\mathcal I}\bar t_i^{\mathrm{P}}
-\sum_{i\in\mathcal I}\bar t_i^{\mathrm{FG}}}
{\sum_{i\in\mathcal I}\bar t_i^{\mathrm{FG}}}
\times100\%.
\end{equation*}
}
This aggregate weights each configuration's relative change by its full-graph runtime. Across the matched benchmark suite, the additional runtime of slower configurations outweighs the savings of faster configurations. It describes the evaluated suite's combined runtime rather than the change for a typical configuration.

\begin{table}[!htbp]
\centering
\caption{Pooled total-runtime changes calculated from the configuration-level means in Table~\ref{tab:tdl_end_to_end_runtime}. Each sum combines the five-seed mean total runtime of the configurations in the indicated subset. Subsets are defined by the direction of the total-runtime change. Partitioned-only configurations are excluded.}
\label{tab:tdl_pooled_runtime}
\small
\begin{tabular}{@{}lrrrr@{}}
\toprule
\textbf{Subset} & \textbf{Count} & \makecell{\textbf{Full-graph}\\\textbf{sum (s)}} & \makecell{\textbf{\our}\\\textbf{sum (s)}} & \makecell{\textbf{Pooled}\\\textbf{change}} \\
\midrule
Shorter partitioned runtime & 11 & $2\,582.9$ & $1\,174.0$ & $-54.5\%$ \\
Longer partitioned runtime & 10 & $3\,371.1$ & $7\,934.1$ & $+135.4\%$ \\
All matched configurations & 21 & $5\,954.0$ & $9\,108.1$ & $+53.0\%$ \\
\bottomrule
\end{tabular}
\end{table}

CWN on Cora Full illustrates why training time and total runtime must be distinguished. Under partitioned execution, mean training time, including periodic validation, decreases from \(909.6\) to \(776.2\) seconds, a reduction of \(14.7\%\). However, final evaluation increases from \(14.6\) to \(272.4\) seconds, resulting in a \(12.4\%\) increase in total runtime. In this configuration, the additional evaluation cost outweighs the training-time reduction.

Across the $21$ matched comparisons, final evaluation is longer under partitioned execution in every case, with a median partitioned-to-full-graph time ratio of $11.75\times$. These measurements include checkpoint validation and test inference, using ten-recombination ensemble mini-batch inference for \our. This measured difference includes both repeated evaluation and batch-construction costs. It is not an intrinsic lower bound for partitioned inference.

\paragraph{Validation caching convention.}

In these experiments, we do not cache lifted validation batches. We reconstruct and lift each validation batch online.
We include validation-batch construction costs in the training component for validation during fitting
and in the final-evaluation component for checkpoint validation.
Final evaluation includes checkpoint validation and test inference.

\subsubsection{Run-wide peak memory}
\label{subsec:tdl_scalability_run_wide_memory}

Unlike Table~\ref{tab:tdl_preprocessing_cost}, which isolates full-domain preprocessing, Table~\ref{tab:tdl_end_to_end_memory} and \ref{tab:tdl_partitioned_memory} report peak CPU and GPU memory over the complete execution of each full-graph and \our run. CPU memory is the maximum sampled RSS of the main Python process, while GPU memory is the maximum CUDA memory allocated by the process.


\begin{table}[h]
\centering
\caption{Run-wide peak CPU and GPU memory under full-graph and \our execution. Values are mean $\pm$ sample standard deviation in decimal GB across five seeds. CPU values are main-process RSS peaks. GPU values are CUDA allocated-memory peaks.}
\label{tab:tdl_end_to_end_memory}
\scriptsize
\setlength{\tabcolsep}{3.2pt}
\renewcommand{\arraystretch}{1.05}
\resizebox{\textwidth}{!}{%
\begin{tabular}{llrr*{4}{r}}
\toprule
& & & & \multicolumn{2}{c}{\textbf{Full-graph}} & \multicolumn{2}{c}{\textbf{\our}} \\
\cmidrule(lr){5-6}\cmidrule(lr){7-8}
\textbf{Dataset} & \textbf{Model} & $\boldsymbol{K}$ & $\boldsymbol{q}$ &
\makecell[c]{\textbf{Peak CPU}\\\textbf{memory (GB)}} &
\makecell[c]{\textbf{Peak GPU}\\\textbf{memory (GB)}} &
\makecell[c]{\textbf{Peak CPU}\\\textbf{memory (GB)}} &
\makecell[c]{\textbf{Peak GPU}\\\textbf{memory (GB)}} \\
\midrule
 & GCN & $32$ & $8$ & $2.96 \pm 0.01$ & $6.95 \pm 0.00$ & $3.64 \pm 0.03$ & $1.80 \pm 0.01$ \\
 & EDHNN & $32$ & $4$ & $3.68 \pm 0.02$ & $7.65 \pm 0.00$ & $3.64 \pm 0.03$ & $1.01 \pm 0.00$ \\
 & UniGNN & $32$ & $4$ & $3.71 \pm 0.01$ & $7.65 \pm 0.00$ & $3.64 \pm 0.03$ & $1.02 \pm 0.00$ \\
Cora Full & CWN & $32$ & $4$ & $7.63 \pm 0.01$ & $24.14 \pm 0.00$ & $3.89 \pm 0.07$ & $3.65 \pm 0.09$ \\
 & TopoTune & $64$ & $8$ & $7.61 \pm 0.03$ & $24.21 \pm 0.00$ & $3.98 \pm 0.12$ & $3.53 \pm 0.09$ \\
 & SCN & $32$ & $16$ & $8.47 \pm 0.02$ & $30.61 \pm 0.00$ & $8.33 \pm 0.06$ & $16.42 \pm 0.23$ \\
 & SCCNN & $32$ & $8$ & $8.47 \pm 0.03$ & $30.62 \pm 0.00$ & $5.74 \pm 0.05$ & $8.92 \pm 0.26$ \\
\midrule
 & GCN & $64$ & $4$ & $1.60 \pm 0.01$ & $0.47 \pm 0.00$ & $1.66 \pm 0.01$ & $0.05 \pm 0.00$ \\
 & EDHNN & $128$ & $4$ & $1.66 \pm 0.01$ & $0.93 \pm 0.00$ & $1.68 \pm 0.01$ & $0.05 \pm 0.00$ \\
 & UniGNN & $32$ & $8$ & $1.66 \pm 0.01$ & $0.36 \pm 0.00$ & $1.72 \pm 0.02$ & $0.10 \pm 0.00$ \\
Amazon Ratings & CWN & $64$ & $16$ & $2.01 \pm 0.01$ & $3.22 \pm 0.00$ & $2.00 \pm 0.02$ & $0.86 \pm 0.01$ \\
 & TopoTune & $32$ & $8$ & $2.03 \pm 0.02$ & $2.68 \pm 0.00$ & $2.19 \pm 0.01$ & $1.05 \pm 0.01$ \\
 & SCN & $32$ & $8$ & $2.11 \pm 0.00$ & $2.10 \pm 0.00$ & $2.06 \pm 0.04$ & $0.65 \pm 0.00$ \\
 & SCCNN & $32$ & $4$ & $2.13 \pm 0.01$ & $4.38 \pm 0.00$ & $2.16 \pm 0.02$ & $0.62 \pm 0.00$ \\
\midrule
 & GCN & $500$ & $20$ & $1.70 \pm 0.01$ & $0.92 \pm 0.00$ & $1.69 \pm 0.02$ & $0.05 \pm 0.00$ \\
 & EDHNN & $500$ & $50$ & $1.76 \pm 0.01$ & $1.55 \pm 0.00$ & $1.73 \pm 0.01$ & $0.08 \pm 0.00$ \\
 & UniGNN & $1000$ & 40 & $1.76 \pm 0.01$ & $0.69 \pm 0.00$ & $1.72 \pm 0.01$ & $0.05 \pm 0.00$ \\
Questions & CWN & $1000$ & $40$ & $2.25 \pm 0.03$ & $4.56 \pm 0.00$ & $1.73 \pm 0.01$ & $0.06 \pm 0.00$ \\
 & TopoTune & $500$ & $50$ & $2.27 \pm 0.01$ & $4.52 \pm 0.00$ & $1.87 \pm 0.02$ & $0.33 \pm 0.01$ \\
 & SCN & $500$ & $50$ & $2.31 \pm 0.02$ & $3.40 \pm 0.00$ & $1.81 \pm 0.02$ & $0.14 \pm 0.00$ \\
 & SCCNN & $500$ & $30$ & $2.30 \pm 0.01$ & $6.10 \pm 0.00$ & $1.88 \pm 0.01$ & $0.18 \pm 0.00$ \\
\bottomrule
\end{tabular}}
\end{table}

\begin{table}[!htbp]
\centering
\setlength{\abovecaptionskip}{4pt}
\caption{\our run-wide peak memory on Coauthor Physics, Reddit, and OGBN Products, using the measurement definitions of Table~\ref{tab:tdl_end_to_end_memory}. CPU values are main-process RSS peaks. GPU values are CUDA allocated-memory peaks. Both are reported in decimal GB.}
\label{tab:tdl_partitioned_memory}
\footnotesize
\setlength{\tabcolsep}{4pt}
\renewcommand{\arraystretch}{1.02}
\begin{tabular*}{\textwidth}{@{\extracolsep{\fill}}llrr*{2}{r}@{}}
\toprule
\textbf{Dataset} & \textbf{Model} & $\boldsymbol{K}$ & $\boldsymbol{q}$ & \makecell{\textbf{Peak CPU}\\\textbf{memory (GB)}} & \makecell{\textbf{Peak GPU}\\\textbf{memory (GB)}} \\
\midrule
 & GCN & $500$ & $20$ & $6.68 \pm 0.03$ & $0.51 \pm 0.00$ \\
 & EDHNN & $2\,000$ & $30$ & $6.07 \pm 0.05$ & $0.24 \pm 0.00$ \\
 & UniGNN & $2\,000$ & $20$ & $6.07 \pm 0.05$ & $0.16 \pm 0.00$ \\
Coauthor Physics & CWN & $2\,000$ & $20$ & $6.07 \pm 0.05$ & $0.38 \pm 0.01$ \\
 & TopoTune & $2\,000$ & $20$ & $6.07 \pm 0.05$ & $0.38 \pm 0.01$ \\
 & SCN & $2\,000$ & $20$ & $6.07 \pm 0.05$ & $0.67 \pm 0.05$ \\
 & SCCNN & $2\,000$ & $20$ & $6.07 \pm 0.05$ & $0.72 \pm 0.04$ \\
\midrule
 & GCN & $7\,000$ & $25$ & $11.81 \pm 0.02$ & $0.07 \pm 0.00$ \\
 & EDHNN & $7\,000$ & $20$ & $11.81 \pm 0.02$ & $0.13 \pm 0.01$ \\
 & UniGNN & $7\,000$ & $25$ & $11.81 \pm 0.02$ & $0.04 \pm 0.00$ \\
Reddit & CWN & $7\,000$ & $20$ & $11.81 \pm 0.02$ & $0.51 \pm 0.04$ \\
 & TopoTune & $7\,000$ & $20$ & $11.81 \pm 0.02$ & $0.57 \pm 0.04$ \\
 & SCN & $7\,000$ & $25$ & $11.81 \pm 0.02$ & $6.84 \pm 1.91$ \\
 & SCCNN & $7\,000$ & $20$ & $11.81 \pm 0.02$ & $7.45 \pm 2.47$ \\
\midrule
 & GCN & $15\,000$ & $25$ & $12.88 \pm 0.02$ & $0.19 \pm 0.00$ \\
 & EDHNN & $20\,000$ & $25$ & $13.19 \pm 0.02$ & $0.38 \pm 0.01$ \\
 & UniGNN & $15\,000$ & $20$ & $12.87 \pm 0.02$ & $0.05 \pm 0.00$ \\
OGBN Products & CWN & $15\,000$ & $20$ & $12.87 \pm 0.02$ & $1.45 \pm 0.02$ \\
 & TopoTune & $15\,000$ & $25$ & $12.87 \pm 0.02$ & $1.46 \pm 0.02$ \\
 & SCN & $15\,000$ & $20$ & $12.87 \pm 0.02$ & $5.42 \pm 0.52$ \\
 & SCCNN & $15\,000$ & $20$ & $12.87 \pm 0.02$ & $11.94 \pm 1.10$ \\
\bottomrule
\end{tabular*}
\end{table}

Across the 21 matched full-graph and partitioned comparisons in Table~\ref{tab:tdl_end_to_end_memory}, run-wide peak CPU memory is lower under partitioned execution in 15 comparisons by $14.8\%$ on average (median $2.4\%$) and higher in six by $6.8\%$ on average (median $3.7\%$). CPU memory therefore varies by configuration rather than exhibiting the uniform decrease observed for GPU memory. Peak GPU memory is lower in all 21 comparisons by $83.2\%$ on average (median $86.7\%$).

Table~\ref{tab:tdl_partitioned_memory} extends the measurements to Coauthor Physics, Reddit, and OGBN Products, reporting absolute memory requirements where matched full-graph results are unavailable. The larger-dataset results show that large input graphs can be processed with modest GPU memory requirements under \our execution. On Reddit, mean peak GPU memory ranges from $0.04$~GB for UniGNN to $7.45$~GB for SCCNN. On OGBN Products, TopoTune and SCN have mean peaks of $1.46$ and $5.42$~GB, respectively. Memory requirements therefore depend strongly on the sampled graph context and model configuration, rather than input graph size alone.

\clearpage
\subsection{Structural exposure and statistical efficiency}
\label{subsec:tdl_scalability_statistical}

Memory savings are only one consequence of batch-before-lift execution. Because each update is formed from an induced graph batch, the method also changes which full-graph structures can occur together and how often the model encounters them. We refer to this availability across cluster recombinations as structural exposure.

An epoch under \our visits all active training clusters once and groups them into mini-batches of at most \(q\) clusters. Its number of optimization steps is therefore the number of active clusters divided into groups of size at most \(q\). The theory in Appendix~\ref{sec:struct_recon} considers the simplified case in which all \(K\) clusters are active and \(q\mid K\), giving exactly \(M=K/q\) steps. Increasing \(q\) uses more memory per mini-batch but can expose more cross-cluster higher-order structures. Consequently, equal epoch counts need not represent equal update counts, wall-clock time, or structural information.

The recovery theory describes when structures and messages are available. It does not establish convergence to the model trained on the full lifted domain. We therefore assess statistical efficiency through validation trajectories, optimization steps, and test performance. The primary validation metric selects the checkpoint for each run. Test predictions from that checkpoint use full-graph inference for full-graph setting and ten-recombination ensemble inference for \our setting.

Table~\ref{tab:tdl_statistical_efficiency} and \ref{tab:tdl_partitioned_statistical_efficiency} summarize each validation trajectory by the checkpoint attaining the best primary validation metric. For each run, we associate this checkpoint with its completed optimizer-step count and its final test result.

\begin{table*}[!htbp]
\centering
\caption{Statistical-efficiency for full-graph and \our training. $E^{*}$ is the epoch selected by the validation metric, and $U^{*}$ is the number of optimizer updates completed at that checkpoint. Validation and test results are reported. The metric is accuracy for Cora Full and Amazon Ratings and AUROC for Questions. Full-graph uses one full-domain test pass, whereas \our uses the ten-pass ensemble protocol. Each entry is the mean $\pm$ sample standard deviation across five seeds.}
\label{tab:tdl_statistical_efficiency}
\scriptsize
\setlength{\tabcolsep}{2.4pt}
\renewcommand{\arraystretch}{1.04}
\resizebox{\textwidth}{!}{%
\begin{tabular}{llrr*{4}{r}*{4}{r}}
\toprule
& & & & \multicolumn{4}{c}{\textbf{Full-graph}} & \multicolumn{4}{c}{\textbf{\our}} \\
\cmidrule(lr){5-8}\cmidrule(lr){9-12}
\textbf{Dataset} & \textbf{Model} & $\boldsymbol{K}$ & $\boldsymbol{q}$ & $\boldsymbol{E^{*}}$ & $\boldsymbol{U^{*}}$ & \textbf{Val.} & \textbf{Test} & $\boldsymbol{E^{*}}$ & $\boldsymbol{U^{*}}$ & \textbf{Val.} & \textbf{Test} \\
\midrule
\multirow{7}{*}{Cora Full} & GCN & 32 & 8 & $94 \pm 30$ & $94 \pm 30$ & $70.02 \pm 1.23$ & $70.18 \pm 0.42$ & $44 \pm 10$ & $176 \pm 39$ & $68.42 \pm 1.63$ & $69.19 \pm 0.66$ \\
 & EDHNN & $32$ & $4$ & $82 \pm 29$ & $82 \pm 29$ & $70.07 \pm 0.97$ & $69.74 \pm 0.89$ & $59 \pm 19$ & $472 \pm 153$ & $66.89 \pm 1.15$ & $67.06 \pm 1.19$ \\
 & UniGNN & $32$ & $4$ & $58 \pm 18$ & $58 \pm 18$ & $68.92 \pm 1.32$ & $68.44 \pm 0.79$ & $44 \pm 10$ & $352 \pm 82$ & $67.43 \pm 1.23$ & $67.61 \pm 0.56$ \\
 & CWN & $32$ & $4$ & $83 \pm 12$ & $83 \pm 12$ & $61.29 \pm 0.82$ & $60.50 \pm 0.60$ & $57 \pm 3$ & $456 \pm 22$ & $62.37 \pm 1.04$ & $62.66 \pm 1.11$ \\
 & TopoTune & $64$ & $8$ & $22 \pm 6$ & $22 \pm 6$ & $46.87 \pm 0.62$ & $46.40 \pm 0.47$ & $67 \pm 21$ & $536 \pm 171$ & $55.92 \pm 0.92$ & $56.45 \pm 1.42$ \\
 & SCN & $32$ & $16$ & $23 \pm 10$ & $23 \pm 10$ & $70.81 \pm 0.53$ & $70.45 \pm 0.35$ & $14 \pm 2$ & $28 \pm 5$ & $70.01 \pm 0.82$ & $70.32 \pm 0.46$ \\
 & SCCNN & $32$ & $8$ & $21 \pm 2$ & $21 \pm 2$ & $71.19 \pm 0.71$ & $70.81 \pm 0.58$ & $19 \pm 11$ & $76 \pm 43$ & $69.52 \pm 0.95$ & $70.20 \pm 0.29$ \\
\midrule
\multirow{7}{*}{\makecell[c]{Amazon\\Ratings}} & GCN & 64 & 4 & $120 \pm 29$ & $120 \pm 29$ & $49.36 \pm 1.39$ & $48.29 \pm 1.31$ & $75 \pm 31$ & $1200 \pm 493$ & $49.70 \pm 1.03$ & $49.62 \pm 0.96$ \\
 & EDHNN & $128$ & $4$ & $181 \pm 58$ & $181 \pm 58$ & $48.86 \pm 1.08$ & $48.41 \pm 1.18$ & $134 \pm 29$ & $4288 \pm 936$ & $51.71 \pm 1.21$ & $51.41 \pm 0.72$ \\
 & UniGNN & $32$ & $8$ & $151 \pm 82$ & $151 \pm 82$ & $47.35 \pm 1.32$ & $47.05 \pm 1.76$ & $69 \pm 20$ & $276 \pm 82$ & $47.45 \pm 0.95$ & $47.24 \pm 0.70$ \\
 & CWN & $64$ & $16$ & $106 \pm 20$ & $106 \pm 20$ & $45.13 \pm 1.02$ & $44.07 \pm 0.64$ & $78 \pm 14$ & $312 \pm 58$ & $46.63 \pm 0.97$ & $46.25 \pm 0.67$ \\
 & TopoTune & $32$ & $8$ & $32 \pm 10$ & $32 \pm 10$ & $43.41 \pm 1.00$ & $43.18 \pm 1.03$ & $21 \pm 9$ & $84 \pm 36$ & $43.60 \pm 0.85$ & $43.27 \pm 0.68$ \\
 & SCN & $32$ & $8$ & $174 \pm 37$ & $174 \pm 37$ & $50.36 \pm 0.74$ & $50.08 \pm 0.36$ & $78 \pm 34$ & $312 \pm 137$ & $50.99 \pm 1.01$ & $50.65 \pm 0.64$ \\
 & SCCNN & $32$ & $4$ & $141 \pm 37$ & $141 \pm 37$ & $51.33 \pm 0.83$ & $51.20 \pm 1.10$ & $67 \pm 10$ & $536 \pm 83$ & $51.45 \pm 0.71$ & $51.24 \pm 0.79$ \\
\midrule
\multirow{7}{*}{Questions} & GCN & $500$ & $20$ & $254 \pm 50$ & $254 \pm 50$ & $76.32 \pm 1.25$ & $76.69 \pm 1.18$ & $98 \pm 30$ & $2450 \pm 748$ & $72.85 \pm 1.66$ & $73.83 \pm 2.01$ \\
 & EDHNN & $500$ & $50$ & $211 \pm 52$ & $211 \pm 52$ & $76.54 \pm 1.27$ & $75.73 \pm 0.92$ & $42 \pm 20$ & $420 \pm 196$ & $75.78 \pm 1.01$ & $76.23 \pm 1.05$ \\
 & UniGNN & $1000$ & $40$ & $300 \pm 0$ & $300 \pm 0$ & $70.41 \pm 1.41$ & $69.44 \pm 1.68$ & $27 \pm 28$ & $675 \pm 694$ & $71.25 \pm 1.58$ & $71.09 \pm 0.98$ \\
 & CWN & $1000$ & $40$ & $26 \pm 18$ & $26 \pm 18$ & $68.56 \pm 2.56$ & $68.21 \pm 3.48$ & $27 \pm 14$ & $675 \pm 360$ & $71.62 \pm 0.99$ & $70.20 \pm 0.88$ \\
 & TopoTune & $500$ & $50$ & $95 \pm 17$ & $95 \pm 17$ & $69.98 \pm 1.88$ & $71.53 \pm 1.73$ & $54 \pm 22$ & $540 \pm 222$ & $71.97 \pm 1.90$ & $73.61 \pm 1.88$ \\
 & SCN & $500$ & $50$ & $236 \pm 90$ & $236 \pm 90$ & $73.96 \pm 1.29$ & $74.77 \pm 1.34$ & $11 \pm 2$ & $110 \pm 22$ & $71.96 \pm 1.26$ & $73.04 \pm 1.84$ \\
 & SCCNN & $500$ & $30$ & $56 \pm 8$ & $56 \pm 8$ & $76.26 \pm 1.15$ & $77.14 \pm 0.99$ & $6 \pm 2$ & $102 \pm 38$ & $73.13 \pm 1.50$ & $74.10 \pm 1.85$ \\
\bottomrule
\end{tabular}%
}
\end{table*}

\begin{table}[!htbp]
\centering
\setlength{\abovecaptionskip}{4pt}
\caption{\our statistical efficiency on Coauthor Physics, Reddit, and OGBN Products, using the checkpoint and inference definitions of Table~\ref{tab:tdl_statistical_efficiency}. $E^{*}$ is the selected epoch, $U^{*}$ the completed optimizer updates, and validation and test scores are accuracies (\%).}
\label{tab:tdl_partitioned_statistical_efficiency}
\footnotesize
\setlength{\tabcolsep}{4pt}
\renewcommand{\arraystretch}{1.02}
\begin{tabular*}{\textwidth}{@{\extracolsep{\fill}}llrr*{4}{r}@{}}
\toprule
\textbf{Dataset} & \textbf{Model} & $\boldsymbol{K}$ & $\boldsymbol{q}$ & $\boldsymbol{E^{*}}$ & $\boldsymbol{U^{*}}$ & \textbf{Val.} & \textbf{Test} \\
\midrule
 & GCN & $500$ & $20$ & $39 \pm 27$ & $975 \pm 681$ & $96.32 \pm 0.12$ & $96.68 \pm 0.08$ \\
 & EDHNN & $2\,000$ & $30$ & $58 \pm 20$ & $3886 \pm 1332$ & $96.13 \pm 0.11$ & $96.31 \pm 0.32$ \\
 & UniGNN & $2\,000$ & $20$ & $68 \pm 17$ & $6800 \pm 1681$ & $94.66 \pm 0.19$ & $94.97 \pm 0.37$ \\
Coauthor Physics & CWN & $2\,000$ & $20$ & $34 \pm 24$ & $3400 \pm 2434$ & $95.69 \pm 0.10$ & $95.80 \pm 0.24$ \\
 & TopoTune & $2\,000$ & $20$ & $18 \pm 7$ & $1800 \pm 671$ & $94.70 \pm 0.27$ & $95.22 \pm 0.20$ \\
 & SCN & $2\,000$ & $20$ & $32 \pm 20$ & $3200 \pm 1956$ & $95.84 \pm 0.20$ & $96.22 \pm 0.22$ \\
 & SCCNN & $2\,000$ & $20$ & $60 \pm 35$ & $6000 \pm 3500$ & $95.84 \pm 0.31$ & $96.00 \pm 0.25$ \\
\midrule
 & GCN & $7\,000$ & $25$ & $62 \pm 8$ & $17360 \pm 2123$ & $92.42 \pm 0.15$ & $92.62 \pm 0.10$ \\
 & EDHNN & $7\,000$ & $20$ & $134 \pm 40$ & $46900 \pm 14076$ & $91.73 \pm 0.75$ & $92.17 \pm 0.73$ \\
 & UniGNN & $7\,000$ & $25$ & $108 \pm 37$ & $30240 \pm 10458$ & $90.73 \pm 0.11$ & $91.28 \pm 0.16$ \\
Reddit & CWN & $7\,000$ & $20$ & $113 \pm 8$ & $39550 \pm 2654$ & $91.02 \pm 0.19$ & $91.29 \pm 0.18$ \\
 & TopoTune & $7\,000$ & $20$ & $33 \pm 8$ & $11550 \pm 2928$ & $89.12 \pm 0.14$ & $89.55 \pm 0.09$ \\
 & SCN & $7\,000$ & $25$ & $56 \pm 2$ & $15680 \pm 626$ & $91.44 \pm 0.14$ & $91.80 \pm 0.13$ \\
 & SCCNN & $7\,000$ & $20$ & $65 \pm 29$ & $22750 \pm 9977$ & $92.16 \pm 0.19$ & $92.45 \pm 0.25$ \\
\midrule
 & GCN & $15\,000$ & $25$ & $128 \pm 3$ & $76800 \pm 1643$ & $85.50 \pm 0.08$ & $85.81 \pm 0.04$ \\
 & EDHNN & $20\,000$ & $25$ & $95 \pm 28$ & $76000 \pm 22627$ & $86.27 \pm 0.28$ & $86.51 \pm 0.25$ \\
 & UniGNN & $15\,000$ & $20$ & $125 \pm 11$ & $93750 \pm 8385$ & $85.61 \pm 0.08$ & $85.87 \pm 0.07$ \\
OGBN Products & CWN & $15\,000$ & $20$ & $128 \pm 3$ & $96000 \pm 2054$ & $83.43 \pm 0.09$ & $83.93 \pm 0.06$ \\
 & TopoTune & $15\,000$ & $25$ & $61 \pm 22$ & $36600 \pm 13315$ & $83.09 \pm 0.48$ & $83.79 \pm 0.55$ \\
 & SCN & $15\,000$ & $20$ & $127 \pm 3$ & $95250 \pm 2054$ & $86.57 \pm 0.06$ & $87.03 \pm 0.05$ \\
 & SCCNN & $15\,000$ & $20$ & $167 \pm 3$ & $125250 \pm 2054$ & $88.14 \pm 0.04$ & $88.63 \pm 0.05$ \\
\bottomrule
\end{tabular*}
\end{table}

Based on the reported means, partitioned training reaches the selected checkpoint in fewer epochs in 19 of 21 comparisons, but uses more optimizer updates in 20. Because a partitioned epoch contains multiple updates, epoch counts alone do not establish greater optimization efficiency. They should be interpreted alongside optimizer updates, elapsed time, and predictive performance.

\newpage
\subsection{Solution paths and practical guidance}
\label{subsec:tdl_scalability_solution_paths}

The measurements above show a phase-specific trade-off rather than a uniform reduction in computational cost. Partitioned execution lowers peak GPU memory in all 21 direct comparisons, while preprocessing time, training time, and run-wide CPU memory change in both directions across configurations. Final evaluation is longer under the evaluated ten-recombination protocol, and statistical outcomes vary by dataset and model. A suitable remedy must therefore be matched to the stage that constitutes the binding bottleneck.

\paragraph{Alternative solution paths.}

Different remedies address different stages of the workflow. Restricting neighbourhood radius, simplex dimension, or retained cycle length can reduce the number of retained higher-order elements or incidences, but may discard useful structure. For the evaluated neighbourhood lifting, reducing the radius changes membership volume rather than the fixed count of one indexed hyperedge per node. For the evaluated cycle lifting, the length cap filters retained cells after a complete cycle basis is computed, so it need not reduce basis-construction time or temporary memory. Reducing feature width or numerical precision lowers feature storage. Computing higher-order features when needed can avoid retaining their complete matrices, but introduces repeated computation and does not by itself remove the need to provide the required connectivity.

When full-domain construction exceeds CPU memory, batch-before-lift, distributed construction, or out-of-core processing address the construction stage in different ways. Distributed construction spreads work and allocations across workers, while out-of-core processing moves part of the representation to secondary storage. Neither reduces the representation itself. When the domain can be constructed but model execution exceeds GPU memory, sampling within the higher-order domain becomes a viable alternative. When repeated local lifting dominates runtime, caching a fixed set of lifted batches can reduce reconstruction work at the cost of storage and a fixed collection of cluster combinations.

Published lift-then-batch methods provide concrete examples in the hypergraph setting. Telyatnikov et al.~\citep{telyatnikov2025hypergraph} sample hyperedges and then nodes within the sampled hyperedges, while FastHGNN~\citep{fasthgnn} performs layer-wise node and hyperedge sampling. These methods reduce model-execution cost after a hypergraph is available, but do not address the cost of constructing a lifted domain from a source graph. They also do not directly specify how to sample simplicial or cellular domains while preserving the face or boundary relations used by those models.

\paragraph{Batch-before-lift: scope and trade-offs.}

Batch-before-lift instead limits how much of the source graph is presented to the lifting and the TNN at one time. It is operationally applicable to graph-to-higher-order liftings that can be evaluated on induced subgraphs. This condition does not guarantee that locally lifted structures coincide with restrictions of the full-domain lift. For non-compatible liftings, local recomputation can alter the selected structures in addition to omitting structures whose supporting nodes do not appear together. Its principal costs are therefore repeated local lifting and reduced or altered structural exposure. The number of clusters grouped into a mini-batch controls part of this trade-off: larger batches can expose more cross-cluster structure, but require more CPU and GPU memory. Lift-then-batch methods address a complementary setting, particularly when data are observed natively as higher-order domains, but require sampling rules that preserve the relations used by the target hypergraph, simplicial, or cellular model. Standardized, intrinsically higher-order benchmark datasets remain comparatively scarce~\citep{position,rieck2026have}. Our implementation realizes batch-before-lift through \tb{}'s existing dataset, lifting, and model interfaces.

\paragraph{Reporting guidance.}

At minimum, scalable graph-supported TDL studies should report lifting-induced structural growth, phase-wise time and peak memory with the device and measurement scope identified, end-to-end runtime, and predictive results. Where applicable, they should also report partition and mini-batch parameters, caching and inference protocols, the number of cluster recombinations, and optimizer updates. Together, these quantities distinguish whether a method reduces the representation itself, constructs it incrementally or on demand, limits only the model input, or trades repeated computation against storage. The realized memory savings and structural exposure depend on the dataset and lifting, as well as method-specific choices such as the partition, total number of clusters, mini-batch grouping, and number of recombinations. Appendix~\ref{sec:struct_recon} next states the compatibility assumptions and formalizes the structural-exposure side of this trade-off.
\section{Structural reconstruction}
\label{sec:struct_recon}

This appendix analyzes which full-graph structures \our workflow recovers, how often it exposes them, and when their direct-incidence messages are available. The graph partition is fixed, and the only randomness comes from independently reshuffling its cluster indices at each epoch. Throughout the paper, $K$ denotes the total number of clusters in this partition. The implementation retains only clusters containing at least one target node from the current split (Sec.~\ref{subsec:clustered_mini_batch_lifting}). For example, a cluster containing validation or test nodes but no training nodes is inactive for training. The number of active clusters can therefore be smaller than $K$. The analysis below assumes that all $K$ partition clusters participate in recombination. For notational simplicity, we assume $q\mid K$, so every mini-batch contains exactly $q$ clusters. This removes a possible smaller remainder mini-batch without changing the qualitative conclusions. The following results state the compatibility assumptions required for structural recovery, repeated exposure, and direct-message availability.

The recovery analysis assumes compatibility with induced subgraphs, as formalized in Definition~\ref{def:lifting_comp}. This condition holds for the identity lifting, the evaluated $k$-hop neighbourhood liftings, and the clique-complex lifting. It can fail for the cycle-basis lifting because restricting the graph can change the selected basis. The cycle-basis recovery probabilities therefore require stable basis membership. Appendix~\ref{subsec:exp_cora} examines this condition empirically on Cora Full dataset.

\subsection{Setup and notation}
\label{subsec:setup_notation}

Because features do not affect structural recovery, let $\graph=(\vertices,\edges)$ denote the underlying undirected graph. Let
\[
    \Pi=\{P_{1},\ldots,P_{K}\}
\]
be the fixed partition of $\vertices$ into $K$ clusters considered in this analysis. We write $[K]=\{1,\ldots,K\}$ and use $q\leq K$ clusters per mini-batch. Under the divisibility assumption above, each epoch contains $M=K/q$ mini-batches.

At each epoch $t\in\{1,\ldots,T\}$, we independently draw a uniformly random permutation of $[K]$. We divide the permutation into the mini-batch index sets
\[
    \mathcal{Z}^{(t)}=\{Z_{1}^{(t)},\ldots,Z_{M}^{(t)}\},
    \qquad
    Z_i^{(t)}\subseteq [K],
    \qquad
    |Z_i^{(t)}|=q.
\]
For each mini-batch index set $Z_{i}^{(t)}$, we define the induced mini-batch graph
\[
    \graph_i^{(t)}=\big(\vertices_i^{(t)},\edges_i^{(t)}\big),
    \qquad
    \vertices_i^{(t)}=\!\!\bigcup_{k\in Z_{i}^{(t)}}\!\!P_{k},
    \qquad
    \edges_i^{(t)}=\{(u,v)\in\edges:u,v\in\vertices_i^{(t)}\}.
\]
Thus, $\graph_i^{(t)}$ is the graph induced by the mini-batch node set $B=\vertices_i^{(t)}$.

Let $\structlifting$ be a deterministic structural lifting from graphs to a chosen topological domain. Let $\mathcal{S}$ denote the set of possible topological structures. For any graph $\graph$, we write $\mathrm{Str}(\structlifting(\graph))\subseteq\mathcal{S}$ for the structures in the resulting lifted domain. The set observed in epoch $t$ is
\begin{equation}
    S^{(t)}=\bigcup_{i=1}^{M}\mathrm{Str}\left(\structlifting\left(\graph_{i}^{(t)}\right)\right)~.
    \label{eq:s_one_epoch}
\end{equation}
The cumulative observed set of structures after $T$ epochs is $\widehat{S}_{T}=\displaystyle\bigcup_{t=1}^{T}S^{(t)}$.

The global structure set is $S^{\ast}=\mathrm{Str}\left(\structlifting\left(\graph\right)\right)$, obtained by lifting the full graph. We assume that $\graph$, and hence $S^\ast$, is finite. Mini-batch lifting may also produce structures outside $S^{\ast}$. We therefore define the observed global structures as
\[
    \widehat{S}_{T}^{\ast}=\widehat{S}_{T}\cap S^{\ast}.
\]
Thus, the recovery analysis only tracks global structures and ignores mini-batch-local artifacts.

For $s\in S^{\ast}$, given its vertex set $\vertices_s\subseteq\vertices$, we define its partition-index support as
\begin{equation*}
    I_s=\{i\in[K]:P_i\cap\vertices_s\neq\emptyset\}
\end{equation*}
and its cluster span $c_s=|I_s|$ as the number of clusters that $s$ intersects. Since every structure contains at least one vertex, we have $c_s\geq1$. A structure with $c_s=1$ is fully contained in one cluster.
\begin{definition}[Lifting compatibility]
    The lifting map is compatible with induced subgraphs if the following equivalence holds for every $s\in S^\ast$ and every $Z_i^{(t)}$. The global structure $s$ appears in the lifted mini-batch exactly when that mini-batch contains all its supporting cluster indices:
    \begin{equation*}
        s\in\mathrm{Str}\left(\structlifting\left(\graph_i^{(t)}\right)\right)
        \iff
        I_s\subseteq Z_{i}^{(t)}~.
    \end{equation*}
    \label{def:lifting_comp}
\end{definition}
\subsection{Per-epoch recovery probability}
\label{subsec:per_epoch_recovery_prob}

We first compute the probability that a fixed global structure $s\in S^\ast$ is recovered during one epoch. By Definition~\ref{def:lifting_comp}, all indices in $I_s$ must occur in the same mini-batch. Thus, recovery depends on the cluster span $c_s=|I_s|$. It is impossible when $c_s>q$ and becomes a combinatorial co-occurrence event when $c_s\leq q$.

Let
\[
    p_s^{(t)}=\mathbb{P}\left(s\in S^{(t)}\right)
\]
denote this per-epoch recovery probability. Since the reshuffling distribution is identical across epochs, $p_s^{(t)}$ does not depend on $t$, and we write $p_s$.
\begin{lemma}[Per-epoch recovery probability]
    For any global structure $s\in S^\ast$,
    \begin{gather*}
    p_{s}=
    \begin{cases}
        \displaystyle
        \frac{\binom{q-1}{c_s-1}}{\binom{K-1}{c_s-1}},
        & c_s\leq q,\\[1.1em]
        0,
        & c_s>q.
    \end{cases}
    \end{gather*}
    \label{lemma:per_epoch_recovery}
\end{lemma}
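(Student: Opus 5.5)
By Definition~\ref{def:lifting_comp}, $s\in\mathrm{Str}(\structlifting(\graph_i^{(t)}))$ holds exactly when $I_s\subseteq Z_i^{(t)}$. The plan is therefore to reduce the event $\{s\in S^{(t)}\}$ to a purely combinatorial event about the random permutation, and then count.

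\textbf{Reduction.} We have $s\in S^{(t)}$ iff there exists $i\in\{1,\ldots,M\}$ with $I_s\subseteq Z_i^{(t)}$. Since the blocks $Z_1^{(t)},\ldots,Z_M^{(t)}$ are disjoint, this happens iff all $c_s$ indices of $I_s$ fall in one common block. If $c_s>q$, no block of size $q$ can contain $I_s$, so $p_s=0$, which settles the second case immediately.

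\textbf{Counting for $c_s\leq q$.} Fix any $j_0\in I_s$, which exists since $c_s\geq1$. Let $Z(j_0)$ be the block containing $j_0$. Then all of $I_s$ lies in one block iff $I_s\setminus\{j_0\}\subseteq Z(j_0)$. By symmetry of the uniform permutation, conditional on the position of $j_0$, the set $Z(j_0)\setminus\{j_0\}$ is a uniformly random $(q-1)$-subset of $[K]\setminus\{j_0\}$. To justify this, note that any permutation of $[K]$ fixing $j_0$ preserves the uniform law and acts transitively on $(q-1)$-subsets of $[K]\setminus\{j_0\}$. The probability that a uniform $(q-1)$-subset of a $(K-1)$-set contains a prescribed $(c_s-1)$-subset is
\[
\frac{\binom{K-c_s}{q-c_s}}{\binom{K-1}{q-1}}.
\]
Finally, verify the identity $\binom{K-c_s}{q-c_s}\big/\binom{K-1}{q-1}=\binom{q-1}{c_s-1}\big/\binom{K-1}{c_s-1}$. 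Both sides equal $\frac{(q-1)!\,(K-c_s)!}{(K-1)!\,(q-c_s)!}$, which follows by expanding factorials. Equivalently, one can argue directly with a hypergeometric count: the $c_s-1$ other indices occupy uniformly random positions among the $K-1$ remaining slots, and $\binom{q-1}{c_s-1}$ of these choices land in $j_0$'s block.

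\textbf{Independence of $t$.} Because each epoch draws a fresh permutation from the same uniform distribution, the computed probability does not depend on $t$. This justifies writing $p_s$.

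The main obstacle is the symmetry step: one must argue cleanly that the co-block of $j_0$ is a uniform $(q-1)$-subset. This requires $q\mid K$, so that every block has exactly $q$ elements and no remainder block breaks the symmetry. I would handle it by direct counting. The number of permutations placing the $c_s-1$ other indices in $j_0$'s block is $\binom{q-1}{c_s-1}(c_s-1)!\,(K-c_s)!$ times the factor for $j_0$'s position. Dividing by the total count then gives the formula.
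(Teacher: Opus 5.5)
Your proof is correct and follows the paper's argument: you rule out $c_s>q$ by block size, fix one index of $I_s$, note that its co-block is a uniform $(q-1)$-subset of the remaining $K-1$ indices, count the subsets containing the other $c_s-1$ indices, and convert the result with the binomial identity. Your transitivity justification of the uniformity step and your direct permutation count are useful additions, since the paper only asserts the uniformity step.
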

\begin{proof}
    If $c_s>q$, no mini-batch can contain all $c_s$ supporting cluster indices, so $p_s=0$. Now suppose $c_s\leq q$. Fix $k\in I_s$, and let $Z^{(t)}(k)$ be its unique mini-batch in epoch $t$. Under a uniform shuffle, the other $q-1$ members of $Z^{(t)}(k)$ form a uniformly random subset of $[K]\setminus\{k\}$. Recovery requires this subset to contain the fixed set $I_s\setminus\{k\}$. Therefore,
    \begin{equation*}
        p_{s}=\frac{\binom{K-c_s}{q-c_s}}{\binom{K-1}{q-1}} = \frac{\binom{q-1}{c_s-1}}{\binom{K-1}{c_s-1}}~.
    \end{equation*}
\end{proof}
Structures contained in one cluster are recovered every epoch. When $q<K$, the recovery probability decreases with cluster span, and structures with $c_s>q$ remain unobservable.

\subsection{Repeated structural exposure}
\label{subsec:repeated_structural_exposure}

Per-epoch recovery determines whether a structure is available during one epoch. To track repeated availability, for each $s\in S^\ast$ we define the exposure indicator
\begin{equation}
    X_s^{(t)}=\mathbf{1}\{s\in S^{(t)}\}~,
    \label{eq:exposure_indicator}
\end{equation}
where $\mathbf{1}\{\cdot\}$ denotes the indicator function, and the number of epochs in which $s$ has been observed by epoch $T$,
\begin{equation}
    N_s(T)=\sum_{t=1}^{T}X_s^{(t)}~.
    \label{eq:exposure_count}
\end{equation}
The mini-batches partition the $K$ clusters in each epoch. A compatible global structure can therefore occur in at most one mini-batch per epoch. Thus, $N_s(T)$ counts both exposure epochs and the corresponding optimization steps containing $s$.

\begin{proposition}[Repeated structural exposure]
    \label{prop:repeated_structural_exposure}
    Under independent epoch reshuffling and lifting compatibility, for every fixed $s\in S^\ast$,
    \begin{equation}
        X_s^{(t)}\overset{\mathrm{i.i.d.}}{\sim}\operatorname{Bernoulli}(p_s),
        \qquad
        N_s(T)\sim\operatorname{Binomial}(T,p_s).
        \label{eq:binomial_exposure}
    \end{equation}
    Consequently,
    \begin{equation}
        \mathbb{E}[N_s(T)]=Tp_s,
        \qquad
        \operatorname{Var}[N_s(T)]=Tp_s(1-p_s),
        \label{eq:exposure_moments}
    \end{equation}
    and
    \begin{equation}
        \frac{N_s(T)}{T}\longrightarrow p_s
        \qquad\text{almost surely as }T\to\infty.
        \label{eq:exposure_slln}
    \end{equation}
\end{proposition}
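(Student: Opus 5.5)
The plan is to reduce everything to the per-epoch event and then use the independence of the epoch shuffles. First I will show that each indicator $X_s^{(t)}$ is a deterministic function of the epoch-$t$ grouping $\mathcal{Z}^{(t)}$ alone. By lifting compatibility (Definition~\ref{def:lifting_comp}), $s\in\mathrm{Str}(\structlifting(\graph_i^{(t)}))$ holds if and only if $I_s\subseteq Z_i^{(t)}$. By \eqref{eq:s_one_epoch}, $s\in S^{(t)}$ holds if and only if some $i\in\{1,\ldots,M\}$ satisfies $I_s\subseteq Z_i^{(t)}$. This uses that $\structlifting$ is deterministic and the partition $\Pi$ is fixed, so the only randomness enters through $\mathcal{Z}^{(t)}$. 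Hence $X_s^{(t)}=g_s(\pi^{(t)})$ for a fixed measurable map $g_s$ of the epoch permutation $\pi^{(t)}$.

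Since the permutations $\pi^{(1)},\ldots,\pi^{(T)}$ are drawn independently and identically (uniform on the symmetric group on $[K]$), the variables $g_s(\pi^{(t)})$ are i.i.d. Each is $\{0,1\}$-valued, so each is Bernoulli with parameter $\mathbb{P}(s\in S^{(t)})=p_s$, the value given by Lemma~\ref{lemma:per_epoch_recovery}. This includes the degenerate cases $p_s=0$ when $c_s>q$, and $p_s=1$ when $c_s=1$ or $q=K$. A sum of $T$ i.i.d. Bernoulli$(p_s)$ variables is Binomial$(T,p_s)$, which gives \eqref{eq:binomial_exposure}.

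The moments in \eqref{eq:exposure_moments} then follow from linearity of expectation and from additivity of variance under independence, with each summand having variance $p_s(1-p_s)$. The almost sure limit \eqref{eq:exposure_slln} is the strong law of large numbers applied to the bounded i.i.d. sequence $X_s^{(t)}$, which has finite mean $p_s$.

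The only step needing care is the first: making sure that $X_s^{(t)}$ depends on nothing beyond the epoch-$t$ permutation. In particular, the model parameters evolve across epochs but do not influence batching or lifting, so they do not affect $X_s^{(t)}$. I will also remark that the mini-batches partition $[K]$, so at most one batch per epoch can contain $I_s$. This is not needed for the distributional claims, but it justifies interpreting $N_s(T)$ as a count of optimization steps.
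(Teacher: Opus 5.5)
Your proposal is correct and follows essentially the same route as the paper: each $X_s^{(t)}$ is a function of the epoch-$t$ permutation with Bernoulli$(p_s)$ law from Lemma~\ref{lemma:per_epoch_recovery}, i.i.d.\ permutations give i.i.d.\ indicators, and the binomial law, the moments, and the almost-sure limit follow from summing and applying the strong law of large numbers. Your explicit use of lifting compatibility, together with the deterministic lifting and fixed partition, to show that $X_s^{(t)}$ depends only on $\pi^{(t)}$ spells out a step the paper states in a single line.
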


\begin{proof}
    By Lemma~\ref{lemma:per_epoch_recovery}, each $X_s^{(t)}$ is Bernoulli with success probability $p_s$. It is a function of the permutation drawn in epoch $t$. Independent and identically distributed epoch permutations therefore produce independent and identically distributed exposure indicators. Their sum is binomial, giving Eq.~\eqref{eq:binomial_exposure} and the moments in Eq.~\eqref{eq:exposure_moments}. Eq.~\eqref{eq:exposure_slln} follows from the strong law of large numbers.
\end{proof}

Proposition~\ref{prop:repeated_structural_exposure} gives $p_s$ two interpretations. It is the one-epoch recovery probability and the asymptotic fraction of epochs containing $s$. If $c_s=1$, then $N_s(T)=T$ almost surely. If $c_s>q$, then $N_s(T)=0$ almost surely. When $q<K$, structures with $1<c_s\leq q$ remain observable but occur less frequently than structures contained in one cluster.

Since every epoch contains $M=K/q$ optimization steps, Eq.~\eqref{eq:exposure_slln} also gives the asymptotic fraction of all training steps in which $s$ is present:
\begin{equation}
    \frac{N_s(T)}{MT}\longrightarrow\frac{p_s}{M}
    \qquad\text{almost surely as }T\to\infty.
    \label{eq:exposure_step_frequency}
\end{equation}
The distinction is important: $p_s$ is a per-epoch exposure probability, whereas $p_s/M$ is the corresponding frequency among optimization steps.

\begin{remark}[Dependence between structures]
    Proposition~\ref{prop:repeated_structural_exposure} concerns one fixed structure across epochs. Exposure indicators for different structures within an epoch are generally dependent because they share the same cluster grouping. None of the expectations below require independence between structures.
\end{remark}

\subsection{Cumulative recovery probability}
\label{subsec:cum_prob_conv}

Cumulative recovery records whether a global structure has been exposed at least once by epoch $T$. For every $s\in S^\ast$, this event is equivalent to
\[
    \{s\in\widehat{S}_{T}^{\ast}\}=\{N_s(T)\geq1\}.
\]
By Proposition~\ref{prop:repeated_structural_exposure}, the probability that $s$ remains unobserved after $T$ epochs is
\[
    \mathbb{P}\left(s\notin\widehat{S}_{T}^{\ast}\right)
    =
    \mathbb{P}\left(N_s(T)=0\right)
    =
    (1-p_s)^T.
\]
Hence,
\begin{equation}
    \mathbb{P}\left(s\in\widehat{S}_{T}^{\ast}\right)=1-(1-p_{s})^{T}~.
    \label{eq:cumul_prob}
\end{equation}
\begin{theorem}[Recovery in probability of the $q$-observable structure set]
    For every $s\in S^\ast$,
    \begin{equation*}
        \lim_{T\rightarrow\infty}\mathbb{P}\left(s\in\widehat{S}_{T}^{\ast}\right)
        =
        \begin{cases}
            1, & c_s\leq q,\\
            0, & c_s>q.
        \end{cases}
    \end{equation*}
    Consequently, with
    \begin{equation*}
        S_q^\ast=\{s\in S^\ast:c_s\leq q\}~,
    \end{equation*}
    we have $\mathbb{P}(\widehat{S}_{T}^{\ast}=S_q^\ast)\rightarrow1$. In particular, if $\displaystyle q\geq\max_{s\in S^\ast}c_s$, then $\mathbb{P}(\widehat{S}_{T}^{\ast}=S^\ast)\rightarrow1$.
    \label{theorem:cumulative_recovery}
\end{theorem}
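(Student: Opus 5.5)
The plan is to derive the pointwise limit directly from Eq.~\eqref{eq:cumul_prob} and Lemma~\ref{lemma:per_epoch_recovery}, and then lift it to the whole set with a finite union bound.

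\textbf{Pointwise limit.} For a fixed $s\in S^\ast$, Eq.~\eqref{eq:cumul_prob} gives $\mathbb{P}(s\in\widehat{S}_T^\ast)=1-(1-p_s)^T$. If $c_s\le q$, Lemma~\ref{lemma:per_epoch_recovery} gives $p_s=\binom{q-1}{c_s-1}/\binom{K-1}{c_s-1}$. Both binomials are positive because $1\le c_s\le q\le K$, so $p_s>0$. Also $p_s\le 1$, since $q\le K$. Hence $0\le 1-p_s<1$ and $(1-p_s)^T\to0$, which yields limit $1$. If $c_s>q$, then $p_s=0$, so the probability equals $0$ for every $T$ and the limit is $0$.

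\textbf{Set-level statement.} First I would note that $\widehat{S}_T^\ast\subseteq S_q^\ast$ holds deterministically. By lifting compatibility (Definition~\ref{def:lifting_comp}), a structure with $c_s>q$ needs $I_s\subseteq Z_i^{(t)}$, which is impossible because $|Z_i^{(t)}|=q$. Therefore the event $\{\widehat{S}_T^\ast\neq S_q^\ast\}$ equals $\bigcup_{s\in S_q^\ast}\{s\notin\widehat{S}_T^\ast\}$. A union bound then gives
\[
\mathbb{P}\bigl(\widehat{S}_T^\ast\neq S_q^\ast\bigr)\le\sum_{s\in S_q^\ast}(1-p_s)^T\le |S^\ast|\,(1-p_{\min})^T\to0,
\]
where $p_{\min}=\min_{s\in S_q^\ast}p_s>0$. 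This minimum exists because $S^\ast$ is finite. The union bound needs no independence across structures, in line with the remark on dependence between structures. If $S_q^\ast$ is empty, the event is empty and the claim is trivial.

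\textbf{Corollary.} If $q\ge\max_s c_s$, then every structure has $c_s\le q$, so $S_q^\ast=S^\ast$, and the last claim follows.

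\textbf{Main obstacle.} The one point needing care is passing from pointwise convergence to convergence of the whole set. This relies on the finiteness of $S^\ast$, which keeps the union bound over a fixed finite family and gives a uniform positive lower bound $p_{\min}$. It also relies on the deterministic inclusion $\widehat{S}_T^\ast\subseteq S_q^\ast$, which rules out any contribution from unobservable structures. Everything else is routine.
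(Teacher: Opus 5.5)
Your proposal is correct and follows the paper's proof essentially step for step: the pointwise limit from $\mathbb{P}(s\in\widehat{S}_T^\ast)=1-(1-p_s)^T$ with $p_s>0$ exactly when $c_s\le q$, then the deterministic inclusion $\widehat{S}_T^\ast\subseteq S_q^\ast$ and a union bound over the finite set $S_q^\ast$. Your uniform bound $|S^\ast|(1-p_{\min})^T$ is only a minor sharpening of the paper's remark that the finite sum $\sum_{s\in S_q^\ast}(1-p_s)^T$ tends to zero.
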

\begin{proof}
    If $c_s>q$, then Lemma~\ref{lemma:per_epoch_recovery} gives $p_s=0$, and Eq.~\eqref{eq:cumul_prob} gives $\mathbb{P}(s\in\widehat{S}_{T}^{\ast})=0$ for all $T$. If $c_s\leq q$, then $p_s>0$, so $(1-p_s)^T\rightarrow0$ and Eq.~\eqref{eq:cumul_prob} gives $\mathbb{P}(s\in\widehat{S}_{T}^{\ast})\rightarrow1$.

    Structures in $S^\ast\setminus S_q^\ast$ are never observed with $q$ clusters per mini-batch. Thus, $\widehat{S}_{T}^{\ast}\subseteq S_q^\ast$ for every $T$. Since $S_q^\ast$ is finite, the union bound gives
    \begin{equation*}
        \mathbb{P}\left(\widehat{S}_{T}^{\ast}\neq S_q^\ast\right)=\mathbb{P}\big(\bigcup_{s \in S_{q}^{\ast}}\{s \notin \widehat{S}_{T}^{\ast}\}\big)\leq\sum_{s \in S_{q}^{\ast}}\mathbb{P}\left(s \notin \widehat{S}_{T}^{\ast}\right)=\sum_{s\in S_q^\ast}(1-p_s)^{T}~.
    \end{equation*}
    The right-hand side tends to zero as $T\rightarrow\infty$, so $\mathbb{P}(\widehat{S}_{T}^{\ast}=S_q^\ast)\rightarrow1$. If $\displaystyle q\geq\max_{s\in S^\ast}c_s$, then $S_q^\ast=S^\ast$, which gives the full-recovery claim.
\end{proof}
\subsection{Structural coverage convergence}
\label{subsec:struct_cov_conv}

We now aggregate the per-structure recovery probabilities into a global coverage quantity. The realized structural coverage after $T$ epochs is the random variable
\begin{equation}
    R_{q,T}
    =
    \frac{\left|\widehat{S}_{T}^{\ast}\right|}{\left|S^\ast\right|}
    =
    \frac{1}{\left|S^\ast\right|}
    \sum_{s\in S_{q}^\ast}\mathbf{1}\{s\in\widehat{S}_{T}^{\ast}\}~.
    \label{eq:realized_coverage}
\end{equation}
Thus, $R_{q,T}$ is the fraction of global structures recovered by epoch $T$. Its expectation defines the corresponding deterministic coverage curve. By linearity of expectation and Eq.~\eqref{eq:cumul_prob},
\begin{equation}
    \mathbb{E}[R_{q,T}]
    =
    \frac{1}{\left|S^\ast\right|}
    \sum_{s\in S_{q}^\ast}\mathbb{P}\left(s\in\widehat{S}_{T}^{\ast}\right)
    =
    \frac{1}{\left|S^\ast\right|}
    \sum_{s\in S_{q}^\ast}\left(1-\left(1-p_s\right)^T\right)~.
    \label{eq:expected_coverage}
\end{equation}
\begin{proposition}[Expected structural coverage convergence]
    \label{proposition:expected_coverage_conv}
    We have
    \begin{equation*}
        \lim_{T\rightarrow\infty}\mathbb{E}[R_{q,T}]
        =
        \frac{\left|S_q^\ast\right|}{\left|S^\ast\right|}~.
    \end{equation*}
    In particular, if $\displaystyle q\geq\max_{s\in S^\ast}c_s$, then $\displaystyle\lim_{T\rightarrow\infty}\mathbb{E}[R_{q,T}]=1$.
\end{proposition}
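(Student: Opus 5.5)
The plan is to start from the closed form in Eq.~\eqref{eq:expected_coverage} and pass to the limit term by term. The sum is indexed by the finite set $S_q^\ast$, since $S^\ast$ is finite, and $|S^\ast|$ does not depend on $T$. The limit can therefore be taken inside the sum with no uniformity or dominated-convergence argument. For each $s\in S_q^\ast$ we have $c_s\leq q$, so Lemma~\ref{lemma:per_epoch_recovery} gives
\[
p_s=\binom{q-1}{c_s-1}\Big/\binom{K-1}{c_s-1}.
\]
The numerator is positive because $c_s-1\leq q-1$. Also $p_s\leq 1$, since $q\leq K$. Hence $0<p_s\leq 1$, so $0\leq 1-p_s<1$ and $(1-p_s)^T\to 0$. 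Equivalently, one can cite the per-structure limit already established in Theorem~\ref{theorem:cumulative_recovery}.

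Each summand $1-(1-p_s)^T$ therefore tends to $1$. The finite sum tends to $|S_q^\ast|$, and dividing by $|S^\ast|$ gives the claimed limit $|S_q^\ast|/|S^\ast|$.

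One bookkeeping point deserves a sentence. Eq.~\eqref{eq:realized_coverage} sums only over $S_q^\ast$ although the normalisation is by $|S^\ast|$. This is justified because structures in $S^\ast\setminus S_q^\ast$ have $p_s=0$ and are never observed, as in the proof of Theorem~\ref{theorem:cumulative_recovery}. I would recall this explicitly so the limit is clearly the fraction of all global structures.

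For the final claim, if $q\geq\max_{s\in S^\ast}c_s$ then every $s\in S^\ast$ satisfies $c_s\leq q$. So $S_q^\ast=S^\ast$ and the limit equals $1$. There is no real obstacle; the only step needing care is confirming $p_s>0$ on $S_q^\ast$, which is exactly the case condition of Lemma~\ref{lemma:per_epoch_recovery}.

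An alternative route is to apply bounded convergence to $R_{q,T}\in[0,1]$. This uses $\mathbb{P}(\widehat{S}_T^\ast=S_q^\ast)\to 1$ from Theorem~\ref{theorem:cumulative_recovery}, which implies that $R_{q,T}\to|S_q^\ast|/|S^\ast|$ in probability. The direct finite-sum argument above is simpler, so I would present that one.
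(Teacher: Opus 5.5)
Your proposal is correct and matches the paper's proof: pass the limit through the finite sum in Eq.~\eqref{eq:expected_coverage}, use $p_s>0$ on $S_q^\ast$ from Lemma~\ref{lemma:per_epoch_recovery} so that each summand tends to $1$, and note that $S_q^\ast=S^\ast$ when $q\geq\max_{s\in S^\ast}c_s$. Your extra checks, namely the positivity of the binomial ratio and the bookkeeping for structures outside $S_q^\ast$, only make explicit what the paper's argument leaves implicit.
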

\begin{proof}
    Since $S^\ast$ is finite, we can pass the limit through the finite sum in Eq.~\eqref{eq:expected_coverage}:
    \begin{equation*}
        \lim_{T\rightarrow\infty}\mathbb{E}[R_{q,T}]
        =
        \frac{1}{\left|S^\ast\right|}
        \sum_{s\in S_{q}^\ast}
        \lim_{T\rightarrow\infty}\left(1-\left(1-p_s\right)^T\right)~.
    \end{equation*}
    For $s\in S_q^\ast$, Lemma~\ref{lemma:per_epoch_recovery} gives $p_s>0$, so the corresponding limit is $1$. Therefore,
    \begin{equation*}
        \lim_{T\rightarrow\infty}\mathbb{E}[R_{q,T}]
        =
        \frac{1}{\left|S^\ast\right|}
        \sum_{s\in S_q^\ast}1
        =
        \frac{\left|S_q^\ast\right|}{\left|S^\ast\right|}~.
    \end{equation*}
    If $\displaystyle q\geq\max_{s\in S^\ast}c_s$, then $S_q^\ast=S^\ast$, which gives the full-coverage claim.
\end{proof}

\subsection{Repeated-exposure structural coverage}
\label{subsec:repeated_exposure_coverage}

One-exposure coverage does not distinguish structures observed once from those observed repeatedly. For a threshold $m\in\{1,\ldots,T\}$, define the realized $m$-exposure structural coverage as
\begin{equation}
    R_{q,T}^{(m)}
    =
    \frac{1}{|S^\ast|}
    \sum_{s\in S^\ast}\mathbf{1}\{N_s(T)\geq m\}
    =
    \frac{1}{|S^\ast|}
    \sum_{s\in S_q^\ast}\mathbf{1}\{N_s(T)\geq m\}~.
    \label{eq:repeated_exposure_coverage}
\end{equation}
The second equality follows because $N_s(T)=0$ almost surely for $s\notin S_q^\ast$. Moreover,
\begin{equation}
    R_{q,T}^{(1)}=R_{q,T},
    \label{eq:one_exposure_identity}
\end{equation}
Thus, cumulative structural coverage is the case $m=1$.

\begin{proposition}[Expected repeated-exposure coverage]
    \label{prop:expected_repeated_exposure_coverage}
    For every $m\in\{1,\ldots,T\}$,
    \begin{equation}
        \mathbb{E}\left[R_{q,T}^{(m)}\right]
        =
        \frac{1}{|S^\ast|}
        \sum_{s\in S_q^\ast}
        \left[
            1-
            \sum_{j=0}^{m-1}
            \binom{T}{j}p_s^j(1-p_s)^{T-j}
        \right]~.
        \label{eq:expected_repeated_exposure_coverage}
    \end{equation}
\end{proposition}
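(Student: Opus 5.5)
The plan is to apply linearity of expectation to the second expression for $R_{q,T}^{(m)}$ in Eq.~\eqref{eq:repeated_exposure_coverage}, and then evaluate each resulting indicator expectation with the binomial law from Proposition~\ref{prop:repeated_structural_exposure}.

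\textbf{Step 1 (restrict to $S_q^\ast$).} We work with the restricted sum over $S_q^\ast$. For $s\notin S_q^\ast$ we have $c_s>q$, so Lemma~\ref{lemma:per_epoch_recovery} gives $p_s=0$. Hence $N_s(T)=0$ almost surely, and since $m\geq1$ the indicator $\mathbf{1}\{N_s(T)\geq m\}$ vanishes. This justifies dropping these terms.

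\textbf{Step 2 (linearity).} Taking expectations term by term, which is legitimate because $S^\ast$ is finite and the indicators are bounded, gives
\[
\mathbb{E}\bigl[R_{q,T}^{(m)}\bigr]=\frac{1}{|S^\ast|}\sum_{s\in S_q^\ast}\mathbb{P}\bigl(N_s(T)\geq m\bigr).
\]
No independence between different structures is needed, consistent with the remark on dependence between structures.

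\textbf{Step 3 (binomial tail).} For each fixed $s$, Proposition~\ref{prop:repeated_structural_exposure} gives $N_s(T)\sim\operatorname{Binomial}(T,p_s)$. Therefore
\[
\mathbb{P}(N_s(T)\geq m)=1-\sum_{j=0}^{m-1}\binom{T}{j}p_s^j(1-p_s)^{T-j}.
\]
Substituting this into Step 2 yields Eq.~\eqref{eq:expected_repeated_exposure_coverage}.

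As a sanity check, setting $m=1$ recovers Eq.~\eqref{eq:expected_coverage}, consistent with Eq.~\eqref{eq:one_exposure_identity}.

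The only mild obstacle is Step 1. The restriction to $S_q^\ast$ relies on $m\geq1$ and on the almost-sure vanishing of $N_s(T)$ for $c_s>q$. Both are available from the earlier results, so no genuine difficulty is expected.
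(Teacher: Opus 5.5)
Your proposal is correct and matches the paper's proof: linearity of expectation over the sum restricted to $S_q^\ast$, the binomial law from Proposition~\ref{prop:repeated_structural_exposure}, and complementing the events $N_s(T)=j$ for $j\in\{0,\ldots,m-1\}$. Your Step~1 only makes explicit the justification for the second equality in Eq.~\eqref{eq:repeated_exposure_coverage}, which the paper states immediately before the proposition.
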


\begin{proof}
    By linearity of expectation,
    \[
        \mathbb{E}\left[R_{q,T}^{(m)}\right]
        =
        \frac{1}{|S^\ast|}
        \sum_{s\in S_q^\ast}\mathbb{P}\left(N_s(T)\geq m\right).
    \]
    Proposition~\ref{prop:repeated_structural_exposure} gives $N_s(T)\sim\operatorname{Binomial}(T,p_s)$. Complementing the events $N_s(T)=j$ for $j\in\{0,\ldots,m-1\}$ gives Eq.~\eqref{eq:expected_repeated_exposure_coverage}. Independence between different structures is not required.
\end{proof}

For a fixed partition, define the cluster-span counts and span-specific recovery probabilities as
\begin{equation}
    n_c=\left|\{s\in S^\ast:c_s=c\}\right|,
    \qquad
    p_c(q)=
    \begin{cases}
        \displaystyle\frac{\binom{q-1}{c-1}}{\binom{K-1}{c-1}}, & c\leq q,\\[1.1em]
        0, & c>q.
    \end{cases}
    \label{eq:span_counts_and_probabilities}
\end{equation}
Since $p_s=p_{c_s}(q)$, grouping Eq.~\eqref{eq:expected_repeated_exposure_coverage} by cluster span gives
\begin{equation}
    \mathbb{E}\left[R_{q,T}^{(m)}\right]
    =
    \frac{1}{|S^\ast|}
    \sum_{c=1}^{q}n_c
    \left[
        1-
        \sum_{j=0}^{m-1}
        \binom{T}{j}p_c(q)^j\left(1-p_c(q)\right)^{T-j}
    \right]~.
    \label{eq:repeated_exposure_span_hist}
\end{equation}
For $m=1$, Eq.~\eqref{eq:repeated_exposure_span_hist} reduces to the expected cumulative coverage in Eq.~\eqref{eq:expected_coverage}.

\begin{corollary}[Fixed-threshold coverage convergence]
    \label{cor:fixed_threshold_coverage}
    For every fixed $m\geq1$,
    \begin{equation}
        R_{q,T}^{(m)}
        \longrightarrow
        \frac{|S_q^\ast|}{|S^\ast|}
        \qquad\text{almost surely as }T\to\infty.
        \label{eq:fixed_m_coverage_limit}
    \end{equation}
    Consequently,
    \begin{equation*}
        \lim_{T\to\infty}\mathbb{E}\left[R_{q,T}^{(m)}\right]
        =
        \frac{|S_q^\ast|}{|S^\ast|}~.
    \end{equation*}
\end{corollary}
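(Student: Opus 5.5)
The plan is to prove the almost-sure statement structure by structure, and then obtain the expectation limit by bounded convergence. By Eq.~\eqref{eq:repeated_exposure_coverage}, $R_{q,T}^{(m)}$ is a finite average of indicators $\mathbf{1}\{N_s(T)\geq m\}$ over $s\in S_q^\ast$. Almost-sure convergence of a finite sum follows from almost-sure convergence of each summand. The intersection of finitely many probability-one events still has probability one, so no independence between structures is required. It therefore suffices to show $\mathbf{1}\{N_s(T)\geq m\}\to1$ almost surely for every fixed $s\in S_q^\ast$.

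For fixed $s\in S_q^\ast$, Lemma~\ref{lemma:per_epoch_recovery} gives $p_s>0$, since $c_s\leq q$ makes both binomial coefficients positive. Proposition~\ref{prop:repeated_structural_exposure}, Eq.~\eqref{eq:exposure_slln}, gives $N_s(T)/T\to p_s$ almost surely. On that event, $N_s(T)\geq Tp_s/2$ for all sufficiently large $T$, so $N_s(T)\to\infty$. Because $m$ is fixed, $N_s(T)\geq m$ eventually.

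An alternative that avoids the strong law is also available. The sequence $N_s(T)$ is nondecreasing in $T$, so the events $\{N_s(T)\geq m\}$ increase in $T$. We then have $\mathbb{P}(N_s(T)<m)=\sum_{j<m}\binom{T}{j}p_s^j(1-p_s)^{T-j}\leq m\,T^{m-1}(1-p_s)^{T-m+1}\to0$. Monotonicity upgrades this convergence in probability to almost-sure convergence of the indicator. Summing the indicators over $S_q^\ast$ and dividing by $|S^\ast|$ gives Eq.~\eqref{eq:fixed_m_coverage_limit}.

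For the expectation, note that $0\leq R_{q,T}^{(m)}\leq1$, so the dominated (bounded) convergence theorem transfers the almost-sure limit to $\mathbb{E}[R_{q,T}^{(m)}]$. One can alternatively verify the limit directly from Eq.~\eqref{eq:expected_repeated_exposure_coverage}: each binomial lower-tail term tends to zero by the polynomial-times-geometric bound above.

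The only delicate point is bookkeeping. The statement requires $m\leq T$, but for fixed $m$ this holds for all large $T$, so the limit is well defined. No step is a genuine obstacle; the main care is citing the strong law or the monotonicity argument rather than treating convergence in probability as almost-sure convergence.
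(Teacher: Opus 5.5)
Your proof is correct and follows the paper's argument: $p_s>0$ for every $s\in S_q^\ast$, the strong law in Proposition~\ref{prop:repeated_structural_exposure} forces $N_s(T)\to\infty$, so each of the finitely many indicators tends to one almost surely, and bounded convergence transfers the limit to the expectation. Your alternative (monotone events plus the polynomial-times-geometric binomial tail bound) is also valid and slightly more elementary, but the paper does not use it.
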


\begin{proof}
    Every $s\in S_q^\ast$ has $p_s>0$. Eq.~\eqref{eq:exposure_slln} therefore implies $N_s(T)\to\infty$ almost surely. Hence, $\mathbf{1}\{N_s(T)\geq m\}\to1$ almost surely for each fixed $m$. Structures outside $S_q^\ast$ have zero exposure, and $S_q^\ast$ is finite. Eq.~\eqref{eq:repeated_exposure_coverage} then gives the almost-sure limit. Bounded convergence gives the corresponding expectation because $0\leq R_{q,T}^{(m)}\leq1$.
\end{proof}

At finite $T$, exposure differs across cluster spans because the expected count is $Tp_c(q)$.

\subsection{Mean structural exposure rate}
\label{subsec:mean_structural_exposure_rate}

Thresholded coverage discards differences beyond its prescribed number of exposures. We therefore define the realized mean structural exposure rate as
\begin{equation}
    \Phi_{q,T}
    =
    \frac{1}{|S^\ast|}
    \sum_{s\in S^\ast}\frac{N_s(T)}{T}~.
    \label{eq:mean_structural_exposure_rate}
\end{equation}
Equivalently, $\Phi_{q,T}$ is the fraction of structure-epoch pairs in which the corresponding global structure is available.

\begin{proposition}[Mean structural exposure rate]
    \label{prop:mean_structural_exposure_rate}
    For every $T\geq1$,
    \begin{equation}
        \mathbb{E}[\Phi_{q,T}]
        =
        \Phi_q
        :=
        \frac{1}{|S^\ast|}\sum_{s\in S^\ast}p_s
        =
        \frac{1}{|S^\ast|}\sum_{c=1}^{q}n_c p_c(q)~.
        \label{eq:expected_mean_structural_exposure_rate}
    \end{equation}
    Moreover,
    \begin{equation}
        \Phi_{q,T}\longrightarrow\Phi_q
        \qquad\text{almost surely as }T\to\infty.
        \label{eq:mean_structural_exposure_rate_limit}
    \end{equation}
\end{proposition}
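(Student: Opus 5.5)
The plan is to reduce both claims to per-structure statements that are already established, and then aggregate over the finite set $S^\ast$.

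\textbf{Expectation identity.} By linearity of expectation applied to Eq.~\eqref{eq:mean_structural_exposure_rate},
\[
\mathbb{E}[\Phi_{q,T}]=\frac{1}{|S^\ast|}\sum_{s\in S^\ast}\frac{\mathbb{E}[N_s(T)]}{T}.
\]
Proposition~\ref{prop:repeated_structural_exposure} gives $\mathbb{E}[N_s(T)]=Tp_s$, so each summand equals $p_s$. This holds for every $T\geq1$ and uses no independence between distinct structures, in line with the remark on dependence between structures. For the span-grouped form, use $p_s=p_{c_s}(q)$ from Eq.~\eqref{eq:span_counts_and_probabilities}. Partition $S^\ast$ by the value $c_s=c$, which yields $n_c$ structures each contributing $p_c(q)$. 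Since $p_c(q)=0$ for $c>q$, the sum truncates at $c=q$. Every structure has $c_s\geq1$, so the sum starts at $c=1$.

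\textbf{Almost-sure limit.} For each fixed $s\in S^\ast$, Eq.~\eqref{eq:exposure_slln} gives $N_s(T)/T\to p_s$ almost surely. This covers the degenerate cases too: if $p_s=0$ then $N_s(T)=0$ almost surely, and if $c_s=1$ then $N_s(T)=T$ almost surely. Because $S^\ast$ is finite, the intersection of the finitely many probability-one events $\{N_s(T)/T\to p_s\}$ still has probability one. On that intersection, the finite average in Eq.~\eqref{eq:mean_structural_exposure_rate} converges to $\frac{1}{|S^\ast|}\sum_s p_s=\Phi_q$.

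There is no serious obstacle here. The only point needing care is to avoid invoking independence across structures. Both the linearity step and the intersection of finitely many almost-sure events hold under arbitrary dependence, so per-structure results suffice.
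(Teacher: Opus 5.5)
Your proposal is correct and follows essentially the same route as the paper's proof: linearity of expectation with $\mathbb{E}[N_s(T)]=Tp_s$ from Proposition~\ref{prop:repeated_structural_exposure}, grouping by cluster span, and the per-structure strong law averaged over the finite set $S^\ast$. Your remarks on intersecting finitely many probability-one events and on truncating the span sum at $c=q$ spell out steps the paper leaves implicit.
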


\begin{proof}
    Eq.~\eqref{eq:exposure_moments} gives $\mathbb{E}[N_s(T)/T]=p_s$. Linearity of expectation and grouping by cluster span give Eq.~\eqref{eq:expected_mean_structural_exposure_rate}. For each $s\in S^\ast$, Eq.~\eqref{eq:exposure_slln} gives $N_s(T)/T\to p_s$ almost surely. Averaging over the finite set $S^\ast$ gives Eq.~\eqref{eq:mean_structural_exposure_rate_limit}.
\end{proof}

Unlike fixed-threshold coverage, $\Phi_q$ retains span-dependent exposure imbalance as thresholded coverage approaches its limit. Since each epoch contains $M$ optimization steps, the corresponding average frequency among training steps is
\begin{equation}
    \frac{1}{|S^\ast|}
    \sum_{s\in S^\ast}\frac{N_s(T)}{MT}
    =
    \frac{\Phi_{q,T}}{M}
    \longrightarrow
    \frac{\Phi_q}{M}
    \qquad\text{almost surely}.
    \label{eq:mean_structural_exposure_step_rate}
\end{equation}
These quantities alone describe structural availability. They do not determine optimization behavior or predictive performance.

\subsection{Exposure epoch distribution}
\label{subsec:repeated_exposure_epoch}

The exposure count also identifies when a structure accumulates a prescribed number of exposures. For $m\geq1$, define the epoch of the $m$-th exposure as
\begin{equation}
    T_s^{(m)}
    =
    \inf\{t\in\mathbb{N}:N_s(t)\geq m\},
    \label{eq:repeated_exposure_epoch}
\end{equation}
with the convention that the infimum of the empty set is $\infty$.

\begin{proposition}[Epoch of the $m$-th structural exposure]
    \label{prop:repeated_exposure_epoch}
    If $0<p_s<1$, then $T_s^{(m)}$ follows the negative-binomial distribution
    \begin{equation}
        \mathbb{P}\left(T_s^{(m)}=\ell\right)
        =
        \binom{\ell-1}{m-1}p_s^m(1-p_s)^{\ell-m},
        \qquad \ell=m,m+1,\ldots.
        \label{eq:negative_binomial_exposure_epoch}
    \end{equation}
    If $p_s=1$, then $T_s^{(m)}=m$ almost surely. For every $p_s>0$, its expectation and variance are
    \begin{equation}
        \mathbb{E}\left[T_s^{(m)}\right]=\frac{m}{p_s},
        \qquad
        \operatorname{Var}\left[T_s^{(m)}\right]
        =\frac{m(1-p_s)}{p_s^2}~.
        \label{eq:exposure_epoch_moments}
    \end{equation}
    If $p_s=0$, then $T_s^{(m)}=\infty$ almost surely.
\end{proposition}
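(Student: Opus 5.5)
We reduce everything to Proposition~\ref{prop:repeated_structural_exposure}, which shows that the exposure indicators $X_s^{(1)},X_s^{(2)},\ldots$ are i.i.d.\ $\operatorname{Bernoulli}(p_s)$ and that $N_s(t)=\sum_{u\le t}X_s^{(u)}$. Then $T_s^{(m)}$ is the waiting time until the $m$-th success in an i.i.d.\ Bernoulli sequence. We treat the three regimes $0<p_s<1$, $p_s=1$ and $p_s=0$ separately, with the interior case carrying the main content.

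For $0<p_s<1$ and $\ell\ge m$, we identify the event $\{T_s^{(m)}=\ell\}$ with $\{N_s(\ell-1)=m-1,\;X_s^{(\ell)}=1\}$. Since $N_s(t)$ is nondecreasing in $t$ with unit increments, the first time it reaches $m$ is $\ell$ exactly when it equals $m-1$ at time $\ell-1$ and jumps at time $\ell$. The two events depend on disjoint blocks of the independent indicators. Combining the binomial law $N_s(\ell-1)\sim\operatorname{Binomial}(\ell-1,p_s)$ with $\mathbb{P}(X_s^{(\ell)}=1)=p_s$ gives Eq.~\eqref{eq:negative_binomial_exposure_epoch}. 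Values $\ell<m$ are impossible because $N_s(\ell)\le\ell$.

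We also need $T_s^{(m)}<\infty$ almost surely, so that the mass function sums to one. This follows from Eq.~\eqref{eq:exposure_slln}: $N_s(T)/T\to p_s>0$, hence $N_s(T)\to\infty$ almost surely.

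For the moments, the cleanest route avoids summing the series. Write $T_s^{(m)}=\sum_{j=1}^m G_j$, where $G_j$ is the gap between the $(j-1)$-th and $j$-th exposures. By independence of the indicators (a strong-Markov or renewal argument at the stopping times $T_s^{(j-1)}$), the $G_j$ are i.i.d.\ geometric on $\{1,2,\ldots\}$ with parameter $p_s$. Each has mean $1/p_s$ and variance $(1-p_s)/p_s^2$, and summing gives Eq.~\eqref{eq:exposure_epoch_moments}. This decomposition covers $p_s=1$ as well, since then every $G_j=1$. Justifying the i.i.d.\ gap decomposition is the only step requiring care. We would do it directly: compute $\mathbb{P}(G_1=g_1,\ldots,G_m=g_m)=\prod_j (1-p_s)^{g_j-1}p_s$ from the fact that this event fixes the values of $X_s^{(1)},\ldots,X_s^{(g_1+\cdots+g_m)}$. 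As an alternative, the moments can be obtained by differentiating the generating function of the negative-binomial law.

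The boundary cases are immediate. If $p_s=1$, then $X_s^{(t)}=1$ almost surely for all $t$, so $N_s(t)=t$ and $T_s^{(m)}=m$. If $p_s=0$, then $N_s(t)=0$ almost surely for all $t$, so the set in Eq.~\eqref{eq:repeated_exposure_epoch} is empty and $T_s^{(m)}=\infty$ by convention.
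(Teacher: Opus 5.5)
Your proposal is correct and follows essentially the same route as the paper. It uses the identification $\{T_s^{(m)}=\ell\}=\{N_s(\ell-1)=m-1\}\cap\{X_s^{(\ell)}=1\}$ for the negative-binomial mass function, the decomposition of $T_s^{(m)}$ into $m$ i.i.d.\ geometric gaps for the moments, and the same direct treatment of $p_s=1$ and $p_s=0$. Your joint-pmf justification of the i.i.d.\ gaps and your note on almost-sure finiteness are slightly more careful than the paper's appeal to memorylessness, while the paper additionally derives the geometric mean and variance by differentiating the geometric series.
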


\begin{proof}
Let $X_s^{(t)} = \mathbf{1}\{s \in S^{(t)}\}$ denote the exposure indicator of structure $s$ at epoch $t$, which forms an i.i.d.\ sequence of $\operatorname{Bernoulli}(p_s)$ random variables. Let $N_s(t) = \sum_{i=1}^t X_s^{(i)}$ be the cumulative exposure count up to epoch $t$.

\paragraph{Case 1: $0 < p_s < 1$.}
For any integer $\ell \ge m$, the event $T_s^{(m)} = \ell$ indicates that the $m$-th exposure of structure $s$ occurs precisely at epoch $\ell$. This occurs if and only if structure $s$ is exposed exactly $m-1$ times in the first $\ell-1$ epochs and is exposed again at epoch $\ell$:
\begin{equation*}
    \{T_s^{(m)} = \ell\} = \{N_s(\ell - 1) = m - 1\} \cap \{X_s^{(\ell)} = 1\}.
\end{equation*}
Because epoch reshufflings are independent, $N_s(\ell - 1)$ and $X_s^{(\ell)}$ are independent random variables. Using $N_s(\ell - 1) \sim \operatorname{Binomial}(\ell - 1, p_s)$, we evaluate:
\begin{align*}
    \mathbb{P}\left(T_s^{(m)} = \ell\right)
    &= \mathbb{P}\left(N_s(\ell - 1) = m - 1\right) \cdot \mathbb{P}\left(X_s^{(\ell)} = 1\right) \\
    &= \left[ \binom{\ell - 1}{m - 1} p_s^{m-1} (1 - p_s)^{(\ell - 1) - (m - 1)} \right] \cdot p_s \\
    &= \binom{\ell - 1}{m - 1} p_s^m (1 - p_s)^{\ell - m}, \qquad \ell = m, m+1, \ldots,
\end{align*}
which establishes Eq.~\eqref{eq:negative_binomial_exposure_epoch}.

To derive the moments in Eq.~\eqref{eq:exposure_epoch_moments}, we represent $T_s^{(m)}$ as the sum of $m$ inter-arrival waiting times:
\begin{equation*}
    T_s^{(m)} = \sum_{k=1}^m W_k,
\end{equation*}
where $W_k$ represents the number of epochs elapsed between the $(k-1)$-th and $k$-th exposure of structure $s$. By the memoryless property of independent Bernoulli trials, $W_1, W_2, \ldots, W_m$ are independent and identically distributed Geometric random variables on $\{1, 2, \ldots\}$ with success probability $p_s$:
\begin{equation*}
    \mathbb{P}(W_k = w) = (1 - p_s)^{w-1} p_s, \qquad w \in \{1, 2, \ldots\}.
\end{equation*}

First, applying linearity of expectation gives $\mathbb{E}\left[T_s^{(m)}\right] = \displaystyle\sum_{k=1}^m \mathbb{E}[W_k]$. Setting $\overline p_s = 1 - p_s$, we compute the expectation of an individual waiting time $\mathbb{E}[W_k]$:
\begin{align*}
    \mathbb{E}[W_k] &= \sum_{w=1}^\infty w (1 - p_s)^{w-1} p_s = p_s \sum_{w=1}^\infty w \overline p_s^{w-1} \\
    &= p_s \frac{\mathrm{d}}{\mathrm{d}\overline p_s} \left( \sum_{w=0}^\infty \overline p_s^w \right) = p_s \frac{\mathrm{d}}{\mathrm{d}\overline p_s} \left( \frac{1}{1-\overline p_s} \right) = p_s \cdot \frac{1}{(1-\overline p_s)^2} = \frac{p_s}{p_s^2} = \frac{1}{p_s}.
\end{align*}
Summing over the $m$ independent inter-arrival periods yields:
\begin{equation*}
    \mathbb{E}\left[T_s^{(m)}\right] = \sum_{k=1}^m \mathbb{E}[W_k] = \sum_{k=1}^m \frac{1}{p_s} = \frac{m}{p_s}.
\end{equation*}

Second, for the variance of the finite sum, we expand:
\begin{equation*}
    \operatorname{Var}\left[T_s^{(m)}\right] = \sum_{k=1}^m \operatorname{Var}(W_k) + 2 \!\!\!\!\!\sum_{1 \le i < j \le m}\!\!\!\!\! \operatorname{Cov}(W_i, W_j).
\end{equation*}
Because the epoch trials determining each $W_k$ are stochastically independent across disjoint epoch intervals, all pairwise covariances vanish ($\operatorname{Cov}(W_i, W_j) = 0$ for $i \neq j$). Thus we have, $\operatorname{Var}\left[T_s^{(m)}\right] = \displaystyle\sum_{k=1}^m \operatorname{Var}(W_k)$.

To determine $\operatorname{Var}(W_k) = \mathbb{E}[W_k^2] - (\mathbb{E}[W_k])^2$, we first evaluate the second factorial moment:
\begin{align*}
    \mathbb{E}[W_k(W_k - 1)] &= \sum_{w=2}^\infty w(w - 1) (1 - p_s)^{w-1} p_s = p_s \overline p_s \sum_{w=2}^\infty w(w - 1) \overline p_s^{w-2} \\
    &= p_s \overline p_s \frac{\mathrm{d}^2}{\mathrm{d}\overline p_s^2} \left( \frac{1}{1-\overline p_s} \right) = p_s \overline p_s \cdot \frac{2}{(1-\overline p_s)^3} = \frac{2 p_s (1 - p_s)}{p_s^3} = \frac{2(1 - p_s)}{p_s^2}.
\end{align*}
The second raw moment is therefore:
\begin{equation*}
    \mathbb{E}[W_k^2] = \mathbb{E}[W_k(W_k - 1)] + \mathbb{E}[W_k] = \frac{2(1 - p_s)}{p_s^2} + \frac{1}{p_s} = \frac{2 - p_s}{p_s^2},
\end{equation*}
which yields the single-variable variance:
\begin{equation*}
    \operatorname{Var}(W_k) = \frac{2 - p_s}{p_s^2} - \frac{1}{p_s^2} = \frac{1 - p_s}{p_s^2}.
\end{equation*}
Summing across $m$ independent variables completes the variance expression in Eq.~\eqref{eq:exposure_epoch_moments}:
\begin{equation*}
    \operatorname{Var}\left[T_s^{(m)}\right] = \sum_{k=1}^m \operatorname{Var}(W_k) = \sum_{k=1}^m \frac{1 - p_s}{p_s^2} = \frac{m(1 - p_s)}{p_s^2}.
\end{equation*}

\paragraph{Case 2: $p_s = 1$.}
If $p_s = 1$, then $X_s^{(t)} = 1$ almost surely for all $t \in \mathbb{N}$, so $N_s(t) = t$ almost surely. The stopping condition $N_s(t) \ge m$ simplifies to $t \ge m$, yielding $T_s^{(m)} = m$ almost surely. Substituting $p_s = 1$ into Eq.~\eqref{eq:exposure_epoch_moments} confirms $\mathbb{E}\left[T_s^{(m)}\right] = m$ and $\operatorname{Var}\left[T_s^{(m)}\right] = 0$.

\paragraph{Case 3: $p_s = 0$.}
If $p_s = 0$, Proposition~\ref{prop:repeated_structural_exposure} gives $N_s(T) = 0$ almost surely for all $T \in \mathbb{N}$. Since $m \ge 1$, the exposure threshold is never satisfied, meaning $\{t \in \mathbb{N} : N_s(t) \ge m\} = \emptyset$ almost surely. By convention, $\inf \emptyset = \infty$, so $T_s^{(m)} = \infty$ almost surely.
\end{proof}

The expected exposure epoch characterizes how cluster span affects the timing of repeated structural exposure. A one-cluster structure reaches its $m$-th exposure at epoch $m$. When $q<K$, observable structures with larger spans reach the same threshold later on average. Structures with $c_s>q$ never reach it.

\subsection{Direct-incidence message availability}
\label{subsec:direct_incidence_message_availability}

Structural exposure affects message passing through the incidences available in each lifted mini-batch. Write $\tau\prec\sigma$ for a direct incidence satisfying $\vertices_\tau\subseteq\vertices_\sigma$. We call $\tau$ the lower structure and $\sigma$ the higher structure. A model may pass messages in either or both directions. We denote these directed instances by $\mu_{\tau\to\sigma}$ and $\mu_{\sigma\to\tau}$.

\begin{definition}[Direct-incidence message compatibility]
    \label{def:direct_incidence_message_compatibility}
    A direct-incidence message rule is compatible with mini-batch lifting if the following condition holds for every global direct incidence $\tau\prec\sigma$. Each direction used by the model is available exactly when the lifted mini-batch contains the higher structure $\sigma$. Equivalently, for $\mu\in\{\mu_{\tau\to\sigma},\mu_{\sigma\to\tau}\}$,
    \begin{equation*}
        \{\mu\mathrm{~is~available~in~mini}\text{-}\mathrm{batch~}i\mathrm{~of~epoch~}t\}
        \iff
        I_\sigma\subseteq Z_i^{(t)}~.
    \end{equation*}
\end{definition}

This condition isolates structural availability. It does not require the numerical value of a message to remain unchanged when it is computed on a mini-batch.

For a directed message $\mu$ associated with $\tau\prec\sigma$, define its per-epoch availability indicator as
\begin{equation}
    J_\mu^{(t)}
    =
    \mathbf{1}\{\mu\text{ is available in epoch }t\}~.
    \label{eq:direct_message_indicator}
\end{equation}
The inclusion $\vertices_\tau\subseteq\vertices_\sigma$ implies $I_\tau\subseteq I_\sigma$. Definitions~\ref{def:lifting_comp} and~\ref{def:direct_incidence_message_compatibility} therefore give
\begin{equation}
    J_\mu^{(t)}=X_\sigma^{(t)}
    \qquad
    \text{for either message direction.}
    \label{eq:message_higher_structure_identity}
\end{equation}
Thus, the unconditional availability of either directed message is governed by the cluster span of its higher endpoint.

For target updates, the relevant quantity is conditional message availability. Let $b(\mu)\in\{\tau,\sigma\}$ denote the target of $\mu$. For any target with $p_{b(\mu)}>0$, define
\begin{equation}
    \alpha_{\mu\mid b(\mu)}
    =
    \mathbb{P}\left(
        J_\mu^{(t)}=1
        \mid
        X_{b(\mu)}^{(t)}=1
    \right).
    \label{eq:conditional_message_availability}
\end{equation}

\begin{proposition}[Conditional availability of a direct-incidence message]
    \label{prop:conditional_direct_message_availability}
    Let $\tau\prec\sigma$. If $c_\sigma\leq q$, then a lower-to-higher message satisfies
    \begin{equation}
        \alpha_{\mu_{\tau\to\sigma}\mid\sigma}=1~.
        \label{eq:lower_to_higher_availability}
    \end{equation}
    If $c_\tau\leq q$, then a higher-to-lower message satisfies
    \begin{equation}
        \alpha_{\mu_{\sigma\to\tau}\mid\tau}
        =
        \frac{p_\sigma}{p_\tau}
        =
        \begin{cases}
            \displaystyle
            \frac{\binom{q-c_\tau}{c_\sigma-c_\tau}}
                 {\binom{K-c_\tau}{c_\sigma-c_\tau}},
            & c_\sigma\leq q,\\[1.1em]
            0,
            & c_\sigma>q.
        \end{cases}
        \label{eq:higher_to_lower_availability}
    \end{equation}
\end{proposition}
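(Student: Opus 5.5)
The plan is to reduce both conditional probabilities to ratios of unconditional exposure probabilities, using the identity $J_\mu^{(t)}=X_\sigma^{(t)}$ from Eq.~\eqref{eq:message_higher_structure_identity} together with the nesting $I_\tau\subseteq I_\sigma$. I would then evaluate those ratios with Lemma~\ref{lemma:per_epoch_recovery}. Throughout, fix an epoch $t$. Since each cluster index lies in exactly one mini-batch, the events $\{X_\sigma^{(t)}=1\}$ and $\{X_\tau^{(t)}=1\}$ can be written as $\{\exists i: I_\sigma\subseteq Z_i^{(t)}\}$ and $\{\exists i: I_\tau\subseteq Z_i^{(t)}\}$ by Definition~\ref{def:lifting_comp}.

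For the lower-to-higher direction, the target is $\sigma$ and $c_\sigma\le q$ gives $p_\sigma>0$, so the conditional in Eq.~\eqref{eq:conditional_message_availability} is well defined. By Eq.~\eqref{eq:message_higher_structure_identity}, $J^{(t)}_{\mu_{\tau\to\sigma}}=X^{(t)}_\sigma$. The conditional probability therefore equals $\mathbb{P}(X_\sigma^{(t)}=1\mid X_\sigma^{(t)}=1)=1$, which is Eq.~\eqref{eq:lower_to_higher_availability}. One point needs care: availability must occur in the same mini-batch that contains $\sigma$. This holds because the mini-batch containing $I_\sigma$ is unique, so the per-epoch indicator and the per-mini-batch condition in Definition~\ref{def:direct_incidence_message_compatibility} coincide.

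For the higher-to-lower direction, the target is $\tau$ and $c_\tau\le q$ gives $p_\tau>0$. Again $J^{(t)}_{\mu_{\sigma\to\tau}}=X^{(t)}_\sigma$. The key step is the event inclusion $\{X_\sigma^{(t)}=1\}\subseteq\{X_\tau^{(t)}=1\}$: if $I_\sigma\subseteq Z_i^{(t)}$, then $I_\tau\subseteq I_\sigma\subseteq Z_i^{(t)}$. Hence $\mathbb{P}(X_\sigma=1, X_\tau=1)=p_\sigma$, and the conditional probability is $p_\sigma/p_\tau$. If $c_\sigma>q$, the lemma gives $p_\sigma=0$.

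If $c_\sigma\le q$, I would substitute the first form from the lemma's proof, $p_s=\binom{K-c_s}{q-c_s}/\binom{K-1}{q-1}$, so that the common denominator cancels. This leaves $\binom{K-c_\sigma}{q-c_\sigma}/\binom{K-c_\tau}{q-c_\tau}$. Writing $a=c_\sigma-c_\tau\ge 0$, $N=K-c_\tau$ and $n=q-c_\tau$, I then apply the identity $\binom{N-a}{n-a}/\binom{N}{n}=\binom{n}{a}/\binom{N}{a}$. This identity follows by expanding the factorials, or by counting subsets of size $n$ of an $N$-set that contain a fixed $a$-set. The result is Eq.~\eqref{eq:higher_to_lower_availability}.

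As a cross-check, I would give a direct combinatorial reading. Conditional on the $c_\tau$ indices of $I_\tau$ sharing a mini-batch, the remaining $q-c_\tau$ slots form a uniform subset of the other $K-c_\tau$ indices, and these slots must contain the $a$ fixed indices of $I_\sigma\setminus I_\tau$. I expect the only mildly delicate step to be this binomial simplification and the bookkeeping of the edge case $a=0$, where the ratio equals $1$. Everything else follows immediately from the earlier results.
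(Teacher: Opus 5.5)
Your proposal is correct and follows the paper's proof: the identity $J_\mu^{(t)}=X_\sigma^{(t)}$ settles the lower-to-higher case, the inclusion $\{X_\sigma^{(t)}=1\}\subseteq\{X_\tau^{(t)}=1\}$ gives $p_\sigma/p_\tau$, and the closed form reduces to $\binom{K-c_\sigma}{q-c_\sigma}/\binom{K-c_\tau}{q-c_\tau}$ before the same binomial rewriting. The only cosmetic difference is how that intermediate ratio is reached: the paper uses the argument you give as a cross-check (the remaining $q-c_\tau$ slots form a uniform subset of the $K-c_\tau$ outside clusters), whereas your primary route cancels the common denominator in the first form of $p_s$ from the lemma's proof.
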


\begin{proof}
    Eq.~\eqref{eq:message_higher_structure_identity} shows that either directed message is available exactly when $\sigma$ is recovered. For the lower-to-higher direction, target recovery and message availability are the same event. This gives Eq.~\eqref{eq:lower_to_higher_availability}.

    For the higher-to-lower direction, $I_\tau\subseteq I_\sigma$ implies $\{X_\sigma^{(t)}=1\}\subseteq\{X_\tau^{(t)}=1\}$. Hence,
    \begin{equation*}
        \alpha_{\mu_{\sigma\to\tau}\mid\tau}
        =
        \mathbb{P}\left(X_\sigma^{(t)}=1\mid X_\tau^{(t)}=1\right)
        =
        \frac{p_\sigma}{p_\tau}~.
    \end{equation*}
    Conditional on recovery of $\tau$, the other $q-c_\tau$ batch members form a uniform subset of the $K-c_\tau$ clusters outside $I_\tau$. Message availability requires this subset to contain the fixed set $I_\sigma\setminus I_\tau$. When $c_\sigma\leq q$, its conditional probability is
    \[
        \frac{\binom{K-c_\sigma}{q-c_\sigma}}
             {\binom{K-c_\tau}{q-c_\tau}}
        =
        \frac{\binom{q-c_\tau}{c_\sigma-c_\tau}}
             {\binom{K-c_\tau}{c_\sigma-c_\tau}}~.
    \]
    If $c_\sigma>q$, the higher structure and its incident message are unavailable.
\end{proof}

The two directions have the same unconditional availability but different conditional availability. Every directly incident lower-to-higher message is present when $\sigma$ is updated. A higher-to-lower message is present only when the additional clusters supporting $\sigma$ accompany $I_\tau$. The asymmetry therefore comes from the target-update event, not from a direction-dependent sampling rule.

\subsection{Repeated direct-incidence message exposure}
\label{subsec:repeated_direct_message_exposure}

For a directed message $\mu$ associated with $\tau\prec\sigma$, define its exposure count by
\begin{equation}
    N_\mu(T)=\sum_{t=1}^{T}J_\mu^{(t)}~.
    \label{eq:direct_message_exposure_count}
\end{equation}

\begin{proposition}[Repeated direct-incidence message exposure]
    \label{prop:repeated_direct_message_exposure}
    Assume independent epoch reshuffling and both compatibility conditions. Either direction of a direct-incidence message then satisfies
    \begin{equation}
        N_\mu(T)=N_\sigma(T)
        \sim
        \operatorname{Binomial}(T,p_\sigma)~.
        \label{eq:direct_message_binomial_exposure}
    \end{equation}
    Let $b=b(\mu)$ be an observable target, so that $p_b>0$. For every $n$ with $\mathbb{P}(N_b(T)=n)>0$,
    \begin{equation}
        N_\mu(T)\mid N_b(T)=n
        \sim
        \operatorname{Binomial}\left(n,\alpha_{\mu\mid b}\right).
        \label{eq:conditional_direct_message_count}
    \end{equation}
    Consequently,
    \begin{equation}
        \mathbb{E}\left[N_\mu(T)\mid N_b(T)=n\right]
        =n\alpha_{\mu\mid b},
        \qquad
        \operatorname{Var}\left[N_\mu(T)\mid N_b(T)=n\right]
        =n\alpha_{\mu\mid b}\left(1-\alpha_{\mu\mid b}\right),
        \label{eq:conditional_direct_message_count_moments}
    \end{equation}
    and
    \begin{equation}
        \frac{N_\mu(T)}{N_b(T)}
        \longrightarrow
        \alpha_{\mu\mid b}
        \qquad
        \text{almost surely as }T\to\infty.
        \label{eq:conditional_direct_message_rate_limit}
    \end{equation}
    The ratio in Eq.~\eqref{eq:conditional_direct_message_rate_limit} is considered once $N_b(T)>0$, which occurs eventually almost surely.
\end{proposition}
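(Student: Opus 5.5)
We reduce every claim to the per-epoch indicators and apply Proposition~\ref{prop:repeated_structural_exposure} together with Proposition~\ref{prop:conditional_direct_message_availability}.

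\textbf{Unconditional law.} By Eq.~\eqref{eq:message_higher_structure_identity}, $J_\mu^{(t)}=X_\sigma^{(t)}$ for every epoch $t$ and either direction. Summing over $t$ gives $N_\mu(T)=N_\sigma(T)$ pathwise. Proposition~\ref{prop:repeated_structural_exposure} then gives $N_\sigma(T)\sim\operatorname{Binomial}(T,p_\sigma)$, which is Eq.~\eqref{eq:direct_message_binomial_exposure}.

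\textbf{Conditional law.} Write $b=b(\mu)$. The pairs $(X_b^{(t)},J_\mu^{(t)})$ are i.i.d.\ across epochs, since each pair is a function of the permutation drawn in epoch $t$. In both cases the message event lies inside the target event, so $J_\mu^{(t)}\le X_b^{(t)}$:
\begin{itemize}
\item if $b=\sigma$, then $J_\mu^{(t)}=X_\sigma^{(t)}$;
\item if $b=\tau$, then $I_\tau\subseteq I_\sigma$ gives $X_\sigma^{(t)}\le X_\tau^{(t)}$.
\end{itemize}
Hence $N_\mu(T)=\sum_{t:X_b^{(t)}=1}J_\mu^{(t)}$.

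Condition on the full vector $(X_b^{(1)},\dots,X_b^{(T)})$, taking any realization with exactly $n$ ones. By independence across epochs, the $J_\mu^{(t)}$ with $X_b^{(t)}=1$ are conditionally independent. Each equals $1$ with probability $\mathbb{P}(J_\mu^{(t)}=1\mid X_b^{(t)}=1)=\alpha_{\mu\mid b}$, as defined in Eq.~\eqref{eq:conditional_message_availability} and computed in Proposition~\ref{prop:conditional_direct_message_availability}. The conditional law is therefore $\operatorname{Binomial}(n,\alpha_{\mu\mid b})$. This law depends only on $n$, not on which epochs were successes. Averaging over all patterns with $N_b(T)=n$, which has positive probability by assumption, yields Eq.~\eqref{eq:conditional_direct_message_count}. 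The moments in Eq.~\eqref{eq:conditional_direct_message_count_moments} follow from the standard binomial formulas.

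\textbf{Ratio limit.} This is the step needing the most care, since $N_b(T)$ is random and conditioning arguments do not directly give almost-sure convergence. I would avoid conditioning and use the strong law of large numbers twice instead. From Eq.~\eqref{eq:exposure_slln}, $N_b(T)/T\to p_b>0$ almost surely, so $N_b(T)>0$ eventually. Also $N_\mu(T)/T=N_\sigma(T)/T\to p_\sigma$ almost surely. Hence
\[
\frac{N_\mu(T)}{N_b(T)}\longrightarrow \frac{p_\sigma}{p_b}\quad\text{almost surely}.
\]
This limit equals $\alpha_{\mu\mid b}$ in both cases:
\begin{itemize}
\item if $b=\sigma$, it is $1$, by Eq.~\eqref{eq:lower_to_higher_availability};
\item if $b=\tau$, it is $p_\sigma/p_\tau$, by Eq.~\eqref{eq:higher_to_lower_availability}.
\end{itemize}
When $c_\sigma>q$ we have $p_\sigma=0$, so both sides of the limit are $0$, consistent with the zero case of Eq.~\eqref{eq:higher_to_lower_availability}.
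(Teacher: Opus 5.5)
Your proposal is correct and follows essentially the same route as the paper. The identity $J_\mu^{(t)}=X_\sigma^{(t)}$ gives the unconditional $\operatorname{Binomial}(T,p_\sigma)$ law. Nesting the message event inside the target event, with epochs i.i.d., gives the conditional $\operatorname{Binomial}(n,\alpha_{\mu\mid b})$ law; your explicit conditioning on $(X_b^{(1)},\dots,X_b^{(T)})$ is a more careful version of the paper's three-category classification of epochs. Two applications of the strong law then yield the ratio limit $p_\sigma/p_b=\alpha_{\mu\mid b}$.
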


\begin{proof}
By direct-incidence message compatibility,
\begin{equation*}
    J_\mu^{(t)} = X_\sigma^{(t)}
\end{equation*}
for every epoch $t$, where $X_\sigma^{(t)}$ indicates that the higher structure $\sigma$ is recovered. Hence
\begin{equation*}
    N_\mu(T)=\sum_{t=1}^T J_\mu^{(t)}=\sum_{t=1}^T X_\sigma^{(t)}=N_\sigma(T)~.
\end{equation*}
Under independent epoch reshuffling, the variables $X_\sigma^{(t)}$ are i.i.d. Bernoulli\((p_\sigma)\). Therefore
\begin{equation*}
    N_\mu(T)=N_\sigma(T)\sim\operatorname{Binomial}(T,p_\sigma)~.
\end{equation*}

Now let \(b=b(\mu)\) be the target and assume \(p_b>0\). Since message availability implies target availability,
\begin{equation*}
\mathbb P\!\left(J_\mu^{(t)}=1\mid X_b^{(t)}=1\right)
=
\frac{p_\sigma}{p_b}
=: \alpha_{\mu\mid b}.
\end{equation*}

Since message availability implies target availability, each epoch belongs to one of three categories. Their probabilities are $p_{\sigma}$, $p_{b} - p_{\sigma}$, and $1-p_{b}$. These categories correspond to target with message, target without message, and absent target. They are independent and identically distributed across epochs. Conditional on $N_b(T)=n$, the $n$ epochs in which the
target is present therefore contain a binomial number of message exposures:
\begin{equation*}
N_\mu(T) \mid N_b(T)=n \sim \operatorname{Binomial}(n,\alpha_{\mu\mid b}).
\end{equation*}
Thus
\begin{equation*}
\mathbb E[N_\mu(T)\mid N_b(T)=n]
=n\alpha_{\mu\mid b},
\end{equation*}
and
\begin{equation*}
\operatorname{Var}(N_\mu(T)\mid N_b(T)=n)
=n\alpha_{\mu\mid b}(1-\alpha_{\mu\mid b}).
\end{equation*}

Finally, by the strong law of large numbers we have
\begin{equation*}
\frac{N_\mu(T)}{T}\rightarrow p_\sigma,~~~\frac{N_b(T)}{T}\rightarrow p_b\qquad\text{almost surely as }T\to\infty.
\end{equation*}
Since \(p_b>0\),
\begin{equation*}
\frac{N_\mu(T)}{N_b(T)}=\frac{N_\mu(T)/T}{N_b(T)/T}\rightarrow\frac{p_\sigma}{p_b}=\alpha_{\mu\mid b}\qquad\text{almost surely as }T\to\infty.
\end{equation*}
\end{proof}

The conditional rate $\alpha_{\mu\mid b}$ is the asymptotic fraction of updates of $b$ that can include $\mu$. It equals one for lower-to-higher messages. For higher-to-lower messages, it depends on the additional cluster span of the higher structure. It may remain small even when the lower target is updated frequently.

\subsection{Layerwise incidence aggregation}
\label{subsec:layerwise_incidence_aggregation}

The availability probabilities yield a layerwise consequence. Let $\mathcal{M}_{\mathrm{inc}}(b)$ contain the full-domain direct-incidence messages entering an observable target $b$. Fix a layer $\ell$, its inputs, its parameters, and every message coefficient. Let $\mathbf{m}_\mu^{(\ell)}$ be the resulting full-domain contribution of $\mu\in\mathcal{M}_{\mathrm{inc}}(b)$. The additive direct-incidence component of the full-domain pre-activation is
\begin{equation}
    \mathbf{z}_{b,\mathrm{inc}}^{(\ell+1)}=\!\!\!\!\!\sum_{\mu\in\mathcal{M}_{\mathrm{inc}}(b)}\!\!\!\!\!\mathbf{m}_\mu^{(\ell)}~.
    \label{eq:full_direct_incidence_aggregation}
\end{equation}
When $b$ is recovered in epoch $t$, define the availability-masked component as
\begin{equation}
    \widehat{\mathbf{z}}_{b,\mathrm{inc}}^{(\ell+1,t)}=\!\!\!\!\!\sum_{\mu\in\mathcal{M}_{\mathrm{inc}}(b)}\!\!\!\!\!J_\mu^{(t)}\mathbf{m}_\mu^{(\ell)}~.
    \label{eq:sampled_direct_incidence_aggregation}
\end{equation}

\begin{proposition}[Expected direct-incidence aggregation]
    \label{prop:expected_direct_incidence_aggregation}
    Under the fixed-input availability mask above,
    \begin{equation}
        \mathbb{E}\left[\widehat{\mathbf{z}}_{b,\mathrm{inc}}^{(\ell+1,t)}\mid X_b^{(t)}=1\right]=\!\!\!\!\!\sum_{\mu\in\mathcal{M}_{\mathrm{inc}}(b)}\!\!\!\!\!\alpha_{\mu\mid b}\mathbf{m}_\mu^{(\ell)}~.
        \label{eq:expected_direct_incidence_aggregation}
    \end{equation}
    Therefore, the conditional mean aggregation discrepancy satisfies
    \begin{equation}
        \mathbf{z}_{b,\mathrm{inc}}^{(\ell+1)}-\mathbb{E}\left[\widehat{\mathbf{z}}_{b,\mathrm{inc}}^{(\ell+1,t)}\mid X_b^{(t)}=1\right]=\!\!\!\!\!\sum_{\mu\in\mathcal{M}_{\mathrm{inc}}(b)}\!\!\!\!\!\left(1-\alpha_{\mu\mid b}\right)
        \mathbf{m}_\mu^{(\ell)},
        \label{eq:direct_incidence_mean_discrepancy}
    \end{equation}
    and, for any norm,
    \begin{equation}
        \left\|\mathbf{z}_{b,\mathrm{inc}}^{(\ell+1)}-\mathbb{E}\left[\widehat{\mathbf{z}}_{b,\mathrm{inc}}^{(\ell+1,t)}\mid X_b^{(t)}=1\right]\right\|\leq\!\!\!\!\!\sum_{\mu\in\mathcal{M}_{\mathrm{inc}}(b)}\!\!\!\!\!\left(1-\alpha_{\mu\mid b}\right)\left\|\mathbf{m}_\mu^{(\ell)}\right\|~.
        \label{eq:direct_incidence_mean_discrepancy_bound}
    \end{equation}
\end{proposition}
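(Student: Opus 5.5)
The plan is to use linearity of conditional expectation on the finite sum in Eq.~\eqref{eq:sampled_direct_incidence_aggregation}. Everything else follows from the conditional availability rates of Proposition~\ref{prop:conditional_direct_message_availability}.

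The key observation is the setting of Proposition~\ref{prop:expected_direct_incidence_aggregation}. The layer index $\ell$, its inputs, its parameters and all message coefficients are fixed. Hence each contribution $\mathbf{m}_\mu^{(\ell)}$ is a deterministic vector, and the only randomness in $\widehat{\mathbf{z}}_{b,\mathrm{inc}}^{(\ell+1,t)}$ comes from the availability indicators $J_\mu^{(t)}$. These indicators are functions of the epoch-$t$ permutation alone.

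Since $b$ is observable, $p_b>0$, so conditioning on $\{X_b^{(t)}=1\}$ is well defined. The set $\mathcal{M}_{\mathrm{inc}}(b)$ is finite because $S^\ast$ is finite. I would therefore write
\[
\mathbb{E}\bigl[\widehat{\mathbf{z}}_{b,\mathrm{inc}}^{(\ell+1,t)}\mid X_b^{(t)}=1\bigr]=\sum_{\mu\in\mathcal{M}_{\mathrm{inc}}(b)}\mathbb{E}\bigl[J_\mu^{(t)}\mid X_b^{(t)}=1\bigr]\,\mathbf{m}_\mu^{(\ell)}.
\]
Each conditional expectation of an indicator is the conditional probability $\alpha_{\mu\mid b}$ from Eq.~\eqref{eq:conditional_message_availability}. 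This gives Eq.~\eqref{eq:expected_direct_incidence_aggregation}.

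The explicit values of the $\alpha_{\mu\mid b}$ are not needed for the identity. They are $1$ for lower-to-higher messages and $p_\sigma/p_\tau$ for higher-to-lower messages, by Proposition~\ref{prop:conditional_direct_message_availability}. No independence between different messages is needed either, since linearity suffices; I would note this explicitly, in line with the earlier remark on dependence between structures.

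The discrepancy identity Eq.~\eqref{eq:direct_incidence_mean_discrepancy} then follows by subtracting from the full aggregation Eq.~\eqref{eq:full_direct_incidence_aggregation} and collecting the terms for each $\mu$. For the norm bound Eq.~\eqref{eq:direct_incidence_mean_discrepancy_bound}, I would apply the triangle inequality and absolute homogeneity. This uses $0\le\alpha_{\mu\mid b}\le1$, so $|1-\alpha_{\mu\mid b}|=1-\alpha_{\mu\mid b}$.

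The only delicate point is interpretive rather than computational. $J_\mu^{(t)}$ is defined as availability in epoch $t$, while $\widehat{\mathbf{z}}$ is intended as the aggregation in the mini-batch containing $b$. Under lifting compatibility, $b$ lies in at most one mini-batch per epoch. Any available $\mu$ entering $b$ has $I_b\subseteq I_\sigma$, so $\sigma$ is in that same mini-batch. The epoch-level indicator therefore coincides with availability in $b$'s mini-batch, and the masked sum is well defined. I would verify this in one sentence before the linearity step.
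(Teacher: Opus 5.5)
Your proposal is correct and follows the paper's proof. Both apply linearity of conditional expectation term by term using $\mathbb{E}[J_\mu^{(t)}\mid X_b^{(t)}=1]=\alpha_{\mu\mid b}$, then subtract from the full aggregation and apply the triangle inequality; your extra checks (well-defined conditioning via $p_b>0$, using $\alpha_{\mu\mid b}\le 1$ to drop the absolute value, and identifying the epoch-level indicator with availability in $b$'s mini-batch) are details the paper leaves implicit.
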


\begin{proof}
    Conditional linearity of expectation and Eq.~\eqref{eq:conditional_message_availability} give
    \[
        \mathbb{E}\left[
            J_\mu^{(t)}\mathbf{m}_\mu^{(\ell)}
            \mid X_b^{(t)}=1
        \right]
        =
        \alpha_{\mu\mid b}\mathbf{m}_\mu^{(\ell)}
    \]
    for each incoming message. Summing proves Eq.~\eqref{eq:expected_direct_incidence_aggregation}. Subtraction from Eq.~\eqref{eq:full_direct_incidence_aggregation} gives Eq.~\eqref{eq:direct_incidence_mean_discrepancy}. The triangle inequality gives Eq.~\eqref{eq:direct_incidence_mean_discrepancy_bound}. Independence between incident messages is not required.
\end{proof}

For lower-to-higher aggregation into $\sigma$, every direct boundary message has unit availability whenever $\sigma$ is updated. For higher-to-lower aggregation into $\tau$, Eq.~\eqref{eq:higher_to_lower_availability} scales each coface contribution in conditional expectation. The coefficient reduction depends on the extra clusters required by that coface. Vector cancellation prevents these coefficients from implying a monotone change in aggregate norm. Eq.~\eqref{eq:expected_direct_incidence_aggregation}-\eqref{eq:direct_incidence_mean_discrepancy_bound} instead characterize the conditional mean and bound its discrepancy.

This result is deliberately layerwise. Batch-dependent degree, incidence, Hodge, or attention normalization can change numerical message values. Representations produced by earlier sampled layers also depend on the mini-batch. Nonlinear updates prevent the expectation from passing directly through later layers. The proposition isolates structural availability at fixed inputs and parameters. It does not establish unbiasedness or convergence of the complete nonlinear network. Same-rank messages mediated by incidence paths or Hodge products require a separate witness-level analysis and are not covered here.

\subsection{Cumulative recovery entropy}
\label{subsec:entropy_cov_conv}

We associate an entropy with the event that an observable global structure has appeared by epoch $T$. Recall that
\[
    S_q^\ast=\{s\in S^\ast:c_s\leq q\}
\]
is the set of global structures whose cluster span fits within a mini-batch of size $q$. For each $s\in S_q^\ast$, define the cumulative recovery indicator
\[
    Y_s^{(T)}
    =
    \mathbf{1}\{s\in\widehat{S}_{T}^{\ast}\}.
\]
By Eq.~\eqref{eq:cumul_prob},
\[
    Y_s^{(T)}
    \sim
    \operatorname{Bernoulli}(\rho_s(T)),
    \qquad
    \rho_s(T)
    =
    \mathbb{P}\left(s\in\widehat{S}_{T}^{\ast}\right)
    =
    1-(1-p_s)^T.
\]
Let
\[
    h(x)=-x\ln x-(1-x)\ln(1-x)
\]
denote the Bernoulli entropy in nats, with $0\ln 0=0$. Since $Y_s^{(T)}$ has success probability $\rho_s(T)$, its entropy is $h(\rho_s(T))$. We define the aggregate marginal cumulative recovery entropy as
\begin{equation}
    H_{q,T}
    =
    \frac{1}{\left|S^{\ast}\right|}\sum_{s\in S_q^\ast}h\left(\rho_s(T)\right)~.
    \label{eq:cumul_entropy}
\end{equation}
Equivalently, the normalization assigns zero entropy to unobservable structures because their recovery events have probability zero.

\begin{proposition}[Cumulative recovery entropy convergence]
    \label{prop:cumul_entropy_conv}
    The aggregate marginal cumulative recovery entropy satisfies
    \[
        H_{q,T}\rightarrow0
        \qquad
        \text{as}
        \qquad
        T\rightarrow\infty.
    \]
\end{proposition}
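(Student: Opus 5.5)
The plan is to pass the limit through the finite sum defining $H_{q,T}$ in Eq.~\eqref{eq:cumul_entropy}, exactly as in the proof of Proposition~\ref{proposition:expected_coverage_conv}. Since $S^\ast$ is finite, $S_q^\ast$ is finite, and $|S^\ast|\geq|S_q^\ast|$ is a fixed positive constant (if $S^\ast$ were empty the quantity would be vacuous, so we assume $S^\ast\neq\emptyset$). It therefore suffices to show that $h(\rho_s(T))\to0$ for each fixed $s\in S_q^\ast$.

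For a fixed $s\in S_q^\ast$, Lemma~\ref{lemma:per_epoch_recovery} gives $p_s>0$, because $c_s\leq q$ makes the ratio of binomial coefficients strictly positive. Two cases arise. If $p_s=1$, which happens for example when $c_s=1$, then $\rho_s(T)=1$ for every $T\geq1$, and $h(1)=0$ by the convention $0\ln0=0$. If $0<p_s<1$, then $(1-p_s)^T\to0$, so by Eq.~\eqref{eq:cumul_prob} we have $\rho_s(T)\to1$ as in Theorem~\ref{theorem:cumulative_recovery}.

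The only point needing care is the continuity of $h$ at the endpoint $x=1$ under the convention $0\ln0=0$. We use that $x\ln x\to0$ as $x\to1$, which is elementary, and that $(1-x)\ln(1-x)\to0$ as $x\to1^-$, which is the standard limit $y\ln y\to0$ as $y\to0^+$. Together these show that $h$ is continuous on $[0,1]$ with $h(1)=0$. Hence $h(\rho_s(T))\to h(1)=0$ for every $s\in S_q^\ast$.

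Summing finitely many terms that each tend to zero and dividing by the constant $|S^\ast|$ gives $H_{q,T}\to0$. This step is routine, so no real obstacle is expected.

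Optionally, one can also record a rate. For $\rho$ near $1$, $h(\rho)\sim(1-\rho)\ln\frac{1}{1-\rho}$. This gives $h(\rho_s(T))=O\!\left(T(1-p_s)^T\ln\frac{1}{1-p_s}\right)$, which shows geometric decay of $H_{q,T}$ dominated by the smallest positive $p_s$. This refinement is not needed for the statement.
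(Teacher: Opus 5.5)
Your proposal is correct and matches the paper's proof: $p_s>0$ on $S_q^\ast$ by Lemma~\ref{lemma:per_epoch_recovery}, hence $\rho_s(T)\to1$, then continuity of $h$ on $[0,1]$ with $h(1)=0$, and finally the limit passed through the finite sum. Your case split on $p_s=1$, your explicit check of continuity at the endpoint, and the optional rate are correct refinements of steps the paper simply asserts.
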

\begin{proof}
    For every $s\in S_q^\ast$, Lemma~\ref{lemma:per_epoch_recovery} gives $p_s>0$. Hence
    \[
        \rho_s(T)
        =
        1-(1-p_s)^T
        \rightarrow1
        \qquad
        \text{as}
        \qquad
        T\rightarrow\infty.
    \]
    The function $h$ is continuous on $[0,1]$ and satisfies $h(1)=0$. Hence, $h(\rho_s(T))\rightarrow0$ for every $s\in S_q^\ast$. Summing over the finite set $S_q^\ast$ gives $H_{q,T}\rightarrow0$.
\end{proof}
Thus, the marginal recovery uncertainty of every structure in $S_q^\ast$ vanishes with increasing $T$. Theorem~\ref{theorem:cumulative_recovery} also gives joint recovery of $S_q^\ast$ with probability tending to one. Full recovery occurs when $S_q^\ast=S^\ast$, equivalently when $\displaystyle q\geq\max_{s\in S^\ast}c_s$.

\subsection{Empirical analysis of structural recovery}
\label{subsec:exp_cora}

\subsubsection{Structural recovery across datasets}
\label{subsec:structural_observability_datasets}
\label{subsec:additional_structural_recovery}

\paragraph{Experimental setup.}
A full-graph structure can fit within some group of $q$ clusters if its cluster span satisfies $c_s\leq q$, giving the observable fraction $|S_q^\ast|/|S^\ast|$. Figures~\ref{fig:structural_observability} and~\ref{fig:dotted_q99_cutoff} instead measure how many structures' supporting nodes actually appear together within 200 epochs, with the latter extending the comparison to all four datasets. We fix one METIS partition per dataset, using $K=32$ for Cora Full and Amazon Ratings, $K=500$ for Questions, and $K=2\,000$ for Coauthor Physics. At each epoch, we independently reshuffle all clusters and group them into mini-batches of $q$ clusters. For each structure, we record whether all its supporting nodes occur together in at least one mini-batch by epoch 200. We divide the number of such structures by the total number of full-graph reference structures in the corresponding family, and report the mean and sample standard deviation over ten independent reshuffling runs for each $q$. No model is trained in this diagnostic.

The reference families comprise one centre-indexed neighbourhood hyperedge per node, cellular 2-cells from a fixed full-graph cycle basis with cycle length at most nine, and simplicial 2-cells corresponding to graph triangles. Membership-identical neighbourhood hyperedges retain their centre identities. The cellular curve measures joint sampling of a reference cycle's nodes, not whether recomputing a cycle basis on the induced mini-batch selects the identical cycle. A graph can admit multiple cycle bases, and changing the node or neighbour traversal order can change the basis selected by the lifting, even without changing the graph. Each partition and reference family remains fixed as $q$ varies, so the results characterize the evaluated graph-lifting-partition configurations.

At $q=1$, reshuffling cannot change which nodes occur together, so support recovery is determined entirely by structures contained within individual clusters. At $q=K$, all nodes occur in one batch and support recovery is necessarily complete. Both endpoint values are reached after the first epoch.

\begin{figure*}[!t]
    \centering
    \includegraphics[width=\textwidth]{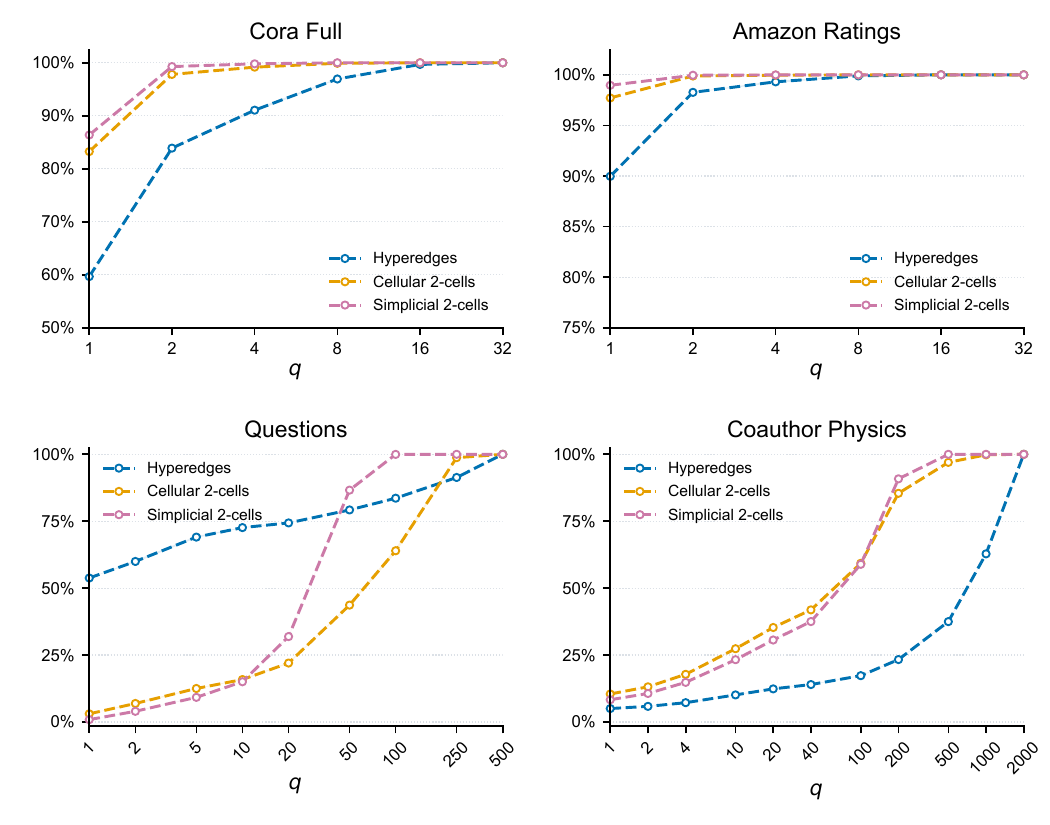}
    \caption{Fraction of full-graph reference structures whose supporting nodes appear together in at least one mini-batch within 200 epochs, as a function of the number of clusters per mini-batch $q$. Panels show Cora Full and Amazon Ratings ($K=32$, top row), Questions ($K=500$, bottom left), and Coauthor Physics ($K=2\,000$, bottom right). Each dataset uses one fixed METIS partition. Curves show means over ten independent reshuffling runs. Shading indicates $\pm1$ standard deviation, generally narrower than the plotted lines. Cellular recovery refers to the supporting nodes of fixed full-graph cycle-basis cells, not their selection by a recomputed local basis. Vertical scales begin at $50\%$ for Cora Full, $75\%$ for Amazon Ratings, and $0\%$ for the other datasets.}
    \label{fig:dotted_q99_cutoff}
\end{figure*}

\paragraph{Cora Full and Amazon Ratings.}
Most cellular and simplicial 2-cells already lie within individual clusters under these partitions. At $q=1$, the recovered fractions are $83.3\%$ and $86.4\%$ on Cora Full, and $97.7\%$ and $99.0\%$ on Amazon Ratings, respectively. Neighbourhood hyperedges have lower initial recovery, $59.7\%$ and $90.0\%$. Increasing $q$ closes much of the remaining gap. At $q=4$, recovery exceeds $99\%$ for both 2-cell families on Cora Full and for all three families on Amazon Ratings. Cora Full neighbourhood hyperedges reach $91.1\%$ at the same value. Thus, these partitions preserve much of the reference structure within single clusters, while recombination exposes additional structures crossing cluster boundaries.

\paragraph{Questions and Coauthor Physics.}
These configurations exhibit a larger gap between observability in principle and recovery within a finite number of epochs. On Questions at $q=20$, all cellular and simplicial 2-cells are observable, yet only $22.0\%$ and $31.9\%$ of their supports are encountered by epoch 200, respectively. On Coauthor Physics at $q=100$, only $59.3\%$ of cellular and $58.9\%$ of simplicial supports are encountered by epoch 200, although each individual reference 2-cell could fit within a mini-batch of this size. Having enough clusters in a mini-batch to accommodate a structure does not ensure that its particular supporting clusters are sampled together within the available epochs. Larger groups increase these opportunities. On Questions at $q=100$, simplicial recovery exceeds $99.9\%$, while cellular recovery is $64.0\%$. On Coauthor Physics at $q=500$, the corresponding fractions exceed $99.9\%$ and reach $97.0\%$.

\paragraph{Differences between higher-order domains.}
The lifting definitions help explain why the three families respond differently to recombination. Each simplicial 2-cell contains three nodes, and each retained cellular 2-cell contains at most nine. A neighbourhood hyperedge contains its centre and all its neighbours, with no corresponding fixed size limit. Its complete support can therefore span many more clusters, making joint sampling less likely. On Coauthor Physics at $q=1\,000$, both 2-cell families exceed $99.8\%$ recovery, whereas neighbourhood hyperedges reach $62.8\%$. However, neighbourhood hyperedges are not uniformly harder to recover. On Questions at $q=1$, $53.8\%$ of neighbourhood hyperedges lie within individual clusters, compared with $3.0\%$ of cellular and $0.9\%$ of simplicial 2-cells. Both support size and its distribution across the partition matter. Low recovery of complete neighbourhoods also does not imply an absence of local hypergraph structure, because lifting a mini-batch constructs hyperedges from the neighbors present. Likewise, a batch-local cycle basis can contain cells different from the full-graph references.

\paragraph{Implications for repeated sampling.}
Addressing the structural component of \ref{rq:structure}, the finite-epoch results distinguish structures that can become available from those encountered within a given sampling horizon. This distinction motivates considering both mini-batch size and repeated sampling when assessing structural exposure. Ensemble inference similarly evaluates nodes under several independently sampled cluster groupings. Table~\ref{tab:partitioning_ablations} provides complementary predictive evidence, with ensemble inference achieving a higher mean test metric than single-pass inference in all 21 reported model-dataset comparisons. These results support evaluating multiple structural contexts, but do not establish that recovery of full-graph reference structures explains the predictive improvement.

\subsubsection{Recovery across epochs on Cora Full}
\label{subsec:recovery_epochs_cora_full}
For liftings compatible with induced subgraphs, the span counts $n_c$ and probabilities $p_c(q)$ from Eq.~\eqref{eq:span_counts_and_probabilities} give the expected one-exposure coverage

\begin{equation}
    \mathbb{E}[R_{q,T}]
    =
    \frac{1}{\left|S^\ast\right|}
    \sum_{c=1}^{q}
    n_{c}\left(1-\left(1-p_c(q)\right)^T\right)~.
    \label{eq:coverage_span_hist}
\end{equation}

\begin{figure*}[!h]
    \centering
    \setlength{\abovecaptionskip}{4pt}
    \includegraphics[width=\textwidth]{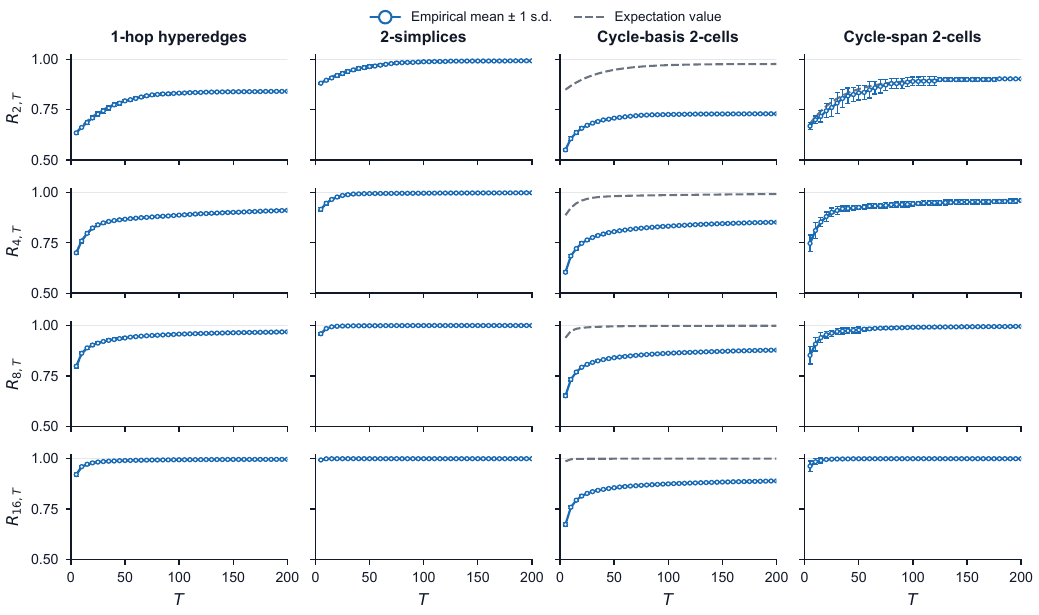}
    \caption{Comparison of empirical and analytical structural recovery on Cora Full with $K=32$ and $q\in\{2,4,8,16\}$. Empirical coverage is shown as the mean $\pm1$ sample standard deviation across ten independent runs and compared with the analytical coverage from Eq.~\eqref{eq:coverage_span_hist} over 200 epochs.}    
    \label{fig:recovery-dynamics}
\end{figure*}

We also evaluate the aggregate marginal entropy of cumulative support availability from the same span counts:
\begin{equation}
    H_{q,T}
    =
    \frac{1}{\left|S^{\ast}\right|}\sum_{c=1}^{q}
    n_{c}\,h\left(1-\left(1-p_c(q)\right)^T\right)~,
    \label{eq:entropy_span_hist}
\end{equation}
where $h$ is the Bernoulli entropy defined in Sec.~\ref{subsec:entropy_cov_conv}. Eq.~\eqref{eq:repeated_exposure_span_hist} provides the corresponding expression for any exposure threshold $m\in\{1,\ldots,T\}$.

We evaluate Cora Full using one fixed METIS partition with $K=32$ and mini-batches of $q\in\{2,4,8,16\}$ clusters. For each $q$, we simulate ten independent cluster-reshuffling sequences, each lasting 200 epochs and using a different seed. The reference families comprise center-indexed $1$-hop hyperedges, graph triangles, full-graph cycle-basis cells of length at most nine, and cycle-span cells.
\begin{figure*}[!h]
    \centering
    \setlength{\abovecaptionskip}{4pt}
    \includegraphics[width=\textwidth]{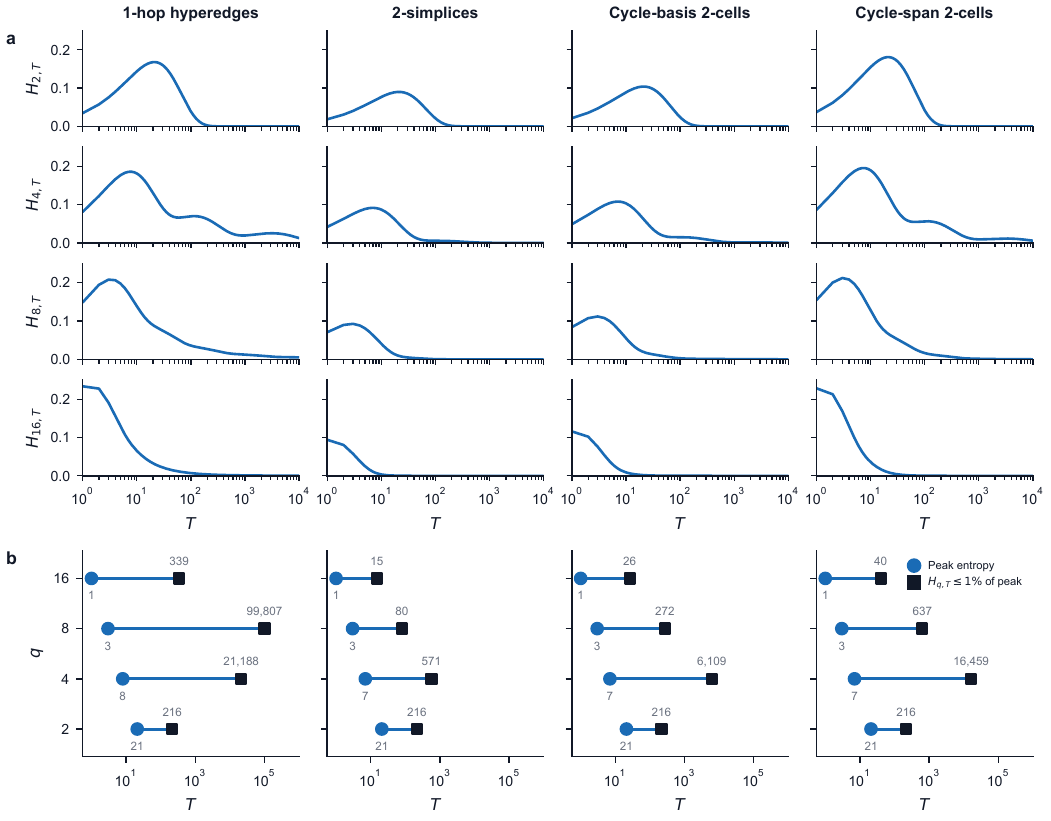}
    \caption{Analytical entropy of cumulative support availability on Cora Full ($K=32$). \textbf{a} Eq.~\eqref{eq:entropy_span_hist} for $T\leq10^4$. \textbf{b} Blue circles mark peak epochs, while black squares mark the final crossing below $1\%$ of peak entropy.}
    \label{fig:entropy}
\end{figure*}
The cycle-basis lifting need not satisfy the compatibility condition in Definition~\ref{def:lifting_comp}: recomputing a basis can omit a reference cycle even when its complete support is present. We therefore include the cycle-span family as a compatible reference for testing the support-based recovery prediction. This family comprises all distinct undirected simple cycles of lengths three to eight. Each cycle appears in an induced mini-batch whenever all its supporting clusters occur together, independently of any basis selection. We count recovery exactly by grouping cycles with the same supporting cluster set. For cycle-basis cells, we instead record whether the recomputed mini-batch basis selects each full-graph reference cycle. The expanded family is used only in this structural diagnostic. CWN and TopoTune retain the cycle-basis lifting described in Appendix~\ref{app:experimental_liftings}.

Figure~\ref{fig:recovery-dynamics} compares cumulative empirical recovery with Eq.~\eqref{eq:coverage_span_hist} for these reference families. Empirical means agree closely with the prediction for $1$-hop hyperedges, triangles, and expanded cycles. Cycle-basis recovery falls below the support-based prediction because restricting the graph can change which cycles the basis selects.

Figure~\ref{fig:entropy} evaluates Eq.~\eqref{eq:entropy_span_hist} analytically from each reference family's fixed cluster-span histogram, normalized by the total number of reference structures. It measures the average marginal uncertainty about whether each reference structure's complete support has appeared together by epoch $T$. For cycle-basis cells, support availability does not guarantee selection by a recomputed local basis. Their entropy curves therefore describe support availability rather than the actual basis recovery measured in Figure~\ref{fig:recovery-dynamics}.

We define the decay epoch as the earliest epoch after the entropy peak from which entropy remains at or below $1\%$ of its maximum. Both the peak and decay epochs are analytical quantities, including beyond the 200-epoch sampling horizon. They do not measure training convergence.

\section{Additional results}
\label{sec:additional_results}

\paragraph{GPU memory across partition configurations.}
Figure~\ref{fig:qk-gpu-memory} examines how partition granularity and mini-batch size affect peak GPU memory on Cora Full, Amazon Ratings, and Questions. We evaluate $K\in\{32,64,128,256\}$ with $q/K\in\{0.125,0.25,0.5,0.75\}$. For each model-dataset pair, we fix the model hyperparameters to the configuration selected for \our training and use the same settings for the full-graph baseline. We report peak CUDA allocated memory across training and validation in decimal GB. The measurements characterize memory sensitivity to $K$ and $q$ at fixed model hyperparameters.

On Cora Full, peak GPU memory increases with the sampled cluster fraction across all seven models. At $q/K=0.125$, Cluster-TNN reduces peak memory by 85.4-88.7\% relative to the corresponding full-graph baseline across the four partition counts. At $q/K=0.75$, these reductions narrow to 24.7-29.7\%. For SCCNN with $K=32$, increasing $q$ from $4$ to $24$ raises peak memory from 4.08 to 22.05~GB, compared with 30.58~GB for full-graph execution. Amazon Ratings and Questions exhibit the same qualitative trend. These results illustrate how Cluster-TNN can be configured through $K$ and $q$ to balance GPU memory use and the amount of graph context processed per mini-batch.

\begin{figure}[!h]
    \centering
    \setlength{\abovecaptionskip}{4pt}
    \includegraphics[width=\textwidth]{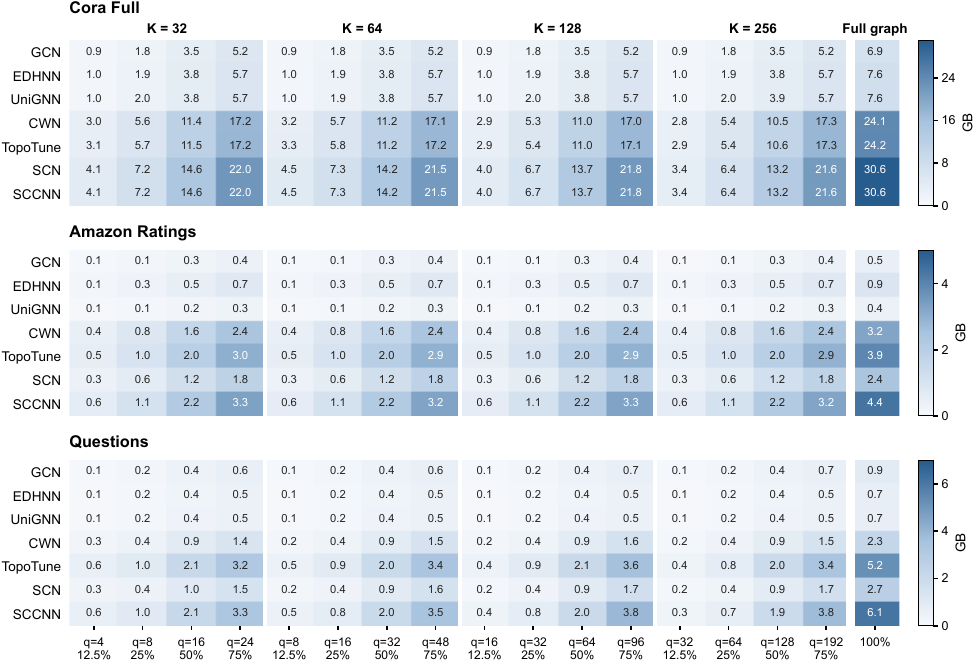}
    \caption{Peak GPU memory under Cluster-TNN and full-graph execution on
    three datasets.
    Rows correspond to models, while columns vary the number of partitions $K$
    and clusters per mini-batch $q$, with percentages indicating $q/K$.
    The rightmost column shows the full-graph baseline. Each dataset has a separate color scale.}
    \label{fig:qk-gpu-memory}
\end{figure}
\FloatBarrier

\section{Experimental protocol and hyperparameter optimization}
\label{sec:hyper_opt}

\paragraph{Dataset splits and random seeds.}
Our predictive-performance experiments use transductive node classification with generated, non-stratified random node splits rather than predefined public splits. The training/validation/test proportions are $70\%/15\%/15\%$ for Cora Full, Coauthor Physics, and OGBN Products, and $50\%/25\%/25\%$ for Amazon Ratings, Questions, and Reddit. For each dataset, we generate ten random splits with a fixed seed of $42$ and use split seeds $0,1,2,3,4$ for final evaluation. The same splits are used across models and between full-graph and Cluster-TNN execution wherever both are evaluated. For run $s$, the corresponding seed also controls model initialization, stochastic training, and cluster recombinations during partitioned training and inference. Thus, variation across the five runs reflects both data-split and model-training randomness.

\paragraph{Search objective and procedure.}
We optimize each model-dataset pair separately for full-graph and \our partitioned training using Optuna's TPESampler~\citep{optuna,watanabe2023tpe}. Each trial maximizes the best validation metric attained during training. Test data are not evaluated during hyperparameter optimization. Dataset-specific metrics are listed in Table~\ref{tab:datasets}. Graph and lifted-domain structural counts are reported separately in Table~\ref{tab:tdl_structural_counts}.

Each study comprises $100$ trials for Questions, Amazon Ratings, Cora Full, and Coauthor Physics, $50$ trials for Reddit, and $30$ trials for OGBN Products. The first ten trials use random sampling, after which the sampler uses previous trial outcomes to guide subsequent choices. Search trials run for at most $300$ epochs, with validation every five epochs and early stopping after five validation checks without improvement.

\begin{table}[!htbp]
  \centering
  \caption{Dataset tasks, input feature dimensions, and class counts. Structural counts are reported separately in Table~\ref{tab:tdl_structural_counts}.}
  \label{tab:datasets}
  \vspace{2pt}
  \small
  \setlength{\tabcolsep}{14pt}
  \begin{tabular}{lccc}
    \toprule
    \textbf{Dataset} & \textbf{Metric} & \makecell{\textbf{Input feature}\\\textbf{dimension}} & \textbf{Classes} \\
    \midrule
    Cora Full        & Accuracy & $8\,710$  & $70$ \\
    Amazon Ratings   & Accuracy & $300$    & $5$  \\
    Questions        & AUROC    & $301$    & $2$  \\
    Coauthor Physics & Accuracy & $8\,415$  & $5$  \\
    Reddit           & Accuracy & $602$    & $41$ \\
    OGBN Products    & Accuracy & $100$    & $47$ \\
    \bottomrule
  \end{tabular}
\end{table}

\paragraph{Model hyperparameters.}
The model search varies optimization step size, regularization, and representation width. Learning rates are selected from $\{10^{-5},10^{-4},10^{-3},10^{-2}\}$, covering several orders of magnitude. Weight decay is selected from $\{0,10^{-5},10^{-4},10^{-3}\}$, allowing comparison between no weight decay and different regularization strengths. We search encoder widths of $\{32,64,128\}$ channels to vary representation capacity and the memory occupied by hidden features during model execution.

Projection dropout is selected from $\{0,0.1,0.2,0.3\}$. For GCN, EDHNN, and TopoTune, we additionally search backbone dropout in $\{0,0.2,0.5\}$. Both dropout spaces include a setting without dropout. These model search spaces are shared across datasets and between full-graph and \our training wherever the corresponding parameter applies.

\paragraph{Partition hyperparameters.}
For partitioned training, we jointly optimize the number of graph partitions $K$ and the number of clusters combined into each training mini-batch $q$. Increasing $K$ produces smaller clusters on average. For a fixed partition, increasing $q$ expands the sampled graph context and allows higher-order structures spanning more clusters to be recovered, but can also increase lifting and model-execution costs. Searching both parameters allows validation-based selection among different partition granularities and amounts of graph context.

Table~\ref{tab:partition_search_spaces} lists the dataset-specific candidate sets. Within each dataset, all models use the same $K,q$ search space.

\begin{table}[!htbp]
  \centering
  \caption{Dataset-specific partition search spaces. $K$ denotes the total number of graph partitions, and $q$ denotes the number of clusters combined into one training mini-batch. These parameters are optimized jointly with the model hyperparameters.}
  \label{tab:partition_search_spaces}
  \small
  \setlength{\tabcolsep}{8pt}
  \begin{tabular}{lcc}
    \toprule
    \textbf{Dataset} & \textbf{Candidate $K$ values} & \textbf{Candidate $q$ values} \\
    \midrule
    Questions & $\{500, 1\,000, 1\,500, 2\,000\}$ & $\{20, 30, 40, 50\}$ \\
    Amazon Ratings & $\{32, 64, 128, 256\}$ & $\{4, 8, 16, 32\}$ \\
    Cora Full & $\{32,64,128,256\}$ & $\{4, 8, 16, 32\}$ \\
    Coauthor Physics & $\{500, 1\,000,1\, 500, 2\,000\}$ & $\{20, 30, 40, 50\}$ \\
    Reddit & $\{7\,000, 10\,000\}$ & $\{20, 25\}$ \\
    OGBN Products & $\{15\,000, 20\,000\}$ & $\{20, 25\}$ \\
    \bottomrule
  \end{tabular}
\end{table}

With the same number of partitions $K$, a graph containing more nodes has more nodes per cluster on average. Combining $q$ clusters can therefore produce very different mini-batch sizes across datasets. Connectivity further determines how many higher-order cells the lifting constructs from those nodes. These differences affect the time and memory required to construct and process each mini-batch.

We selected the dataset-specific candidate ranges through preliminary runs assessing the memory required for mini-batch lifting and model execution.

\paragraph{Selected configurations and final evaluation.}
Table~\ref{tab:optuna_hparams} reports the selected hyperparameters. Final evaluations use five seeds. For OGBN Products, final training runs for at most 130 epochs, with validation every five epochs and early stopping after two validation checks without improvement. For final training and evaluation, validation and test mini-batches use the selected training $q$, as reported in Table~\ref{tab:optuna_hparams}. Test predictions for P are obtained using ensemble inference with ten independent cluster recombinations.

{\footnotesize
\setlength{\tabcolsep}{1.9pt}
\renewcommand{\arraystretch}{1.05}
\setlength{\LTleft}{\fill}
\setlength{\LTright}{\fill}
\setlength{\LTcapwidth}{\textwidth}
\makeatletter
\patchcmd{\LT@makecaption}{\reset@font}{\reset@font\normalsize}{}{}
\makeatother
\newcommand{\NAcell}{\cellcolor{gray!7}}
\setlength{\LTcapwidth}{\textwidth}
\begin{longtable}{lcccccccc}
\caption{Selected hyperparameters for full-graph and \our training. Shading denotes parameters that do not apply to the corresponding model or protocol.}
\label{tab:optuna_hparams}\\
\toprule
\textbf{Dataset} & \textbf{Protocol} & \makecell{\textbf{Learning} \\ \textbf{Rate}} & \makecell{\textbf{Weight}\\\textbf{Decay}} & \makecell{\textbf{Hidden}\\\textbf{Channels}} & \makecell{\textbf{Projection}\\\textbf{Dropout}} & $\mathbf{K}$ & $\mathbf{q}$ & \textbf{Dropout} \\
\midrule
\endfirsthead
\multicolumn{9}{c}{\tablename~\thetable{} continued} \\
\toprule
\textbf{Dataset} & \textbf{Protocol} & \makecell{\textbf{Learning} \\ \textbf{Rate}} & \makecell{\textbf{Weight}\\\textbf{Decay}} & \makecell{\textbf{Hidden}\\\textbf{Channels}} & \makecell{\textbf{Projection}\\\textbf{Dropout}} & $\mathbf{K}$ & $\mathbf{q}$ & \textbf{Dropout} \\
\midrule
\endhead
\midrule
\multicolumn{9}{r}{Continued on next page} \\
\endfoot
\bottomrule
\endlastfoot

\rowcolor{gray!14}\multicolumn{9}{c}{\textbf{GCN}} \\*
\addlinespace[1pt]
\multirow{1}{*}{Questions} & Full-graph & $10^{-2}$ & $10^{-4}$ & $128$ & $0.3$ & \NAcell & \NAcell & $0.5$ \\*
& \our & $10^{-4}$ & $0$ & $128$ & $0.2$ & $500$ & $20$ & $0.2$ \\*
\multirow{1}{*}{Amazon Ratings} & Full-graph & $10^{-2}$ & $10^{-3}$ & $128$ & $0$ & \NAcell & \NAcell & $0$ \\*
& \our & $10^{-2}$ & $10^{-4}$ & $128$ & $0.3$ & $64$ & $4$ & $0$ \\*
\multirow{1}{*}{Cora Full} & Full-graph & $10^{-3}$ & $10^{-5}$ & $128$ & $0.3$ & \NAcell & \NAcell & $0$ \\*
& \our & $10^{-3}$ & $10^{-5}$ & $128$ & $0.3$ & $32$ & $8$ & $0$ \\*
Coauthor Physics & \our & $10^{-2}$ & $10^{-4}$ & $64$ & $0.3$ & $500$ & $20$ & $0.2$ \\*
Reddit & \our & $10^{-4}$ & $10^{-4}$ & $128$ & $0.2$ & $7\,000$ & $25$ & $0.2$ \\*
OGBN Products & \our & $10^{-4}$ & $10^{-4}$ & $128$ & $0.1$ & $15\,000$ & $25$ & $0.2$ \\
\addlinespace[2pt]

\rowcolor{gray!14}\multicolumn{9}{c}{\textbf{EDHNN}} \\*
\addlinespace[1pt]
\multirow{1}{*}{Questions} & Full-graph & $10^{-3}$ & $10^{-3}$ & $128$ & $0$ & \NAcell & \NAcell & $0.5$ \\*
& \our & $10^{-2}$ & $10^{-3}$ & $32$ & $0.2$ & $500$ & $50$ & $0$ \\*
\multirow{1}{*}{Amazon Ratings} & Full-graph & $10^{-3}$ & $10^{-3}$ & $128$ & $0.1$ & \NAcell & \NAcell & $0$ \\*
& \our & $10^{-3}$ & $10^{-4}$ & $128$ & $0.3$ & $128$ & $4$ & $0$ \\*
\multirow{1}{*}{Cora Full} & Full-graph & $10^{-3}$ & $10^{-3}$ & $128$ & $0.3$ & \NAcell & \NAcell & $0$ \\*
& \our & $10^{-2}$ & $10^{-4}$ & $64$ & $0.2$ & $32$ & $4$ & $0.2$ \\*
Coauthor Physics & \our & $10^{-2}$ & $0$ & $128$ & $0.2$ & $2\,000$ & $30$ & $0.5$ \\*
Reddit & \our & $10^{-2}$ & $10^{-4}$ & $128$ & $0.3$ & $7\,000$ & $20$ & $0.2$ \\*
OGBN Products & \our & $10^{-3}$ & $10^{-4}$ & $128$ & $0.1$ & $20\,000$ & $25$ & $0.2$ \\
\addlinespace[2pt]

\rowcolor{gray!14}\multicolumn{9}{c}{\textbf{UniGNN}} \\*
\addlinespace[1pt]
\multirow{1}{*}{Questions} & Full-graph & $10^{-2}$ & $0$ & $32$ & $0$ & \NAcell & \NAcell & \cellcolor{gray!7} \\*
& \our & $10^{-3}$ & $10^{-3}$ & $128$ & $0$ & $1\,000$ & $40$ & \cellcolor{gray!7} \\*
\multirow{1}{*}{Amazon Ratings} & Full-graph & $10^{-2}$ & $10^{-3}$ & $128$ & $0$ & \NAcell & \NAcell & \cellcolor{gray!7} \\*
& \our & $10^{-2}$ & $10^{-4}$ & $64$ & $0$ & $32$ & $8$ & \cellcolor{gray!7} \\*
\multirow{1}{*}{Cora Full} & Full-graph & $10^{-3}$ & $10^{-4}$ & $128$ & $0.3$ & \NAcell & \NAcell & \cellcolor{gray!7} \\*
& \our & $10^{-4}$ & $10^{-4}$ & $128$ & $0.1$ & $32$ & $4$ & \cellcolor{gray!7} \\*
Coauthor Physics & \our & $10^{-2}$ & $10^{-5}$ & $32$ & $0.2$ & $2\,000$ & $20$ & \cellcolor{gray!7} \\*
Reddit & \our & $10^{-3}$ & $10^{-3}$ & $64$ & $0.3$ & $7\,000$ & $25$ & \NAcell \\*
OGBN Products & \our & $10^{-4}$ & $10^{-4}$ & $128$ & $0$ & $15\,000$ & $20$ & \NAcell \\
\addlinespace[1pt]

\rowcolor{gray!14}\multicolumn{9}{c}{\textbf{CWN}} \\*
\addlinespace[1pt]
\multirow{1}{*}{Questions} & Full-graph & $10^{-2}$ & $10^{-3}$ & $128$ & $0.3$ & \NAcell & \NAcell & \cellcolor{gray!7} \\*
& \our & $10^{-2}$ & $10^{-3}$ & $32$ & $0.3$ & $1\,000$ & $40$ & \cellcolor{gray!7} \\*
\multirow{1}{*}{Amazon Ratings} & Full-graph & $10^{-2}$ & $10^{-3}$ & $128$ & $0.3$ & \NAcell & \NAcell & \cellcolor{gray!7} \\*
& \our & $10^{-2}$ & $10^{-3}$ & $128$ & $0.2$ & $64$ & $16$ & \cellcolor{gray!7} \\*
\multirow{1}{*}{Cora Full} & Full-graph & $10^{-2}$ & $10^{-3}$ & $64$ & $0.3$ & \NAcell & \NAcell & \cellcolor{gray!7} \\*
& \our & $10^{-2}$ & $10^{-3}$ & $32$ & $0.3$ & $32$ & $4$ & \cellcolor{gray!7} \\*
Coauthor Physics & \our & $10^{-2}$ & $0$ & $32$ & $0.2$ & $2\,000$ & $20$ & \cellcolor{gray!7} \\*
Reddit & \our & $10^{-2}$ & $10^{-4}$ & $64$ & $0.3$ & $7\,000$ & $20$ & \NAcell \\*
OGBN Products & \our & $10^{-4}$ & $10^{-4}$ & $128$ & $0$ & $15\,000$ & $20$ & \NAcell \\
\addlinespace[1pt]

\rowcolor{gray!14}\multicolumn{9}{c}{\textbf{TopoTune}} \\*
\addlinespace[1pt]
\multirow{1}{*}{Questions} & Full-graph & $10^{-2}$ & $10^{-5}$ & $128$ & $0.1$ & \NAcell & \NAcell & $0$ \\*
& \our & $10^{-2}$ & $10^{-3}$ & $128$ & $0.3$ & $500$ & $50$ & $0.2$ \\*
\multirow{1}{*}{Amazon Ratings} & Full-graph & $10^{-2}$ & $10^{-5}$ & $64$ & $0.1$ & \NAcell & \NAcell & $0.2$ \\*
& \our & $10^{-2}$ & $10^{-5}$ & $128$ & $0$ & $32$ & $8$ & $0.5$ \\*
\multirow{1}{*}{Cora Full} & Full-graph & $10^{-2}$ & $10^{-3}$ & $128$ & $0.2$ & \NAcell & \NAcell & $0.2$ \\*
& \our & $10^{-2}$ & $10^{-3}$ & $128$ & $0.2$ & $64$ & $8$ & $0.5$ \\*
Coauthor Physics & \our & $10^{-3}$ & $10^{-3}$ & $32$ & $0.3$ & $2\,000$ & $20$ & $0.5$ \\*
Reddit & \our & $10^{-3}$ & $10^{-4}$ & $64$ & $0.2$ & $7\,000$ & $20$ & $0.2$ \\*
OGBN Products & \our & $10^{-3}$ & $10^{-5}$ & $64$ & $0.2$ & $15\,000$ & $25$ & $0.5$ \\
\addlinespace[2pt]

\rowcolor{gray!14}\multicolumn{9}{c}{\textbf{SCN}} \\*
\addlinespace[1pt]
\multirow{1}{*}{Questions} & Full-graph & $10^{-2}$ & $10^{-4}$ & $128$ & $0.3$ & \NAcell & \NAcell & \cellcolor{gray!7} \\*
& \our & $10^{-3}$ & $10^{-3}$ & $64$ & $0.1$ & $500$ & $50$ & \cellcolor{gray!7} \\*
\multirow{1}{*}{Amazon Ratings} & Full-graph & $10^{-2}$ & $0$ & $64$ & $0.3$ & \NAcell & \NAcell & \cellcolor{gray!7} \\*
& \our & $10^{-3}$ & $0$ & $128$ & $0.2$ & $32$ & $8$ & \cellcolor{gray!7} \\*
\multirow{1}{*}{Cora Full} & Full-graph & $10^{-3}$ & $10^{-3}$ & $128$ & $0.3$ & \NAcell & \NAcell & \cellcolor{gray!7} \\*
& \our & $10^{-3}$ & $10^{-4}$ & $128$ & $0.3$ & $32$ & $16$ & \cellcolor{gray!7} \\*
Coauthor Physics & \our & $10^{-2}$ & $0$ & $32$ & $0.3$ & $2\,000$ & $20$ & \cellcolor{gray!7} \\*
Reddit & \our & $10^{-3}$ & $10^{-3}$ & $64$ & $0.3$ & $7\,000$ & $25$ & \NAcell \\*
OGBN Products & \our & $10^{-4}$ & $10^{-4}$ & $128$ & $0$ & $15\,000$ & $20$ & \NAcell \\
\addlinespace[1pt]

\rowcolor{gray!14}\multicolumn{9}{c}{\textbf{SCCNN}} \\*
\addlinespace[1pt]
\multirow{1}{*}{Questions} & Full-graph & $10^{-3}$ & $10^{-5}$ & $128$ & $0.1$ & \NAcell & \NAcell & \cellcolor{gray!7} \\*
& \our & $10^{-3}$ & $10^{-5}$ & $128$ & $0.3$ & $500$ & $30$ & \cellcolor{gray!7} \\*
\multirow{1}{*}{Amazon Ratings} & Full-graph & $10^{-3}$ & $0$ & $128$ & $0.3$ & \NAcell & \NAcell & \cellcolor{gray!7} \\*
& \our & $10^{-3}$ & $10^{-3}$ & $128$ & $0.2$ & $32$ & $4$ & \cellcolor{gray!7} \\*
\multirow{1}{*}{Cora Full} & Full-graph & $10^{-3}$ & $10^{-3}$ & $128$ & $0.3$ & \NAcell & \NAcell & \cellcolor{gray!7} \\*
& \our & $10^{-3}$ & $10^{-3}$ & $128$ & $0.1$ & $32$ & $8$ & \cellcolor{gray!7} \\*
Coauthor Physics & \our & $10^{-2}$ & $0$ & $128$ & $0.3$ & $2\,000$ & $20$ & \cellcolor{gray!7} \\*
Reddit & \our & $10^{-3}$ & $10^{-3}$ & $64$ & $0.3$ & $7\,000$ & $20$ & \NAcell \\*
OGBN Products & \our & $10^{-4}$ & $10^{-4}$ & $128$ & $0$ & $15\,000$ & $20$ & \NAcell \\
\addlinespace[1pt]

\end{longtable}
}
\end{document}